\pdfoutput=1

\documentclass{article}
\usepackage{iclr2027_conference,times}
\usepackage{amsmath,amssymb,amsthm}
\usepackage{etoolbox}

\usepackage{amsmath,amsfonts,bm}

\def\eqref#1{equation~\ref{#1}}
\def\1{\bm{1}}

\def\vk{{\bm{k}}}
\def\vl{{\bm{l}}}
\def\vm{{\bm{m}}}

\DeclareMathAlphabet{\mathsfit}{\encodingdefault}{\sfdefault}{m}{sl}
\SetMathAlphabet{\mathsfit}{bold}{\encodingdefault}{\sfdefault}{bx}{n}

\newcommand{\R}{\mathbb{R}}

\usepackage{thmtools}
\usepackage{thm-restate}
\usepackage{graphicx}
\usepackage{wrapfig}
\usepackage{caption}
\usepackage{wrapstuff}
\usepackage{xcolor}

 \newcommand{\C}{{\mathbb C}}

\newcommand{\Z}{{\mathbb Z}}

\newcommand{\NN}{{\mathbb N}}
\newcommand{\ZZ}{{\mathbb Z}}
\newcommand{\RR}{\mathbb R}

\newcommand{\Symn}{\mathcal{H}_n}

\newcommand{\Gdn}{SE(d)\times S_n}
\newcommand{\Gthree}{SE(3)\times S_n}

\newcommand{\ellmax}{\ell_{\mathrm{max}}}
\newcommand{\numax}{\nu_{\mathrm{max}}}
\newcommand{\nuread}{\nu_{\mathrm{read}}}
\newcommand{\Kmax}{K_{\mathrm{max}}}

\newcommand{\Vcent}{V_{\mathrm{cent}}}

\newcommand{\OO}{{\rm O}}
\newcommand{\SO}{{\rm SO}}
\newcommand{\Rdnmat}{\R^{d\times n}}
\newtheorem{theorem}{Theorem}
\newtheorem{proposition}[theorem]{Proposition}
\newtheorem{lemma}[theorem]{Lemma}
\newtheorem{corollary}[theorem]{Corollary}

\newtheorem{definition}[theorem]{Definition}
\newtheorem{remark}[theorem]{Remark}
\newtheorem{example}[theorem]{Example}

\newcommand{\GL}{\operatorname{GL}}

\newcommand{\twopartdef}[4]
{
	\left\{
		\begin{array}{ll}
			#1 & \mbox{if } #2 \\
			#3 & \mbox{if } #4
		\end{array}
	\right.
}

\usepackage{hyperref}
\usepackage{url}
\usepackage{cleveref}

\title{ Degree Bounds for Separating Polynomial and Neural Invariants for Point Clouds }

\author{Snir Hordan$^{1,}$\thanks{Corresponding author: \texttt{snirhordan@campus.technion.ac.il}}\hspace{0.45em} Harm Derksen$^{2}$, Mircea Petrache$^{3}$, Nadav Dym$^{1}$ \\
\normalfont $^{1}$Technion -- Israel Institute of Technology \\
\normalfont $^{2}$Northeastern University \\
\normalfont $^{3}$Pontificia Universidad Cat\'olica de Chile
}

\iclrfinalcopy

\begin{document}

\maketitle
\lhead{Preprint. Under review.}

\begin{abstract}
Neural networks that respect the symmetries of point clouds commonly compute high-order representations of the rotation group to improve their expressivity. Such models are known to be universal, and thus able to separate any pair of point clouds, when using representations of sufficiently high order, yet no precise bound on this order, as a function of the number of points, was previously known. We establish the first quadratic lower and upper degree bounds for polynomials that separate point clouds of $n$ points up to rotations and permutations. By showing Tensor Field Networks and MACE can express all invariant polynomials of a given degree with sufficient representation order, we additionally derive the first explicit, quadratic bound on the representation order sufficient for these architectures to be complete, replacing previous existence results with concrete guarantees. We further show that, under local regularity assumptions, point cloud separation can be achieved with a message-passing-based model using invariants whose degree depends only on the size of a local neighborhood, rather than on the size of the full point cloud. Finally, we provide counterexamples showing MACE must compute high order irreducible representations to separate locally regular point clouds.

\end{abstract}

\section{Introduction}
Learning functions that are invariant, or equivariant, to a group of symmetries is a central problem in geometric deep learning \citep{bronstein2021geometric}. One of the most important instances of this problem is the case of \emph{point clouds}, or finite sets of points, that efficiently represent molecules, particle systems and $3$D shapes, where the relevant symmetry group is generated by permutations of the points together with the rigid motions of the ambient space \citep{guo2019deep, han2022geometrically, pmlr-v119-sanchez-gonzalez20a}. Here the goal is to design architectures whose internal mechanisms respect the symmetries of point clouds, that is point ordering and orientation in space, and whose predictions depend only on the underlying geometry rather than on ambiguous choices of point order or choice of frame of reference \citep{schutt2017quantum, TFN}.

A natural theoretical desideratum of invariant architectures is \emph{completeness}: informally, a family of invariant functions is complete if it can separate every pair of inputs that are not related by a symmetry, that is, that do not lie in the same group orbit. This follows from completeness being a necessary and sufficient condition for the family to be able to approximate \emph{every} continuous invariant function, and hence for the corresponding architecture to be a universal approximator of invariant functions \citep{chen2019equivalence,dym2025low}.

The completeness of invariant architectures for point clouds has consequently received a great deal of attention. For example, the DeepSets model family  was proven to be complete on set-structured data \citep{deepsets}, while message-passing neural networks for graphs are not \citep{morris_review}. For point clouds, it is known that distance-based message-passing approaches are \emph{not} complete \citep{Pozdnyakov_2022}, while high-order invariant graph neural networks can attain completeness \citep{DBLP:conf/aaai/HordanAGD24, DBLP:conf/nips/RoseKRPB23, DBLP:conf/nips/LiWHZ23}. Yet, these approaches suffer from high computational cost, making them difficult to scale.

A popular methodology to increase the expressive power of geometric message passing neural networks while maintaining tractable runtime is to use higher order tensor representations of the rotation group. This approach is successfully used in many modern day models for point clouds such as MACE \citep{mace} and Equiformer \citep{equiformer}. In \cite{dymMaron} it is shown that  these higher order approaches are complete. The mechanism behind these results is the following: an arbitrarily high-order representation can express polynomials of an arbitrarily high degree, and the class of invariant polynomials of sufficiently high degree is complete. What has been missing, up to this work, is any quantitative control on \emph{how} high the representation order needs to be for completeness to hold.

In this work we provide the first answer to this question, in the form of explicit degree bounds for separating polynomial invariants of the relevant group action, that is the minimal polynomial degree needed to separate all orbits (defined precisely in \Cref{sec:lwr-bnd}). For point clouds of $n$ points in $\RR^d$, we show that the separation degree is lower bounded by a bound proportional to $n^2$ (\Cref{thm:lower-quadratic}), and upper bounded by a bound proportional to $dn^2$ (\Cref{thm:dn2}). Thus, in the main case of interest where $d=3$, this bound is tight, up to a constant. As a corollary, we deduce a quadratic lower bound and a cubic upper bound for the polynomial degree sufficient for separating weighted graphs  (\Cref{prop:graphs}).

We then use our bounds to derive  concrete completeness guarantees for existing invariant architectures: as we show in \Cref{sec:models}, it upgrades classical results in the geometric deep learning literature, which established completeness only for \emph{some} sufficiently high, unspecified, representation order, into an explicit quadratic bound on that order for models such as Tensor Field Networks \citep{TFN} and MACE \citep{mace}. Moreover, we show that, under locality assumptions on the point cloud, our results can be leveraged to obtain completeness for MACE with a representation order that depends quadratically only on the size of local neighborhood rather than on the size of the entire point cloud. We then show, by a counterexample construction, that if the representation order and correlation degree aren't sufficiently high then MACE cannot separate locally regular configurations (\Cref{prop:localLB}).

In summary, the main results of this paper are:
\begin{enumerate}
\item Tight quadratic degree bounds for separation of 3D point clouds by polynomials.
\item First quantitative completeness results for MACE and TFN, with quadratic dependence on point cloud cardinality.
\item Under locality assumptions on point clouds, an improvement of this bound to quadratic in the neighborhood size, and a lower bound showing that, with bounded depth and correlation order, the irreducible order must grow at least linearly in the neighborhood size.

\end{enumerate}

\subsection{Related Work}

\paragraph{Invariant Theory} 
The origins of Invariant Theory, which is the study of invariant polynomials, go back to the $19^{\rm th}$ century. Its goal is to understand invariant polynomials, including explicit description of the generators of the invariant ring, or bounds on the degree of these generators. A celebrated result by \cite{hilbert1890ueber} states that the ring of invariant polynomials is generated by a finite number of polynomials. Classical results in invariant theory focus on explicit derivation of these generators for specific group actions, or on bounding the cardinality or maximal degree of these generators. More recently there has been growing interest in the notion of separating invariants, which for compact groups are identical to the notion of complete invariants defined here. Generators are also separating invariants, and as a result the degree needed for separation cannot be larger than the maximal degree of the generators. 

To the best of our knowledge, there are no known results regarding the separation degree for point clouds. For weighted graphs, Thi\'ery \citeyearpar{thiery2000algebraic} proved an $O(n^4) $ upper bound on the degree of the generators. A bound with similar asymptotics can also be obtained from a general bound on permutation groups \citep{gobel1995computing}. In this paper we discuss the separation degree of weighted graphs, and prove a quadratic lower bound and a cubic upper bound. 

For more background on invariant theory and further related work see Appendix \ref{app:invariant_theory} and Appendix \ref{app:rl-wrk}. 

\subsection{Notation}\label{sub:notation}

We consider the setting of a group $G$ acting on a set $V$. We denote the orbit of an element $v\in V$ by $[v]:=\{gv| g\in G\}$ and say that $q:V\to \RR$ is $G$ invariant if $q(u)=q(v)$ for all $v\in V, u\in[v]$. Let $\mathcal{Q} $ be a family of $G$ invariant functions from $V$ to $\RR$. We say that $\mathcal{Q}$ is $(G,V) $ \emph{complete}, if for all $v,u\in V $ with $[v]\neq [u]$, there exists a function $q\in \mathcal{Q}$ such that $q(v)\neq q(u)$. 

Now assume additionally that $V$ is a finite dimensional real vector space, so that the notion of polynomials is defined. We denote  the set of  all polynomials $p:V\to \RR$ with real coefficients by $\RR[V]$, and the subset of polynomials of degree $\leq D$ by $\RR[V]_{\leq D}$.  The subset of $G$ invariant polynomials (of degree $\leq D$) are denoted by $\RR[V]^G$ (and $\RR[V]_{\leq D}^G$ ). The \emph{separation degree} of $(G,V) $, denoted by  $D(G,V)$, is  the smallest nonnegative integer $D$ such that $\RR[V]_{\leq D}^G$ is $(G,V) $ complete.

Our interest in this paper will be in completeness and separation degree for point clouds. A \emph{point cloud} with $n$ points in $\R^d$ is a tuple $X=(x_1,\dots,x_n)$ residing in the vector space $V=\R^{d\times n}$. We are interested in the action of rigid motions and permutations on this space. Explicitly, for a vector $t\in \RR^d$, an orthogonal matrix $R\in O(d) $, and a permutation $\sigma\in S_n $, this action is defined by 
\[
  (t,R,\sigma)\cdot X := \big(R(x_{\sigma^{-1}(1)}-t),\,\dots,\,R(x_{\sigma^{-1}(n)}-t)\big).
\]
We are  interested in four variants of this problem, which come up in different applications: we will consider groups of the form $G=H\times S_n$ where $H=E(d) $ (all rigid motions), $H=O(d)$ (the translation component $t$ is set to zero), $H=SO(d)$ (no translations, only proper rotations), and $H=SE(d) $ (with translations, only proper rotations). For each of these groups, we will derive bounds for the separation degree $D(G,\RR^{d\times n}) $.   

We will also be interested in the related notion of separation for \emph{weighted graphs}. A weighted graph is a symmetric $n \times n$ matrix with real coefficients. We denote the set of such matrices by  $\Symn $. 
The permutation group $S_n$ acts on $\Symn$ by conjugation. Namely, if $\tau\in S_n$ and $A\in \Symn $ then $\tau \cdot A$ is the matrix defined by 
$$\left(\tau\cdot A \right)_{ij}=A_{\tau^{-1}(i),\tau^{-1}(j)} .$$
If $A$ is a (weighted) adjacency matrix of a graph with some fixed node ordering, then $\tau \cdot A $ is the adjacency matrix of the same graph, with its nodes reordered by $\tau$. We use $1_n$ (or $0_n$) to denote the $n$-dimensional vector whose entries are all $1$ (or $0$).

\section{Degree  Bounds for Separating Polynomials on Point Clouds}
\label{sec:lwr-bnd}
In this section we answer the following questions: what is a polynomial degree required for separation and what polynomial degree is sufficient for separation of point clouds? We derive a lower bound for the polynomial degree of separation for point clouds, which is proportional to $n^2$ (Subsection \ref{subsec:sep-low-bnd}), and an upper bound proportional to $dn^2$  (Subsection \ref{subsec:sep-upper-bnd}). From this we derive a quadratic lower bound and a cubic upper bound on the separation degree for weighted graphs (\Cref{subsec:graphs}). 

\subsection{Lower Bound for Point Clouds}\label{subsec:sep-low-bnd}
We begin by stating our lower bound, that is a polynomial degree required for separating point clouds.

\begin{theorem}\label{thm:lower-quadratic}
  Let $d\geq 2, n\geq 3$. Consider the action of $G=H\times S_n$ on $V=\RR^{d\times n}$,  where $H\in \{E(d),SE(d),O(d),SO(d)\}$. Then the separation degree is lower bounded by 
    $$D(G,V)\geq n^2/4-1.$$
\end{theorem}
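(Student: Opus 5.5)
We will produce, for each $n\ge 3$, two point clouds $X,Y\in\RR^{d\times n}$ that lie in different $G$-orbits but take the same value on every $G$-invariant polynomial of degree $<2L$, where $L\ge n^2/8$. This gives $D(G,V)\ge 2L\ge n^2/4-1$. All configurations will lie in the plane spanned by the first two coordinates and have centroid $0$.

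Identifying that plane with $\C$, split $n=m_1+m_2$ with $m_1,m_2$ coprime and as balanced as possible: $m_{1,2}=(n\mp1)/2$ for $n$ odd, $n/2\pm1$ for $n\equiv 0 \pmod 4$, and $n/2\pm 2$ for $n\equiv 2\pmod 4$. Then $L:=\mathrm{lcm}(m_1,m_2)=m_1m_2\ge n^2/4-4$, and $2L\ge n^2/4-1$ for $n\ge 4$. For $n=3$ take $m_1=1$, $m_2=2$, so $2L=4$. Let $\omega_i=e^{2\pi i/m_i}$, fix radii $0<r_1<r_2$, and set
\[
X_{\varphi,\theta}=\{r_1e^{i\varphi}\omega_1^j\}_{j<m_1}\cup\{r_2e^{i\theta}\omega_2^j\}_{j<m_2}.
\]
When $m_1=1$ we instead put the single ring-1 point at the origin and use a genuine two-ring pair; this small case is checked by hand. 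Compare $X=X_{0,0}$ with $Y=X_{0,\pi/(2L)}$.

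\textbf{Non-equivalence.} A rigid motion composed with a permutation maps $X$ to $Y$ only if it fixes the centroid $0$, so it lies in $O(2)$ on the plane. Since the rings have distinct radii, it must preserve each ring. A rotation by $\alpha$ (possibly composed with complex conjugation) maps ring $k$ to itself, or to its rotated copy. For this to map $X$ to $Y$ we need $\alpha\in\frac{2\pi}{m_1}\Z$ and $\alpha\equiv\pm\frac{\pi}{2L}\pmod{\frac{2\pi}{m_2}}$, where the sign $\pm$ is $+$ for a rotation and $-$ for a reflection. This forces $\pm\frac{\pi}{2L}\in\frac{2\pi}{m_1}\Z+\frac{2\pi}{m_2}\Z=\frac{2\pi}{L}\Z$, using coprimality, which is impossible. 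The shift $\pi/(2L)$ rather than $\pi/L$ is chosen precisely so that reflections are excluded as well.

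\textbf{Agreement of low-degree invariants.} Let $p$ be $G$-invariant of degree $D<2L$. Restricting to planar configurations does not raise the degree and yields a polynomial invariant under $SO(2)\times S_n$; this covers all four choices of $H$, since each contains $SO(2)$ acting on the plane. Define $g(\varphi,\theta)=p(X_{\varphi,\theta})$. Expanding $p$ in the variables $z_j,\bar z_j$ shows that $g$ is a trigonometric polynomial. A frequency $(a,b)$ arising from a monomial satisfies $|a|\le$ its degree in the ring-1 variables and $|b|\le$ its degree in the ring-2 variables, so $|a|+|b|\le D$.

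The symmetries of $g$ then restrict the frequencies. Permuting within each ring makes $g$ periodic with period $2\pi/m_1$ in $\varphi$ and $2\pi/m_2$ in $\theta$, so $m_1\mid a$ and $m_2\mid b$. Global rotation invariance gives $g(\varphi+\beta,\theta+\beta)=g(\varphi,\theta)$, so $a+b=0$. Hence any nonzero frequency has $|a|=|b|$ a nonzero multiple of $L$, and so $D\ge|a|+|b|\ge 2L$. Therefore $g$ is constant when $D<2L$, and $p(X)=p(Y)$.

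The main obstacle is the bookkeeping in this frequency argument: justifying that the trigonometric degree bound holds and that the three symmetries hold exactly, including the degenerate ring of size $1$. Choosing coprime ring sizes in each residue class of $n$ is routine by comparison.
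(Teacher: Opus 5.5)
Your proof is correct, and it proves more than the statement: it gives $D(G,V)\geq 2\,\mathrm{lcm}(m_1,m_2)$, roughly $n^2/2$ rather than $n^2/4$. Your route differs from the paper's.

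The paper isolates a general criterion (Proposition~\ref{prop:lowerDbound}). If some $g\in G$ preserves a $2$-plane $W$, acts on it as rotation by $2\pi/k$, and the unit circle of $W$ meets more than one orbit, then $D(G,V)\geq k$. The reason is that invariants of the cyclic group $\langle g\rangle$ on $\RR^2$ of degree $<k$ are polynomials in $x_1^2+x_2^2$. The paper applies this to a single circle of configurations $w(t)$: two equal-radius polygons turning in opposite directions, which up to the radii is the anti-diagonal $\varphi=t$, $\theta=\pi/2-t$ of your torus. It writes down an explicit element $(e^{2\pi i r/pq},\sigma)$ with $g\cdot w(t)=w(t+2\pi/pq)$, which is what the arithmetic $r=up-1=uq+1$ is for. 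It handles translations through the centering Lemma~\ref{lem:closing_options}.

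You instead let the two phases vary independently and read off the Fourier support from the within-ring shifts and the diagonal rotation separately. So no group element acting on a plane has to be constructed, only $SO(2)\times S_n$-invariance is used, and translations enter only through your direct non-equivalence check. This decoupling is also where your extra factor $2$ comes from. Along the paper's circle the relative phase moves at speed $2$, and for coprime $p,q$ even the shift $t\mapsto t+\pi/pq$ is realized by a group element, which the paper does not use. In return, the paper's route gives a clean, reusable criterion for arbitrary compact groups acting orthogonally.

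A few loose ends are worth fixing.

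\emph{Coprimality.} It is never needed, since $\frac{2\pi}{m_1}\ZZ+\frac{2\pi}{m_2}\ZZ=\frac{2\pi}{\mathrm{lcm}(m_1,m_2)}\ZZ$ holds in general.

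\emph{The case $m_1=1$.} This occurs for $n=3,4,6$. Read literally as a singleton at the origin next to one ring, your fix collapses the family to a single orbit. It is simpler to handle these cases directly:
\begin{itemize}
\item $n=3$ is trivial: there are no nonconstant invariants of degree $1$, so $D\geq 2>5/4$.
\item For $n=4$ and $n=6$, use the non-coprime splits $(2,2)$ and $(2,4)$. These give $2\,\mathrm{lcm}=4\geq 3$ and $8\geq 8$ respectively.
\end{itemize}

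\emph{The shift $\pi/(2L)$.} It is not what excludes reflections. $X_{0,0}$ is itself mirror-symmetric, so the shift $\pi/L$ would work equally well.
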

The remainder of this subsection is devoted to proving this theorem. We begin with some preliminaries. Firstly, the following lemma establishes a simple relationship between the separation degree with and without translations:
\begin{restatable}{lemma}{center}\label{lem:closing_options}
Let $V=\RR^{d\times n} $ and $\Vcent$ denote the subspace of point clouds $X\in \RR^{d \times n}$ with $X1_n=0_d$. Then
\begin{align}
D(E(d)\times S_n,V)&=D(O(d)\times S_n,\Vcent)\leq D(O(d)\times S_n,V) \label{eq:equiv1}\\
D(SE(d)\times S_n,V)&=D(SO(d)\times S_n,\Vcent)\leq D(SO(d)\times S_n,V) \label{eq:equiv2}
\end{align}
\end{restatable}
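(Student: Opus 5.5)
The plan is to compare both sides through the centering map $c:V\to\Vcent$, $c(X)=X-\frac{1}{n}X1_n1_n^T$. This map is linear, so it preserves degree. It is also surjective, and it restricts to the identity on $\Vcent$. Let $H_0\in\{O(d),SO(d)\}$ and let $H$ be the corresponding group with translations, $E(d)$ or $SE(d)$. The basic observation is that $c$ is equivariant for $H_0\times S_n$: it commutes with rotations, and with permutations because $1_n$ is permutation invariant. Moreover, $c$ kills translations, since $c((t,I,\mathrm{id})\cdot X)=c(X)$. Consequently $X$ and $Y$ lie in the same $H\times S_n$ orbit if and only if $c(X)$ and $c(Y)$ lie in the same $H_0\times S_n$ orbit. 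One direction is immediate from the two properties just listed. For the converse, note that $X$ is a translate of $c(X)$ by its centroid, so $X$ and $c(X)$ always share an $H\times S_n$ orbit.

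To show $D(H\times S_n,V)\le D(H_0\times S_n,\Vcent)$, take polynomials $p$ of degree $\le D$ that are $H_0\times S_n$ invariant on $\Vcent$ and separate orbits there. Extend each $p$ to $V$ by setting $\tilde p=p\circ c$. This extension has degree $\le D$ and is $H\times S_n$ invariant by the observations above. If $X$ and $Y$ lie in different $H\times S_n$ orbits, then $c(X)$ and $c(Y)$ lie in different $H_0\times S_n$ orbits, so some $p$ separates them, and hence $\tilde p$ separates $X$ and $Y$.

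For the reverse inequality, restrict polynomials on $V$ that are $H\times S_n$ invariant and separating to $\Vcent$. Restriction does not raise degree and keeps $H_0\times S_n$ invariance. It also still separates: two centered clouds in different $H_0\times S_n$ orbits lie in different $H\times S_n$ orbits, because the equivalence above applies with $c(X)=X$. This gives equality.

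For the final inequality $D(H_0\times S_n,\Vcent)\le D(H_0\times S_n,V)$, restrict a separating family on $V$ to $\Vcent$. Invariance is preserved, degree does not increase, and the family still separates, since the orbits in $\Vcent$ are exactly the orbits of $V$ that meet $\Vcent$, a set which is invariant under $H_0\times S_n$.

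The only point needing care is the orbit equivalence under $c$, specifically in the direction where $c(X)$ and $c(Y)$ are related by a rotation and permutation. There we write $Y=c(Y)+\bar y1_n^T$ and $X=c(X)+\bar x1_n^T$, and compose the translation by $\bar x$, the element of $H_0\times S_n$, and the translation by $-\bar y$. This composite lies in $H$, proper if the rotation was proper. One also needs the sign conventions in the action $(t,R,\sigma)$ to be handled consistently, but that is routine.
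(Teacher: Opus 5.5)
Your proposal is correct and follows essentially the same route as the paper: the centering map $c$, its $H_0\times S_n$-equivariance together with the orbit-equivalence claim, the pullback $p\circ c$ for $D(H\times S_n,V)\le D(H_0\times S_n,\Vcent)$, and restriction to $\Vcent$ (using its $H_0\times S_n$-invariance) for both the reverse inequality and the final one. One small wording point: the composite you build in the converse direction lies in $H\times S_n$, not in $H$ alone, since it includes a permutation; this does not affect the argument, because orbits under a group are equivalence classes.
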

\begin{proof}[Proof idea]
Equation \ref{eq:equiv1} follows from the fact that two point clouds $X,Y$ are in the same $E(d)\times S_n $ orbit, if and only if the centralized point clouds $X-\frac{1}{n}X1_n1_n^T, Y-\frac{1}{n}Y1_n1_n^T$ are in the same $O(d)\times S_n $ orbit. The proof for \eqref{eq:equiv2} is identical. 
\end{proof}

The proof of the theorem is strongly based on the following proposition:
\begin{proposition}\label{prop:lowerDbound}
Let $G$ be a compact group acting orthogonally on a finite dimensional real inner product space $V$. 
    Let $k\geq 1$ be an integer. Suppose there exists a $2$-dimensional subspace $W\subseteq V$ and an element $g\in G$ such that $g\cdot W\subseteq W$ and $g$ acts by rotation over $2\pi/k$ on $W$. If the unit circle $S(W)\subset W$ contains more than $1$ orbit, then 
     we have $D(G,V)\geq k$.
\end{proposition}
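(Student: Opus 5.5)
We argue by restricting invariant polynomials of degree $<k$ to the plane $W$ and showing they are constant on the circle $S(W)$. Once that is done, two distinct orbits meeting $S(W)$ cannot be separated by $\RR[V]^G_{\leq k-1}$, so $D(G,V)\geq k$.

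\textbf{Setup.} Identify $W$ with $\C$ through an orthonormal basis, writing $z=x+iy$. Then $g$ acts on $W$ as multiplication by $\omega=e^{2\pi i/k}$. Let $p\in\RR[V]^G$ have degree $\leq k-1$, and let $q=p|_W$. This $q$ is a real polynomial on $W$ of degree $\leq k-1$. Since $g\cdot W\subseteq W$ and $p$ is $G$-invariant, we get $q(\omega z)=q(z)$ for all $z\in W$.

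\textbf{Monomial argument.} Expand $q$ in the monomials $z^a\bar z^b$ with $a+b\leq k-1$. Multiplication by $\omega$ sends $z^a\bar z^b$ to $\omega^{a-b}z^a\bar z^b$. Averaging $q$ over the cyclic group $\langle g\rangle$ therefore keeps only the monomials with $a\equiv b \pmod k$. Since $|a-b|\leq a+b\leq k-1<k$, this forces $a=b$. So $q$ is a polynomial in $|z|^2$ alone, hence radial, and in particular constant on $S(W)$.

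\textbf{Conclusion.} By hypothesis $S(W)$ meets more than one $G$-orbit, so there are $w_1,w_2\in S(W)$ with $[w_1]\neq[w_2]$. Every $p\in\RR[V]^G_{\leq k-1}$ satisfies $p(w_1)=p(w_2)$. Hence $\RR[V]^G_{\leq k-1}$ is not $(G,V)$ complete, and $D(G,V)\geq k$.

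\textbf{Subtleties.} The main one is that $S(W)$ contains more than one orbit. We interpret this as: $S(W)$ is not contained in a single $G$-orbit, i.e.\ it meets at least two distinct orbits. Under that reading the argument is complete. The case $k=1$ is trivial, since constants do not separate distinct orbits. Compactness and orthogonality of the action are used only to make the notion of separating orbits meaningful. The restriction argument itself only needs $g\cdot W\subseteq W$.
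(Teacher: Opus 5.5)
Your proof is correct and follows the paper's argument. You restrict an invariant of degree $<k$ to $W$, use its invariance under the cyclic group $\langle g\rangle$ to show it is a polynomial in $|z|^2$, hence constant on $S(W)$, so two points of $S(W)$ in different orbits cannot be separated. The only difference is that you verify the cyclic-invariant step directly via the $z^a\bar z^b$ expansion and averaging, where the paper cites the known generators $x_1^2+x_2^2$, $\mathrm{Re}\,(x_1+ix_2)^k$, $\mathrm{Im}\,(x_1+ix_2)^k$ of $\RR[x_1,x_2]^{\langle g\rangle}$.
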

\textit{Proof.}
We choose an orthonormal basis of $W$
and identify $\R[W]\cong \R[x_1,x_2]$. 
Restricting polynomials gives a surjective ring homomorphism $\R[V]\to \R[x_1,x_2]$. Let $H$ be the subgroup of $G$ generated by $g$. 
It is well-known that the invariant ring $\R[x_1,x_2]^H$  is generated by $x_1^2+x_2^2$ and the polynomials $f_1,f_2\in \R[x_1,x_2]^H$ of degree $k$ given by 
$(x_1+i x_2)^k=f_1+if_2$ (see for example~\citealp{Yasinsky2022}).
In particular, every element of $\R[x_1,x_2]^H$ of degree $<k$ is a 
polynomial in $x_1^2+x_2^2$ and is constant on $S(W)$. Since not all elements in $S(W)$ lie in the same $G$-orbit, there exists an invariant $f\in \R[V]^G$ of degree $\leq D(G,V)$ that is not constant on $S(W)$. If $f'$ is the restriction of $f$ to $W$, then $f'\in \R[W]^H=\R[x_1,x_2]^{H}$ is not constant on $S(W)$, so we have $D(G,V)\geq \deg(f)\geq \deg(f')\geq k$. 
\qed{}

We now return to the proof of Theorem \ref{thm:lower-quadratic}. By Lemma \ref{lem:closing_options} it is sufficient to prove a bound for all four groups in the lemma, by considering the two groups $G=O(d)\times S_n$ or $G=SO(d)\times S_n$, and the vector space $\Vcent $- the space of centralized point clouds. Throughout, we will assume $G$ is one of these two groups. 

We first consider the case $d=2$.  We identify $\R^2$ with $\C$ and $\R^{2\times n}$ with $\C^n$. 
If $n=2m$ is even, we define $p=m+1$, $q=m-1$, $u=1$ and $r=m$. If $n=2m+1$ then we define $p=m+1$, $q=m$, $u=2$ and $r=2m+1$. We have $n=p+q$, $r=up-1=uq+1$ and 
$pq\geq n^2/4-1$.
For $t\in \R$ and all $j\in \Z$, define 
$$
a_j(t)=e^{i(2\pi j/p+t)}\quad\mbox{and}\quad
b_j(t)=e^{i(\pi/2-2\pi j/q-t)}.
$$
So $a_1(t),a_2(t),\dots,a_{p}(t)$ are the counterclockwise consecutive vertices of a regular $p$-gon centered around the origin. The vertices $b_1(t),b_2(t),\dots,b_{q}(t)$ are clockwise consecutive vertices of a  regular $q$-gon.
As $t$ increases, the $p$-gon rotates counter-clockwise and the $q$-gon rotates clockwise.
We easily verify that
$a_{j}(t+\frac{\pi}{2})=ia_j(t)$, $b_j(t+\frac{\pi}{2})=-ib_j(t)$, $a_{j+p}(t)=a_j(t)$, $b_{j+q}(t)=b_j(t)$ and
\begin{eqnarray*}
    a_j(s+t) & = & e^{is}a_j(t)=(\cos(s)+i\sin(s))a_j(t)=\textstyle \cos(s)a_j(t)+\sin(s)a_j(t+\frac{\pi}{2})\\
       b_j(s+t) & = & 
    e^{-is}b_j(t)=(\cos(s)-i\sin(s))b_j(t)=\textstyle \cos(s)b_j(t)+
    \sin(s)b_j(t+\frac{\pi}{2})
\end{eqnarray*}
 Define  
$$
w(t)=\begin{pmatrix}
    a_1(t) & a_2(t) & \cdots & a_p(t) & b_1(t) & b_2(t) & \cdots & b_q(t)
\end{pmatrix}.
$$
The coordinates of $w(t)$ form a point cloud in the complex plane (see figure~\ref{fig:1}) and this point cloud is centralized.
\begin{figure}%
    \includegraphics[width=0.33\textwidth]{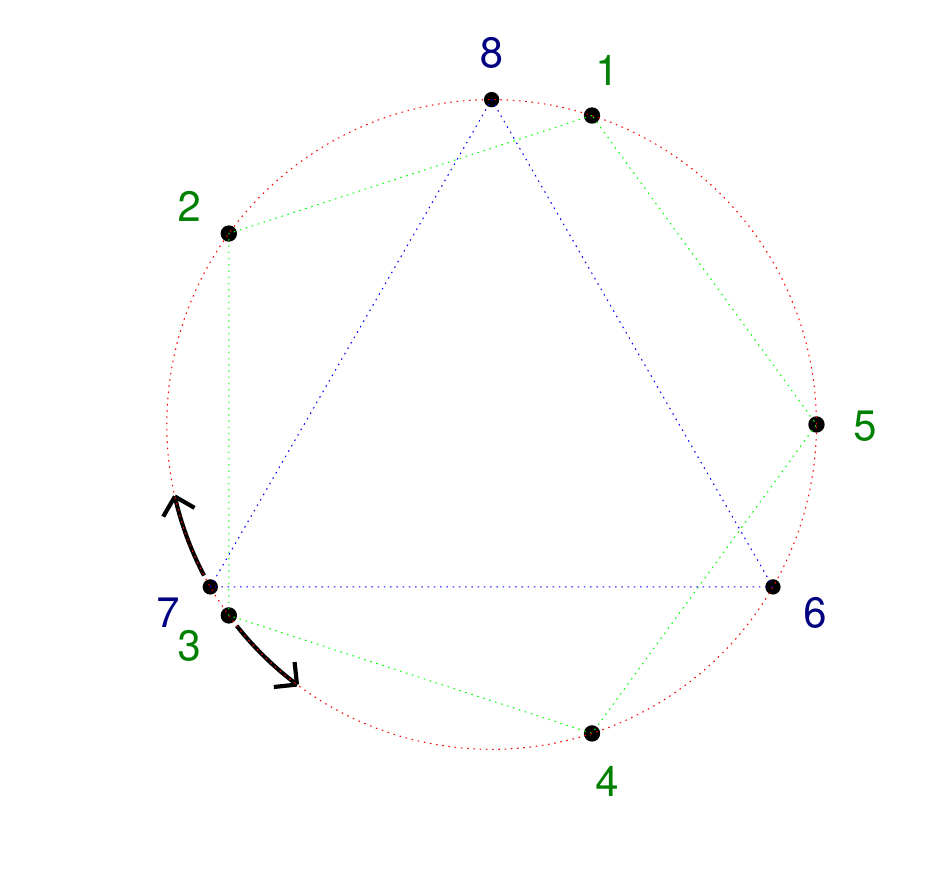}
        \includegraphics[width=0.33\textwidth]{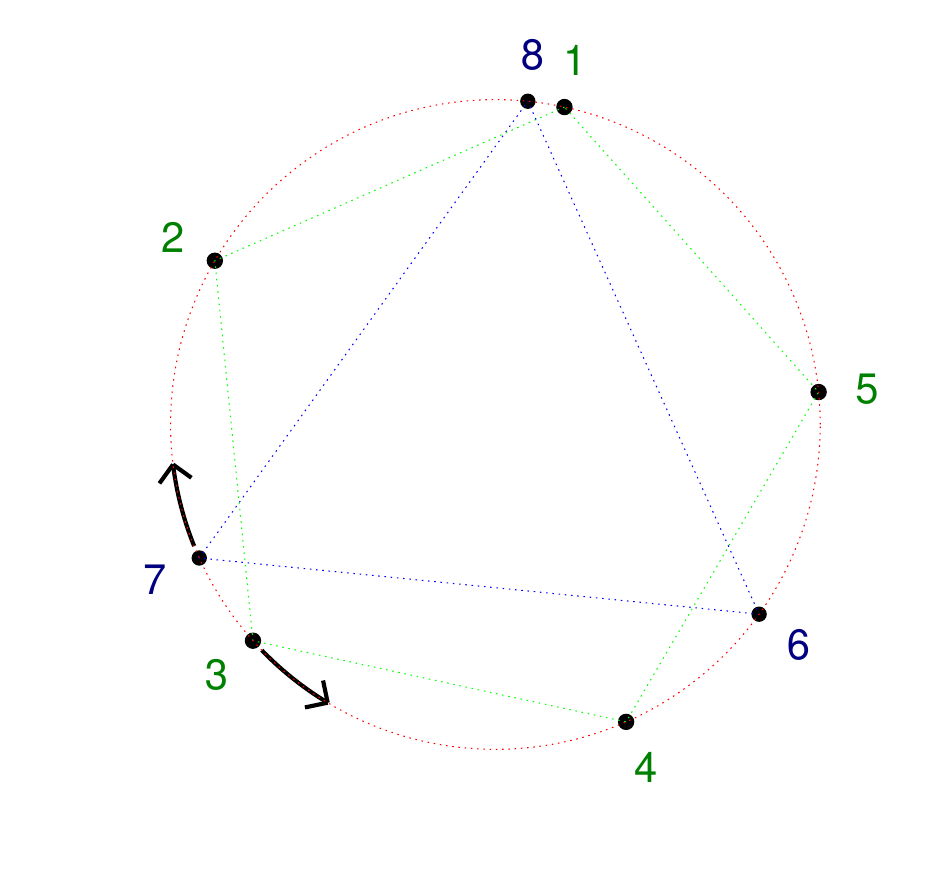}
          \includegraphics[width=0.33\textwidth]{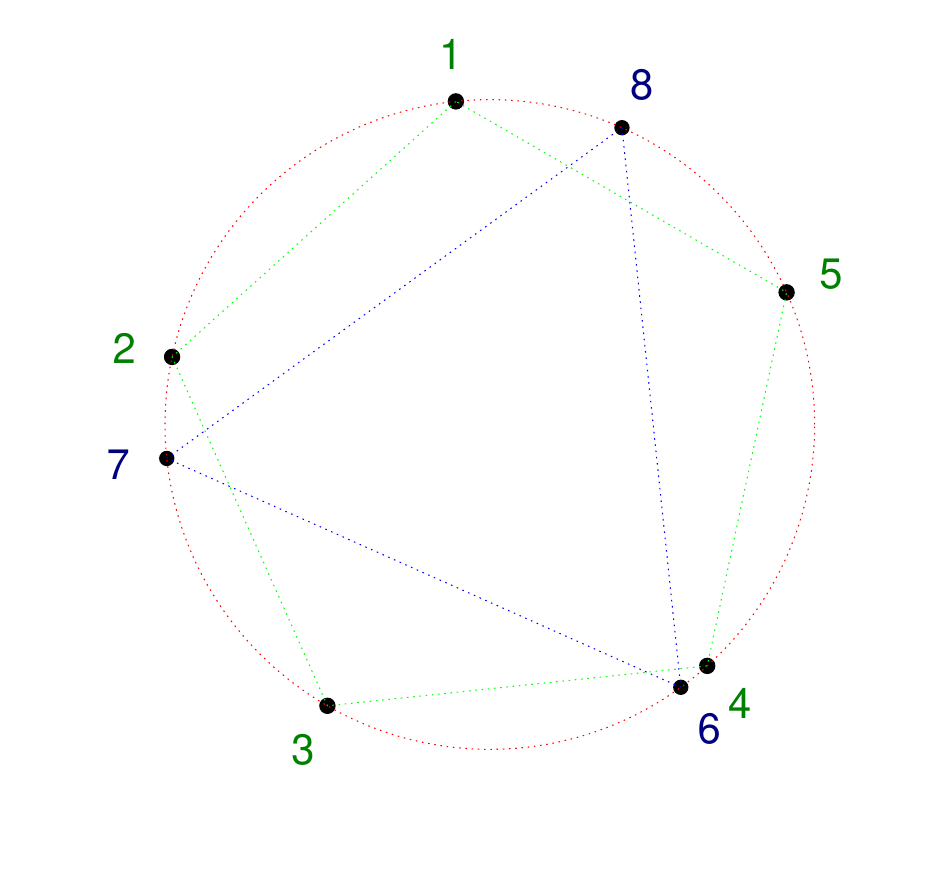}
    \caption{From left to right we have plotted the  coordinates of $w(0)$, $w(0.1)$ and $w(\frac{2\pi}{15})=w(0.4189)$ in the complex plane for $n=8$. We have $p=5$, $q=3$ and $r=4$. As $t$ goes from $0$ to $0.1$ the pentagon  moves counterclockwise and the triangle  moves clockwise. The point clouds of $w(0)$ and $w(\frac{2\pi}{15})$ are isometric.
    The latter is obtained from the first by rotation over $\frac{8\pi}{15}$ and applying the permutation $(1\ 2\ 3\ 4\ 5)(8\ 7\ 6)$ to the columns of $w$.}
    \label{fig:1}
\end{figure}
We have $w(t+s)=\cos(s)w(t)+\sin(s)w(t+\frac{\pi}{2})$. This shows that $w(t)$ lies in the real span of the two linearly independent matrices $w(0)$ and $w(\frac{\pi}{2})$. 
 Now we take $s=\frac{2\pi}{pq}$.
Because $r=up-1=uq+1$, we have  
\begin{eqnarray*}
\textstyle a_i\big(t+\frac{2\pi}{pq}\big) &=& \textstyle a_i\big(t+\frac{2\pi r}{pq}-\frac{2\pi u}{p}\big)=a_{i-u}\big(t+\frac{2\pi r}{pq}\big)=e^{2\pi i r/pq}a_{i-u}(t)\\
\textstyle b_j\big(t+\frac{2\pi }{pq}\big) &=& \textstyle b_j\big(t-\frac{2\pi r}{pq}+\frac{2\pi u}{q}\big)=b_{j+u}\big(t-\frac{2\pi r}{pq}\big)=e^{2\pi i r/pq}b_{j+u}(t)
\end{eqnarray*}
and
$$
w\big(t+{\textstyle \frac{2\pi}{pq}}\big)=e^{2\pi i r/pq}
\begin{pmatrix}
    a_{1-u}(t) & a_{2-u}(t) & \cdots & a_{p-u}(t) & b_{1+u}(t) & b_{2+u}(t) & \cdots & b_{q+u}(t)
    \end{pmatrix}
    $$
The action of $\SO(2)$ on $\R^{2\times n}\cong \C^n$ is just multiplication with unit scalars in $\C$. The permutation
$$\sigma=(1\ 2\ \dots\ p)^{u}(p+1\ p+2\ \cdots\ p+q)^{-u}\in S_n
$$
acts on $\C^n$ by permuting the entries (or permuting the columns in $\R^{2\times n}$).
For $g=(e^{2\pi i r/pq}, \sigma)\in \SO(2)\times S_n\subseteq G$ we have $g\cdot w(t)=w(t+\frac{2\pi}{pq})$. Recall that $W$ is the span of $w(0)$ and $w(\frac{\pi}{2})$. From this it is clear that $g\cdot W\subseteq W$. If we restrict the action of $g\in G$ to the subspace $W$, then it is just rotation over the angle $2\pi/pq$.

 If we forget about the labeling of the vertices, then $w(t)$ is a point cloud consisting of the union of the vertices of a $p$-gon and the vertices of a $q$-gon.
Now $w(t)$ is obtained from $w(0)$ by rotating the $p$-gon counterclockwise and the $q$-gon clockwise. From this it is clear that
for small $t>0$, $w(0)$ and $w(t)$ are not isometric point clouds.
This proves that not all the elements of $S(W)\subset W$ lie in the same $\OO(2)\times S_n$-orbit.
By Proposition~\ref{prop:lowerDbound} we get $D(G,V)\geq pq\geq n^2/4-1$.

Now assume that $d>2$. We view $\R^{2\times n}$
as the subspace of $\R^{d\times n}$ of $d\times n$ matrices whose rows $3,\dots,d$ are all equal to $0$, and let $W\subseteq \R^{2\times n}$ be the $2$-dimensional subspace that was constructed in the $d=2$ case. Now there is again an element $g\in \SO(2)\times S_n\subseteq G\subseteq\OO(d)\times S_n$ that acts on $W$ by rotation over the angle $2\pi/pq$. For similar reasons as before, not all elements of $S(W)\subset W$ lie in the same $\OO(d)\times S_n$-orbit.
So we again can apply Proposition~\ref{prop:lowerDbound} to conclude that $D(G,V)\geq pq\geq n^2/4-1$.

\qed{}

\subsection{Upper bound for point clouds}\label{subsec:sep-upper-bnd}
Having established a lower bound on the degree required for separation of point clouds, we proceed to provide an upper bound. In the proof of \Cref{thm:dn2}, we introduce a polynomial construction with degree that depends linearly on the dimension $d$ and quadratically on the number of points that separates $d$-dimensional point clouds, thereby establishing an upper bound on the degree sufficient for separation. 

\begin{theorem}\label{thm:dn2}
    Let $n\geq 2$ and $1\leq d\leq n$. Then
    \begin{align*}
      D\bigl(H\times S_n,\Rdnmat\bigr)\;&\leq\;6dn^2-4dn-d(d-1)(n-1), \quad \quad \, \, \, \, \, \text{ for } H\in \{\OO(d),E(d) \}\\
      D\bigl(H\times S_n,\Rdnmat\bigr)\;&\leq\;6dn^2-4dn-d(d-1)(n-1)+d, \quad  \text{ for } H\in \{\SO(d),SE(d) \}
    \end{align*}
\end{theorem}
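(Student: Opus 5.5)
The plan is to reduce to a purely permutation-theoretic problem on Gram matrices and then build explicit separating invariants whose degree we can count. First, by \Cref{lem:closing_options} it suffices to treat $H=\OO(d)$ and $H=\SO(d)$ on $\Rdnmat$, since the bounds for $E(d)$ and $SE(d)$ follow.

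For $\OO(d)$, the first fundamental theorem says that $X,Y$ lie in the same $\OO(d)$ orbit iff $X^TX=Y^TY$. Hence $X,Y$ lie in the same $\OO(d)\times S_n$ orbit iff the Gram matrices $A=X^TX$ and $B=Y^TY$ lie in the same $S_n$ orbit in $\Symn$ under conjugation. Every $S_n$-invariant polynomial $q$ on $\Symn$ of degree $e$ pulls back to an $\OO(d)\times S_n$ invariant $q(X^TX)$ of degree $2e$. It therefore suffices to separate, up to conjugation by $S_n$, the PSD matrices of rank $\le d$, using $S_n$-invariant polynomials of degree $\le 3dn^2-2dn-\tfrac12 d(d-1)(n-1)$. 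The rank restriction is what should turn a generic (quartic-type) graph bound into one linear in $d$.

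To separate, I would exploit that a rank-$\le d$ Gram matrix is determined by a choice of $d$ ``frame'' columns $i_1,\dots,i_d$ together with the inner products of all points against them. Concretely, consider invariants of the form
$$\textstyle\sum_{(i_1,\dots,i_d)}\Delta(i_1,\dots,i_d)^{2}\prod_{s=1}^d m_s\bigl(A_{i_s\cdot}\bigr),$$
where $\Delta$ is the Gram determinant of the chosen frame, which weights out degenerate frames. Each factor $m_s$ is built from power sums $\sum_j A_{i_sj}^{k}$ and mixed power sums $\sum_j\prod_s A_{i_sj}^{k_s}$ with total exponent at most $n$. By Newton's identities, these mixed power sums determine the multiset of coordinate vectors $\{(A_{i_1j},\dots,A_{i_dj})\}_j$, which are exactly the coordinates of the points in the (nondegenerate) frame. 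Summing over all frames and testing with the standard multisymmetric-separation argument (polarization, and separating multisets of $\binom{n}{d}$-many frame-descriptors via power sums of degree up to $n$ in those descriptors) then recovers the orbit.

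The degree accounting goes as follows:
\begin{itemize}
\item Each frame descriptor has Gram-degree $O(n)$ per frame coordinate, giving the factor $dn$.
\item Taking power sums of order up to roughly $3n$ (or $n$ over a descriptor of degree $\sim 3d n$) gives the $3dn^2$ leading term.
\item The subtracted terms $2dn$ and $\tfrac12 d(d-1)(n-1)$ should come from not needing the top exponents: the $\binom{d}{2}$ off-diagonal frame entries are fixed once the frame is fixed, which saves $n-1$ in each of them.
\end{itemize}
The main obstacle is making the frame-based reconstruction genuinely polynomial and of this exact degree. The difficulty is handling degenerate point clouds, where every $d$-frame is singular, so that $\Delta\equiv0$. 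I would treat these by induction on the rank: replace $d$ by the actual rank $d'<d$, whose bound is smaller, and combine the families using that the rank is itself detected by low-degree invariants.

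For $\SO(d)$, the Gram matrix is no longer complete. Here I would additionally use the determinants $\det(x_{i_1},\dots,x_{i_d})$ of degree $d$. I would insert the factor $\det(x_{i_1},\dots,x_{i_d})$ in place of one power of $\Delta$ in the frame invariants above. This distinguishes a cloud from its mirror image while raising the degree by exactly $d$, which matches the extra $+d$ in the statement.
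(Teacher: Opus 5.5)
Your reduction to $S_n$-orbits of Gram matrices, with target degree $3dn^2-2dn-\tfrac12 d(d-1)(n-1)$ on $\Symn$, is right, but the separation step has a genuine gap. Summing over frames and then separating the weighted multiset of frame descriptors by power sums does not give degree of order $dn^2$. Symmetric polynomials of degree $<N$ cannot separate all multisets of $N$ reals. So the standard multisymmetric argument, applied to $\binom{n}{d}$ (or $n!/(n-d)!$ ordered) descriptors, costs degree of order $\binom{n}{d}$ in the descriptors, and each descriptor already has degree up to $n$ in $A$. Your claim that power sums of degree up to $n$ in the descriptors suffice is unsupported. The degree accounting (``roughly $3n$'', ``should come from not needing the top exponents'') is fitted to the target rather than derived, and nothing in the construction produces $-4dn$ or $-d(d-1)(n-1)$. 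The $\SO(d)$ step has the same weakness: a signed sum $\sum_F\det(X_F)(\cdots)$ can cancel for a chiral cloud. Also, $\Delta$ has degree $2d$ in $X$, so trading a power of $\Delta$ for $\det$ lowers rather than raises the degree; that count is not tied to a construction either.

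The missing idea is to let the polynomial depend on the pair $(A,B)$ and make it an exact $\{0,1\}$-valued match indicator summed over $S_n$, so that no multiset ever has to be separated. After separating ranks with $\mathrm{tr}(C^k)$, $k\le n$, the paper relabels $A$ (rank $r$) to have a nonsingular leading $r\times r$ block. It then sets $\chi(C)=\prod_{i\le r}\delta_i(C)\prod_{(i,j)\in\mathcal P_r}\ell_{ij}(C_{ij})$.
\begin{itemize}
\item Each row indicator $\delta_i$ composes the row power sums (degree $n$) with a linear functional injective on the at most $2n$ row-power-sum vectors of $A$ and $B$, and then with a Lagrange polynomial of degree $\le 2n-1$. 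So it is exactly $0$ or $1$ on every $\tau\cdot A,\tau\cdot B$.
\item Once all $\delta_i=1$, each entry $(\tau\cdot C)_{ij}$ lies among the at most $n$ values of row $i$ of $A$, so a Lagrange polynomial $\ell_{ij}$ of degree $n-1$ tests it exactly.
\end{itemize}
Agreement of the first $r$ rows forces $\tau\cdot C=A$ (Lemma~\ref{lem:completion}, which is your observation that one nondegenerate frame determines everything). Hence $Q(C)=\sum_{\tau\in S_n}\chi(\tau\cdot C)$ gives $Q(A)=|\mathrm{Stab}(A)|\geq 1$ and $Q(B)=0$. The group sum costs no degree, and $rn(2n-1)+N_r(n-1)$ with $N_r=rn-\binom{r}{2}$ doubles to exactly the stated bound. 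For $\SO(d)$ the paper multiplies each $\tau$-summand by the affine factor $\kappa(\det(\tau\cdot Z)_{[d]})$, which is exactly $1$ or $0$ according to orientation. Then, for a reflection $M$, $p(X)$ and $p(MX)$ count elements of $\mathrm{Stab}(A)$ with $\det R_\tau=+1$ and $-1$ respectively (Lemma~\ref{lem:stab-sign}), which rules out cancellation. Rank-deficient clouds need no determinant, since a reflection fixes them (Lemma~\ref{lem:rank-deficient}).
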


\begin{proof}[Proof idea]
By Lemma \ref{lem:closing_options} it is sufficient to prove the claim for $O(d)\times S_n$ and $SO(d)\times S_n $. The goal is to find, for any given $X,Y\in \RR^{d\times n}$ from different orbits, an invariant polynomial of the advertised degree that separates them. Since two clouds are $\OO(d)\times S_n$-equivalent iff their Gram matrices $A=X^TX$, $B=Y^TY$ are equal up to simultaneous row/column permutation, it suffices to separate $A$ and $B$ by an $S_n$-invariant polynomial $Q$ (at the cost of doubling the degree). We choose a polynomial of the form $$Q(C)=\sum_{\tau\in S_n}\chi(\tau\cdot C)$$
where $\chi$ is a polynomial which takes values in $\{0,1\}$ on the $S_n$-orbits of $A$ and $B$, with $\chi(A)=1$ and $\chi(\tau\cdot B)=0$ for all permutations $\tau$. The task is thus reduced to realizing such an indicator $\chi$ by a low-degree polynomial. Clouds of different rank are separated by $\mathrm{tr}\big((X^TX)^k\big)$ for some $k\leq n$, so we may assume $\mathrm{rank}(A)=\mathrm{rank}(B)=r\leq d$. Then $A$ can be relabeled so that its leading $r\times r$ block is nonsingular, and among matrices of rank $r$ it is determined by its first $r$ rows, so at most $d\cdot n$ entries must be checked. Each entry is checked by a Lagrange polynomial of degree $<n$ interpolating on the values of the corresponding row of $A$, which is valid thanks to a row indicator of degree $O(n^2)$ (via power sums), which ensures the row of $\tau\cdot C$ is a rearrangement of that of $A$. The total degree is $O(dn^2)$. For $\SO(d)$, if the point clouds have rank $d$, we multiply each summand by an affine function of the determinant of the first $d$ points, which detects orientation and adds $d$ to the degree; if their rank is smaller than $d$, the $\SO(d)\times S_n$-orbit of $X$ is equal to its $\OO(d)\times S_n$-orbit, and the polynomial for $\OO(d)$ separates them.
\end{proof}

\Cref{thm:dn2} shows that it is possible to obtain separating invariants of degree quadratic in $n$ for point clouds. It follows that neural networks that are able to express polynomials up to that degree bound can provably separate point clouds with $n$ points. Many popular machine learning equivariant models are able to express high order polynomials using tensor representations, allowing us to show how this upper bound proves the separation capabilities of prominent such models in \Cref{sec:models}.

\subsection{Degree bounds for Weighted graphs}\label{subsec:graphs}
Our degree bounds for point clouds automatically imply degree bounds for weighted graphs (see definition in Subsection \ref{sub:notation}). 
 
\begin{restatable}{proposition}{graphs}\label{prop:graphs}
For any natural $n$, we have that
$D(O(n)\times S_n,\RR^{n\times n})=2D(S_n,\Symn)$.
    In particular it follows from Theorem \ref{thm:lower-quadratic} and Theorem \ref{thm:dn2}, that for $n\geq 3$,
    $$\frac{1}{2}(n^2/4-1) \leq D(S_n,\Symn)\leq \frac{1}{2}\left(5n^3-2n^2-n \right)$$
  
\end{restatable}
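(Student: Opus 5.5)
The plan is to compare both invariant rings through the Gram map $\pi:\RR^{n\times n}\to\Symn$, $\pi(X)=X^TX$, and prove the two inequalities $D(O(n)\times S_n,\RR^{n\times n})\leq 2D(S_n,\Symn)$ and $\geq 2D(S_n,\Symn)$ separately. The map $\pi$ is $S_n$-equivariant: $(\sigma\cdot X)^T(\sigma\cdot X)=\sigma\cdot (X^TX)$, where $S_n$ acts on the left by conjugation. It is also $O(n)$-invariant. Two standard facts will be used. First, $X,Y$ lie in the same $O(n)$-orbit iff $X^TX=Y^TY$, so $X,Y$ lie in the same $O(n)\times S_n$-orbit iff $X^TX$ and $Y^TY$ lie in the same $S_n$-orbit. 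Second, since $d=n$, every positive semidefinite $n\times n$ matrix is a Gram matrix of some $X\in\RR^{n\times n}$.

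\emph{Upper bound.} Take $X,Y$ in different orbits and set $A=X^TX$, $B=Y^TY$. Then $[A]\neq[B]$ under $S_n$, so some $f\in\RR[\Symn]^{S_n}_{\leq D(S_n,\Symn)}$ has $f(A)\neq f(B)$. The composition $f\circ\pi$ is $O(n)\times S_n$-invariant, has degree at most $2\deg f$, and separates $X$ from $Y$.

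\emph{Lower bound.} This direction is the main obstacle. By the First Fundamental Theorem for $O(n)$, every $O(n)$-invariant polynomial on $\RR^{n\times n}$ has the form $h\circ\pi$ for a polynomial $h$ on $\Symn$. Since $\pi$ is homogeneous quadratic, we may take each homogeneous component of $h$ to correspond to a homogeneous invariant, so $\deg h\leq \tfrac12\deg(h\circ\pi)$. Replacing $h$ by its average over $S_n$ gives an $S_n$-invariant $h$, because $\pi$ is equivariant and surjective onto the PSD cone, which is Zariski dense. Hence $\RR[\RR^{n\times n}]^{O(n)\times S_n}_{\leq 2k}=\{h\circ\pi: h\in \RR[\Symn]^{S_n}_{\leq k}\}$, and the same holds for degree $2k+1$.

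The remaining difficulty is that these invariants only test PSD matrices, whereas $D(S_n,\Symn)$ requires separating all symmetric matrices. I would handle this with shifts. Suppose $A,B\in\Symn$ lie in different $S_n$-orbits. Choose $c$ large enough that $A+cI$ and $B+cI$ are positive definite. These still lie in different orbits, since conjugation by a permutation fixes $I$. Write $A+cI=X^TX$ and $B+cI=Y^TY$. With $D'=D(O(n)\times S_n,\RR^{n\times n})$, some invariant of degree $\leq D'$ separates $X$ and $Y$. By the description above it equals $h\circ\pi$ with $h$ being $S_n$-invariant and $\deg h\leq \lfloor D'/2\rfloor$. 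Then $h(\cdot+cI)$ is $S_n$-invariant, has the same degree, and separates $A$ from $B$. Thus $D(S_n,\Symn)\leq D'/2$, which gives the equality.

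Finally, for $n\geq3$, \Cref{thm:lower-quadratic} with $d=n$ gives $D(S_n,\Symn)\geq\tfrac12(n^2/4-1)$. For the upper bound, \Cref{thm:dn2} with $d=n$ gives
\[
6n^3-4n^2-n(n-1)^2=5n^3-2n^2-n,
\]
and halving this yields the stated bound on $D(S_n,\Symn)$.
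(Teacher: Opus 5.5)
Your proof is correct and follows essentially the same route as the paper. The upper bound comes from composing with the Gram map. For the converse, you shift $A\mapsto A+cI$ to positive definite matrices, factor them as Gram matrices, and use the First Fundamental Theorem for $O(n)$ to write the separating invariant as $h\circ\pi$ with $\deg h=\deg(h\circ\pi)/2$. The only cosmetic difference is that you make $h$ $S_n$-invariant by averaging, whereas the paper observes that $h$ is already $S_n$-invariant on the open set of positive definite matrices and hence everywhere.
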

\begin{proof}[Proof idea]
The proof that $D(O(n)\times S_n,\RR^{n\times n})=2D(S_n,\Symn)$ is based on the observation that two positive semi-definite weighted graphs  $A,B$ are in the same $S_n$ orbit, if and only if there are $O(n)\times S_n $ equivalent point clouds $X,Y$ satisfying $X^TX=A, Y^TY=B $. The full proof is in the appendix.
\end{proof}

\section{ Quantitative approximation bounds for Invariant Models}\label{sec:models}

Our upper bound in Theorem \ref{thm:dn2} shows universality and separation for $\Gdn $ invariant polynomials of degree proportional to  $d\cdot n^2 $. These results can be extended to similar quantitative universality results for $\Gdn $ invariant architectures which are based on message passing and high dimensional representations of the rotation group, since such architectures can construct invariant polynomials. We give two specific architectures for which such results can be obtained: Tensor Field Networks (TFN) and MACE.

In this section we will often specialize to the case $d=3$, since this is the most relevant dimension in practice, and as a result many invariant models, including TFN and MACE, are defined for $d=3$ only.

\subsection{Tensor Field Networks}
Tensor Field Networks (TFN) \citep{TFN} pioneered the approach of using high-dimensional representations of the rotation group for $\Gthree$ invariant models. We give a brief background on this approach.

A real representation of $SO(3) $ is a real vector space on which $SO(3)$ acts linearly. A standard example is the action of $SO(3)$ on $\RR^3$ by matrix multiplication. More generally, for every $\ell=0,1,2,\ldots,$ there is a $2\ell+1$ dimensional representation $W_\ell\cong \RR^{2\ell+1} $ of $SO(3)$.  This representation is obtained via spherical harmonics: For a fixed $\ell$, the $\ell$-th spherical harmonics are a collection of $2\ell+1 $ polynomials $Y_\ell^m:\RR^3\to \RR, m=-\ell,\ldots,\ell $ which are (a) $\ell$-homogeneous and (b) are harmonic and satisfy that the mapping $x\mapsto \left( Y_\ell^m(x), m=-\ell,\ldots, \ell \right) $
is a rotation equivariant function mapping the standard representation $\RR^3\cong W_1$ into a high dimensional representation $W_\ell $.

% An important property of these polynomials is that for every fixed rotation $R$, the polynomial $Y_\ell^m(Rx) $ is a linear combination of the other spherical harmonics, namely 
% $$Y_\ell^m(Rx)=\sum_{m'=-\ell}^\ell D^{(\ell)}_{mm'}(R) Y_\ell^{m'}(x)$$

To what extent does the employment of higher dimensional features  increase the expressive power of the model? In \cite{Pozdnyakov_2022} it was shown that methods which use geometric message passing on the distances only (corresponding to $\ell=0$ features) are not complete. In contrast, in \cite{dymMaron} it was shown that TFN can construct all invariant polynomials up to degree $D$, if the feature order and depth are proportional to $D$. Specifically, the following is a simple corollary of Theorem 2 in that paper:
\begin{proposition}\label{prop:TFN}
Let $D\in \NN$. Any $\Gthree$ invariant polynomial $p:\RR^{3\times n}\to \RR $ can be expressed by a TFN model with $2D$ layers, and irreducible features of maximal order $\ellmax=D$.  
\end{proposition}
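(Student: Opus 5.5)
This proposition is a specialization of Theorem 2 of \cite{dymMaron}, so the plan is to recall what that theorem gives and check that its hypotheses and its depth/order parameters match ours. As recalled from that paper, Theorem 2 states that for any $D$, TFN architectures with depth proportional to $D$ and maximal feature order proportional to $D$ can approximate, and in the polynomial setting exactly represent, every $\Gthree$ equivariant polynomial of degree at most $D$. The strategy is to reduce to invariant polynomials. Invariant polynomials are the $\ell=0$ output channel of equivariant ones, so it suffices to invoke that theorem with output representation $W_0$.

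The argument has four steps.

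\textbf{Step 1: reduce to homogeneous, translation-invariant polynomials.} Any $\Gthree$ invariant polynomial $p$ of degree $\le D$ decomposes into homogeneous components $p=\sum_{k\le D}p_k$. Each $p_k$ is again invariant, because the rotation and permutation actions are linear and therefore preserve degree; translation invariance is handled by centering. Concretely, $SE(3)$ invariance means $p$ factors through the centralized cloud $X-\frac1n X1_n1_n^T$, as in Lemma \ref{lem:closing_options}. TFN layers operate on relative positions $x_j-x_i$, so they are automatically translation invariant, and it is enough to represent $SO(3)\times S_n$ invariant polynomials evaluated on relative positions.

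\textbf{Step 2: generate the polynomials by layers.} By first fundamental theorem style arguments, as in \cite{dymMaron}, such polynomials are spanned by expressions built from neighbor sums $\sum_j Y_\ell(x_j-x_i)$ combined through tensor products. Each TFN layer adds one spherical-harmonic filter and performs a Clebsch--Gordan tensor product. Degree can therefore grow by a bounded amount per layer, and a degree-$D$ polynomial is reached after $O(D)$ layers. Because every intermediate polynomial of degree $\le D$ decomposes into irreducibles with $\ell\le D$ (a degree-$k$ polynomial map into $W_\ell$ requires $\ell\le k$), the feature order can be capped at $\ellmax=D$. The final invariant read-out is a permutation-invariant sum of the $\ell=0$ channels.

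\textbf{Step 3: match the constants.} The exact figures $2D$ layers and $\ellmax=D$ are read off from the statement of Theorem 2 in \cite{dymMaron}. The factor 2 comes from alternating two kinds of layers: one layer produces each new filtered term, and one layer combines or aggregates the terms.

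The main obstacle is this third step: checking that the parameterization of TFN in \cite{dymMaron}, in particular their depth count, whether self-interaction layers are counted separately, and whether their statement covers exact representation rather than approximation, indeed yields $2D$ and $D$. I would first try to quote their construction verbatim. If their count differs by a constant, I would note that the proposition holds with the corresponding adjusted constants, since only the linear dependence on $D$ is used later.
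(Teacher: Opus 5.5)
Your plan is essentially the paper's: invoke Theorem~2 of \cite{dymMaron} with trivial output representation $W_0$ and recover the invariant by summing the node outputs. Steps~1--2, which re-derive their construction, and the hedging in Step~3 are unnecessary, because that theorem directly gives $2D$ layers and order $D$. The one step you should state explicitly is the lift: Theorem~2 concerns rotation- and translation-invariant, permutation-\emph{equivariant} polynomials $\RR^{3\times n}\to\RR^n$, so for the given invariant $q$ you apply it to $p(X):=n^{-1}q(X)1_n$ and recover $q(X)=\sum_{j=1}^n p_j(X)$ by sum pooling.
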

In \cite{dymMaron}, this proposition was used to prove that one can always obtain a complete model if the maximal irreducible order is taken to be arbitrarily large, where this result is based on the observation that invariant polynomials can approximate all continuous invariant functions. With our result, Theorem \ref{thm:dn2}, we can now for the first time obtain a bound on the order of $\ellmax$ needed:
\begin{corollary}\label{cor:tfn}
Let $n\in \NN$ and set $D=18n^2-18n+9$. Then an invariant TFN model with $2D$ layers, and irreducible features of order $\leq D$, is $(\Gthree,\RR^{3\times n})$ complete.
\end{corollary}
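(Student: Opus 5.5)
Corollary \ref{cor:tfn} follows by combining Theorem \ref{thm:dn2} in the case $d=3$, $H=SE(3)$, with Proposition \ref{prop:TFN}. The plan has three steps.

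\emph{Step 1: plug $d=3$ into Theorem \ref{thm:dn2}.} For $n\geq 3$, so that the hypothesis $1\leq d\leq n$ holds, the $SE(d)$ bound reads
\[
6\cdot 3n^2-4\cdot 3n-3\cdot 2\,(n-1)+3 \;=\; 18n^2-12n-6n+6+3 \;=\; 18n^2-18n+9 \;=\; D .
\]
So $\RR[\RR^{3\times n}]^{\Gthree}_{\leq D}$ is $(\Gthree,\RR^{3\times n})$ complete. Concretely, for any $X,Y$ in different orbits there is an invariant polynomial $p$ of degree at most $D$ with $p(X)\neq p(Y)$.

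\emph{Step 2: small $n$.} Here the theorem's hypothesis $d\leq n$ fails, and the cases $n=1,2$ must be handled separately.
For $n=1$, all point clouds lie in one $SE(3)$ orbit because of translations, so the constant polynomial trivially suffices.
For $n=2$, the orbit is determined by $\|x_1-x_2\|^2$, an invariant of degree $2\leq D=45$.

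\emph{Step 3: realize the separating polynomial by a TFN.} Apply Proposition \ref{prop:TFN} with this $D$: the separating polynomial $p$ is expressed by a TFN with $2D$ layers and irreducible features of order $\leq D$. Hence the family of such TFN models separates $X$ and $Y$, and the family is complete.

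The only point needing care is the interpretation of "a model is complete." It should mean that the class of TFN models with these hyperparameters is complete, i.e., different parameter choices realize different separating polynomials. It should not mean that a single fixed network separates all pairs. One also needs that Proposition \ref{prop:TFN} covers polynomials of degree \emph{at most} $D$, not only degree exactly $D$; lower-degree polynomials are included in $\RR[V]_{\leq D}$, so this holds as stated. The arithmetic in Step 1 is routine.
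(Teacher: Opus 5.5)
Your proof is correct and follows the paper's route exactly: you specialize the $SE(d)$ bound of Theorem~\ref{thm:dn2} to $d=3$ to get $D=18n^2-18n+9$, then realize the separating polynomial with a TFN via Proposition~\ref{prop:TFN}. Your separate treatment of $n=1,2$ is sound; the paper passes over these cases silently even though the hypothesis $d\le n$ of Theorem~\ref{thm:dn2} fails there.
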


The derivation of Proposition \ref{prop:TFN} from the results in \cite{dymMaron}, and some more details of the TFN model used, are discussed in Appendix \ref{app:tfn}.

\subsection{MACE}
MACE \citep{mace} is a very successful $\Gthree$ invariant model. Its main innovation is the employment of tensor products of several high order features. Namely, if $A_1,\ldots,A_\nu $ are features in $W_{\ell_1},\ldots,W_{\ell_\nu} $ respectively, then one can obtain a new  feature $A_1\otimes A_2\otimes \ldots \otimes A_\nu $ in the tensor product representation $W_{\ell_1}\otimes W_{\ell_2}\otimes \ldots\otimes W_{\ell_\nu} $. Moreover, using the Clebsch-Gordan coefficients, this tensor product representation can be mapped equivariantly to a direct sum of $W_\ell$ representations. While previous works like TFN employed such tensor products between pairs of tensors ($\nu=2$), MACE was the first model to use such decompositions for $\nu>2$. Our next proposition gives an explanation for the effectiveness of this procedure, by showing that unlike TFN, for which the construction of \cite{dymMaron} uses depth $2D$ to express invariant polynomials of degree $D$, MACE can do this with depth=1, as long as the maximal number of tensor products (known as the body order) $\numax$, and the maximal representation order $\ellmax$, satisfy 
$\numax=\min\{D,n-1\}$ and $\ellmax= \lfloor D/2 \rfloor$, respectively.

We modify the step of MACE that forms tensor products of the aggregated features so that it follows the atomic cluster expansion (ACE) \citep{ace}: each factor of the tensor product has its own channel mixing, as each factor of an ACE basis function has its own radial index. This modification, and the other differences from the original model, are detailed in Appendix~\ref{app:MACE}. 

\begin{restatable}{proposition}{propmace}
\label{prop:mace}
MACE with $\numax=\min\{D,n-1\}$ and $\ellmax = \lfloor D/2 \rfloor$ can express any $\Gthree $ invariant polynomial of degree $\leq D $. 
\end{restatable}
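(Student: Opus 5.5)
Since any $SE(3)\times S_n$ invariant polynomial is a polynomial in centred coordinates, we first reduce to point clouds centred at a chosen atom, or to relative vectors $x_j-x_i$. MACE computes, at each atom $i$, the ACE-style features built from relative positions $r_{ij}=x_j-x_i$. The plan is to show that the invariant polynomials of degree $\leq D$ are spanned by symmetrized ACE basis functions that MACE with one layer can realize.

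\textbf{Step 1 (reduction to site expansions).} Fix atom $i$ and view an invariant polynomial $p$ as a polynomial in $(r_{ij})_{j\neq i}$. We may take $p$ to be $SO(3)$ invariant and symmetric in $j\neq i$, and write $p=\frac1n\sum_i p_i$ with $p_i$ symmetric in the $n-1$ neighbours. Such a symmetric polynomial in $n-1$ vector variables of degree $\leq D$ can be written, via multisymmetric power sums, as a polynomial in the one-body sums
\[
A_{i,k,\ell m}=\sum_{j\neq i} R_k(|r_{ij}|)\, Y_\ell^m(r_{ij}),
\]
where $R_k(|r|)=|r|^{2k}$. A monomial $|r|^{2k}Y_\ell^m(r)$ has degree $2k+\ell$. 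The standard fact that multisymmetric polynomials in $N$ vector variables are generated by power sums of degree $\leq N$ needs care here. For the products of these sums we use the Fundamental Theorem on multisymmetric functions: any symmetric polynomial is a polynomial in power sums whose total degree equals its own degree. The number of factors in a product is then at most $D$. When working over $n-1$ variables, products of more than $n-1$ one-body sums can be rewritten in terms of products of at most $n-1$ sums of higher degree, as in ACE completeness arguments. Hence the correlation order is $\leq\min\{D,n-1\}$.

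\textbf{Step 2 (bounding $\ell$).} In a product $\prod_{s=1}^{\nu}A_{i,k_s,\ell_s m_s}$ of degree $\leq D$, we have $\sum_s(2k_s+\ell_s)\leq D$. The full product is then contracted by Clebsch--Gordan coefficients to an invariant. The concern is that a single factor may have $\ell_s$ up to $D$, whereas we want $\ellmax=\lfloor D/2\rfloor$. The key observation is that contracting to $\ell=0$ requires each $\ell_s\leq\sum_{t\neq s}\ell_t$, by the triangle inequality for coupling. This gives $2\ell_s\leq\sum_t\ell_t\leq D$, hence $\ell_s\leq\lfloor D/2\rfloor$. The intermediate coupled representations can also be kept $\leq\lfloor D/2\rfloor$ by coupling in a balanced order, or via the generalized CG tensor that MACE uses directly.

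\textbf{Step 3 (realization by MACE).} With the modification that each factor has its own channel mixing, the first-layer MACE features $A_{i,k\ell m}$ provide every radial index $k$ through the radial MLP channels, which express $R_k$ exactly on polynomial radial bases. The product basis $B_i$ then provides every CG-contracted product of $\nu\leq\numax$ factors with $\ell_s\leq\ellmax$. A linear readout summed over $i$ gives $p$.

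The main obstacle is Step 1's bound $\nu\leq n-1$ combined with the degree bound. Rewriting high-correlation products in terms of lower ones must not raise any $\ell_s$ above $\lfloor D/2\rfloor$. I would handle this by applying the reduction at the level of invariants of total degree $\leq D$, then redoing Step 2's argument on the result.
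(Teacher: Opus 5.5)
Your proposal is correct and follows essentially the paper's proof. At each atom you reduce to products of at most $D$ power sums $\sum_{j\neq i}(x_j-x_i)^{\alpha}$, cut the number of factors to $n-1$ by a degree-preserving Newton-type identity, expand in $|r|^{2k}Y_\ell^m(r)$, and bound each $\ell_s\le\lfloor D/2\rfloor$ by the coupling triangle inequality. Two differences are cosmetic. You reach the power sums directly through multisymmetric functions of the relative vectors, whereas the paper goes through the equivariant Proposition~\ref{prop:mace_equi} and the characterization of permutation-equivariant polynomials. You also cite the factor-reduction identity from ACE, whereas the paper proves it as Lemma~\ref{lem:newton}.

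The one step you leave implicit is why the invariant $p_i$ is a combination of $W_0$-contractions of these products at all. The paper gets this by averaging the linear functionals $\Lambda_{\vk}$ over $SO(3)$ and applying Schur's lemma, and that is precisely what justifies discarding the products with $W_0\not\subseteq\bigotimes_s W_{\ell_s}$.
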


As a corollary of this proposition and Theorem \ref{thm:dn2}, we obtain the following completeness result for MACE
\begin{corollary}\label{cor:maceLocal}
Let $n\in \NN$ and set $D=18n^2-18n+9 $. Then a MACE model with depth 1, $\numax=n-1$ and $\ellmax = \lfloor D/2 \rfloor$ is $(\Gthree,\RR^{3\times n})$ complete.
\end{corollary}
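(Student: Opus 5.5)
The statement to prove is Corollary~\ref{cor:maceLocal}. I will combine Theorem~\ref{thm:dn2} with Proposition~\ref{prop:mace}.

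\textbf{Step 1: the degree bound for $d=3$.} Specialize Theorem~\ref{thm:dn2} to $H=SE(3)$. This requires $1\leq d\leq n$, so first assume $n\geq 3$. The bound gives
\[
D(SE(3)\times S_n,\RR^{3\times n})\leq 18n^2-12n-6(n-1)+3=18n^2-18n+9=D .
\]
So for any $X,Y$ in different $SE(3)\times S_n$ orbits there is an invariant polynomial $p$ of degree $\leq D$ with $p(X)\neq p(Y)$.

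\textbf{Step 2: realizing $p$ in MACE.} Apply Proposition~\ref{prop:mace} with this $D$. Since $D\geq n-1$ for every $n\geq 1$, we have $\min\{D,n-1\}=n-1$. Hence a depth-one MACE with $\numax=n-1$ and $\ellmax=\lfloor D/2\rfloor$ can express $p$. The realizing instance then takes different values on $X$ and $Y$. Completeness of the family of such models follows directly from the definition in Subsection~\ref{sub:notation}.

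\textbf{Step 3: the small cases $n\leq 2$.} Here the hypothesis $d\leq n$ fails and Theorem~\ref{thm:dn2} does not apply directly, so this is the only real obstacle. For $n=1$ every configuration lies in a single orbit, so there is nothing to separate. For $n=2$ the orbit is determined by $\|x_1-x_2\|^2$, which is an invariant polynomial of degree $2\leq 9=D$. Proposition~\ref{prop:mace} therefore again yields a separating MACE model. If one prefers a uniform argument instead of this case check, one can pad with points: the $SE(3)\times S_n$ invariants of degree $\leq D$ already contain the distance polynomials, and these suffice when $n\leq 2$.
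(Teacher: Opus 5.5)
Your proof is correct and is the paper's argument: specialize Theorem~\ref{thm:dn2} to $SE(3)$, which gives exactly $18n^2-18n+9$, and realize the separating invariant with Proposition~\ref{prop:mace}, using $\min\{D,n-1\}=n-1$. Your separate treatment of $n\leq 2$, where the hypothesis $d\leq n$ of Theorem~\ref{thm:dn2} fails, covers a case the paper passes over silently; note only that for $n=2$ one has $D=45$, not $9$, which does not affect $2\leq D$.
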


The details of the MACE model used for Proposition \ref{prop:mace}, and the proof of the proposition, are given in Appendix \ref{app:MACE}. In Appendix \ref{app:MACE_multi} we show that when message passing is on all points, more iterations can be used instead of a high body order, for example $\numax=2$ with $\log_2(n+1)-1$ iterations (Corollary \ref{cor:mace_multi}).    

\subsection{Improved MACE Completeness for locally regular point clouds}\label{lg:sec}
The completeness results of Section~\ref{sec:models} require invariants of degree $18n^2-18n+9$, a quantity that grows with the number of points in the cloud. This degree is likely too high for most practical MACE implementations. Moreover, MACE typically applies message passing only within a fixed cutoff radius \citep{mace}. Accordingly, the relevant scale is not $n$ but the maximal number $K$ of points inside a
single cutoff ball. In this subsection we isolate a set of properties of a point cloud which
make this precise: on the class of clouds satisfying them, a \emph{local} model of MACE type is
complete using irreducible features and invariants of degree at most $6dK^2$, a bound
independent of $n$ and of the depth of the model. Restricted to such a class, this strengthens the MACE completeness
corollary of Section~\ref{sec:models}. Conversely, on a similar class, we prove a lower bound showing that MACE cannot separate without irreducibles and correlation orders whose total degree is $\sim K^2 $. 

Below we will fix  a cutoff radius $\rho>0$ and an integer $K$. 
In order to focus on applications to MACE, we will set the dimension to $d=3$. Proofs are given in Appendix~\ref{app:local} and they are written for general $d\ge2$, except where the constants are specific to $d=3$.

\paragraph{Environments.}
Let $X=(x_1,\dots,x_n)\in\R^{d\times n}$ and write $[n]:=\{1,\dots,n\}$. For $i\in[n]$ let
\[
\mathcal B^X_1(i):=\{\,j\in[n]:\ \|x_j-x_i\|\le\rho\,\}
\]
be the set of indices in the cutoff ball at $x_i$. Two indices $i,j\in[n]$ are
\emph{$\rho$-adjacent} if $0<\|x_i-x_j\|\le\rho$ and the columns indexed by $\mathcal B^X_1(i)\cap\mathcal B^X_1(j)$ contain $d+1$ affinely independent points.

\begin{definition}[locally regular point clouds]\label{lg:def:reg} 
A matrix $X=(x_1,\dots,x_n)\in\R^{d\times n}$ with $n\ge2$ is \emph{$(\rho,K)$-regular} if
\begin{description}
\item[\textnormal{(D)}] the columns $x_1,\dots,x_n$ are pairwise distinct;
\item[\textnormal{(K)}] $\bigl|\{\,j\in[n]:\|x_j-y\|\le\rho\,\}\bigr|\le K$ for every $y\in\R^d$;
\item[\textnormal{(O)}] the $\rho$-adjacency graph on $[n]$ is connected.
\end{description}
We say that $X \in \RR^{d\times n}$ is $(\rho,K)$-\textbf{ strongly regular}, if $X$ is $(\rho,K)$ regular, and additionally, $X$ does not contain two different ordered subsets of $d+1$ points which are congruent. Here two ordered subsets $(p_0,\ldots,p_d)$ and $(p_0',\ldots,p_d')$ are congruent if there exist $g\in O(d)$ and $b\in \RR^d$ such that $p_s'=gp_s+b$ for all $s=0,\ldots,d$. We denote the collection of $(\rho,K)$ regular and $(\rho,K)$ strongly regular  point clouds by $\mathcal R_{\rho,K}$ and $\mathcal R^S_{\rho,K} $, respectively. 
\end{definition}

Hypothesis \textnormal{(K)} bounds the local multiplicity from above and is what keeps
environments small. Hypothesis \textnormal{(O)} bounds it from below and is what makes the
environments rigid: connectivity lets information propagate through the cloud, while the overlap
requirement built into $\rho$-adjacency forces the cutoff balls of adjacent points to meet in a
set of full affine dimension.

Our main result of this section states that, for $(\rho,K)$ strongly regular point clouds, a
single iteration of local MACE with irreducible order $\sim K^2$ is already complete, and that
further iterations preserve completeness:
\begin{restatable}{theorem}{maceLocal}\label{thm:maceLocal}
Let $K\leq n$ be natural numbers and $\rho>0$, and let $G=\Gthree$. Set
$$D(j)=18j^2-18j+9,\qquad D=D(K). $$
Then local MACE with cutoff $\rho$, $T\geq 1$ iterations, $\numax=K-1$ and
$\ellmax=\lfloor D/2 \rfloor$, is  $(G,\mathcal R^S_{\rho,K}) $ complete.
\end{restatable}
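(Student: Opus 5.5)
The plan is to reduce global completeness to a local separation statement, and then to show that rigidity lets local data determine the whole cloud. Fix $X,Y\in\mathcal R^S_{\rho,K}$. Suppose that local MACE with one iteration assigns the same multiset of node features to $X$ and $Y$; we must show $[X]=[Y]$ under $G$.

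\textbf{Step 1 (local completeness).} For a node $i$, the environment $E^X(i)=(x_j-x_i)_{j\in\mathcal B_1^X(i)}$ is a centered point cloud with at most $K$ points by (K); we may pad it if necessary. The invariants of this environment under $SO(3)\times S_{\mathcal B_1^X(i)}$ fixing $x_i$ are exactly what the one-layer node feature of MACE computes. Two issues need handling:
\begin{itemize}
\item the permutation only acts on the $\le K-1$ neighbours (the centre is fixed), which explains $\numax=K-1$;
\item the number of neighbours differs between nodes.
\end{itemize}
Theorem~\ref{thm:dn2} with $d=3$ and $n=K$ gives separating invariants of degree $\le 6\cdot3K^2-12K-6(K-1)+3=D(K)$ for $SO(3)\times S_K$. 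Proposition~\ref{prop:mace}, applied to neighbourhoods of size $\le K$ with $\ellmax=\lfloor D/2\rfloor$, shows that the node feature can express each of them. Varying neighbourhood sizes can be absorbed because the count of neighbours is itself an invariant (degree $0$ in the radial channel). A finite-dimensional family of such invariants separates any two environments. Since the readout sums node features, which are injective on environments via a standard multiset-injective encoding (sum of a separating continuous family is complete on multisets), equal outputs imply that the multisets of environments $\{[E^X(i)]\}_i$ and $\{[E^Y(i)]\}_i$ coincide up to $SO(3)$.

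\textbf{Step 2 (gluing).} This is the main obstacle. Pick node $1$ of $X$ and a node $i'$ of $Y$ whose environment is equivalent, via some $g\in SE(3)$, to that of $x_1$. Replace $Y$ by $g^{-1}Y$, so the balls coincide pointwise. Now propagate along the $\rho$-adjacency graph, which is connected by (O). Let $j$ be adjacent to $i$, with $y_{j'}=x_j$ already matched. The environment of $y_{j'}$ equals the environment of $x_j$ up to some rigid motion $h$. Both environments contain the common $d+1$ affinely independent points of $\mathcal B_1(i)\cap\mathcal B_1(j)$, whose positions are already identified. Strong regularity (no two distinct congruent ordered $(d+1)$-subsets) then forces $h$ to fix those points, so $h=\mathrm{id}$. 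Hence the ball around $x_j$ agrees with that around $y_{j'}$. Induction over a spanning tree shows every $x_j$ appears in $Y$. Distinctness (D), together with the equal cardinalities given by the multiset count, yields $X=Y$ as sets. Therefore $Y\in[X]$. A subtle point to check is that the environment match is between environments of $Y$ realised at the specific node $j'$; the multiset equality only guarantees some node of $Y$ carries it. Strong regularity resolves this, since the ordered affinely independent $(d+1)$-tuple locates the node uniquely.

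\textbf{Step 3 (more iterations).} For $T>1$, later layers are equivariant functions of earlier features. We will show that one can choose parameters so that the first-layer invariant features are passed through the skip/residual channel into the readout. Completeness is a property of the expressible function class, so adding layers that can realise the identity on the scalar channel preserves it.
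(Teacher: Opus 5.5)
Your proposal is correct and follows the paper's proof. Its ingredients are the same: local separation of environments from Theorem~\ref{thm:dn2} together with the point-count channel, where the node-level tool is really Proposition~\ref{prop:mace_equi} applied to $p_i(Z)=\phi(z_1-z_i,\dots,z_n-z_i)$; multiset injectivity of the summed readout; gluing along the connected $\rho$-adjacency graph via strong regularity; and the identity/skip parameter choice of Proposition~\ref{prop:mace_multi} for $T>1$. The subtlety you flag is resolved as you suggest: for any node $k$ of $Y$ whose environment class matches that of $x_j$, the motion $h$ carries the overlap tuple (already known to lie in $Y$) to a congruent ordered tuple of $Y$, so strong regularity of $Y$ forces $h=\mathrm{id}$ and $y_k=x_j$, and your direct comparison of $X$ with $Y$ is just a rephrasing of the paper's reconstruction of $X$ from the multiset $\{E_i\}$.
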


\begin{proof}[Proof Idea]
The main idea is that, after a single MACE iteration, one obtains a set of node features $\{E_i(X), \, i=1\ldots,n\} $. These features are invariants, so reconstructing an environment from one of them is a separation problem for point clouds of $K$ points, and this is where Theorem \ref{thm:dn2} enters: it bounds the degree at which invariants separate such clouds by $D=D(K)$, and Corollary \ref{cor:maceLocal}, which is that theorem combined with the expressivity of one MACE layer, turns the bound into the degree assumption made here. Accordingly $\mathcal B^X_1(i)$, the local environment of each node $i$, can be reconstructed from $E_i(X)$, up to a rotation and a translation. Strong regularity then implies that, whenever the two environments share $d+1$ affinely independent points (up to congruence), these points can be identified with each other; by the overlap requirement in the definition of $\rho$-adjacency this is the case for every adjacent pair $i,j$, so the environments of adjacent nodes can be glued. Since the $\rho$-adjacency graph is connected, and the intersections contain $d+1$ affinely independent points, the whole point cloud can be reconstructed uniquely. A full proof is given in the appendix.
\end{proof}

We accompany the above by the following lower bound. In this lower bound, the hyperparameters $\ellmax,\numax$ correspond to the maximal irreducible order and maximal correlations used in the MACE iterations, and $\nuread$ denotes the correlations used in the final readout stage.

\begin{proposition}[proved in Appendix~\ref{app:local:lower:rings}]\label{prop:localLB}
Let $K\geq7$, and let a local MACE model with cutoff $\rho$, message-passing depth $T$ and hyperparameters $\ellmax,\numax,\nuread$ be $(\Gthree,\mathcal R_{\rho,K})$ complete on clouds of $K$ points. Then
$$\nuread\cdot\ellmax\cdot\sum_{t=1}^{T}\numax^{\,t}\ \geq\ 2\Bigl\lfloor\frac{K-1}{2}\Bigr\rfloor ,$$
and if the readout is that of \eqref{eq:Ei}--\eqref{eq:E}, a function of the invariant channels at each point followed by a sum, then already $\ellmax\sum_{t=1}^{T}\numax^{\,t}\geq2\lfloor(K-1)/2\rfloor$.
\end{proposition}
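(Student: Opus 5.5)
The plan is to run a one-parameter version of the Fourier argument behind Proposition~\ref{prop:lowerDbound}. I would find a family of $(\rho,K)$-regular clouds $X(t)$ of $K$ points with two properties. First, every distance that enters any radial function of the model is independent of $t$. Second, $X(t+2\pi/m)$ lies in the $G$-orbit of $X(t)$, while $X(0)$ and $X(t)$ lie in different orbits for small $t\neq 0$. Here $m=\lfloor (K-1)/2\rfloor$.

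\textbf{Construction.} Place a central point at the origin. Take two horizontal rings of $m$ points each on the unit sphere, at heights $h$ and $-h$, forming regular $m$-gons, and add one spare point on the axis if $K-1$ is odd. The upper ring is rotated about the $z$-axis by angle $t$ and the lower ring by $-t$. I would choose $h$, $\rho$ and the ring radius so that the following hold.
\begin{itemize}
\item Each ring point sees, within $\rho$, only the center and its own ring neighbours. This makes the distances within each environment constant in $t$.
\item Cross-ring distances stay larger than $\rho$. This is the reason for the height offset.
\item The center, and the axis point if present, sees everything at a fixed distance.
\end{itemize}
Conditions (D) and (K) are immediate. For (O), each ring point, its ring neighbours and the center span a non-degenerate cone, so they contain $d+1=4$ affinely independent points in the overlap of the cutoff balls. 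The center is adjacent to every ring point, which connects the adjacency graph.

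\textbf{Symmetry and non-equivalence.} The orbit only depends on the relative angle $\theta=2t$. Shifting $\theta$ by $2\pi/m$ is realized by a $z$-rotation combined with a cyclic relabelling of one ring, so the orbit is $2\pi/m$-periodic in $\theta$. For generic small $\theta$ the clouds are not congruent, because the cross distances between the two rings change. Since those cross distances exceed $\rho$, however, the model never sees them through a radial function.

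\textbf{Frequency counting.} Along the family every radial factor is a constant, so each equivariant quantity is a polynomial in the unit direction vectors, which rotate with $e^{\pm it}$. I would then track the angular frequency layer by layer.
\begin{itemize}
\item A degree-$\ell$ spherical harmonic of a direction has $t$-frequency at most $\ell\le\ellmax$.
\item A $\numax$-fold tensor product, followed by a Clebsch--Gordan contraction back to order at most $\ellmax$, multiplies the frequency bound by at most $\numax$.
\item Summing over $T$ iterations, the node features after the message-passing stage have frequency at most $\ellmax\sum_{t=1}^T\numax^{\,t}$.
\item The readout correlates up to $\nuread$ of them, so the final invariant is a trigonometric polynomial in $t$ of degree at most $L=\nuread\,\ellmax\sum_t\numax^{\,t}$.
\end{itemize}
Rewritten in $\theta=2t$, this is a trigonometric polynomial of degree at most $L/2$ in $\theta$. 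It is also $2\pi/m$-periodic in $\theta$, so only frequencies that are multiples of $m$ survive. If $L<2m$, every output is therefore constant in $\theta$, and the model cannot separate $X(0)$ from $X(t)$. This contradicts completeness. In the sum-of-invariant-channels readout, the invariant channels already carry the frequency bound without the factor $\nuread$, which gives the second inequality.

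\textbf{Main obstacle.} The hardest step is making the frequency bookkeeping rigorous. It is not enough to bound frequency by ``degree''. I would write each $\ell$-feature as a sum of terms $e^{ikt}$ with $|k|$ bounded, and use the $z$-rotation covariance: rotating the upper ring by $t$ acts on a $W_\ell$ feature through the weights $e^{i\mu t}$ with $|\mu|\le\ell$. I would then check that Clebsch--Gordan contractions and the message sums preserve this bound. One further check: the two rings rotate in opposite directions, so I must confirm the bound holds in $t$ rather than separately per ring. I expect that to follow because each ring contributes frequencies $\pm$ its own weights, both bounded by the same $\ell$.
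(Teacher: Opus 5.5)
\textbf{There is a genuine gap in the last step.} You attach the frequency bound and the $2\pi/m$-periodicity to ``the final invariant'', but no single object has both properties.

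The cloud output is an arbitrary function of the multiset of site outputs; for the readout of \eqref{eq:Ei}--\eqref{eq:E} it is $\sum_i\mathrm{MLP}(\cdot)$. It is periodic in $\theta$, but it has no frequency bound. Pooling a nonlinear function over a ring that the symmetry permutes multiplies frequencies: for coprime $p,q$ one has $\sum_{j=1}^{p}\cos^{p}(q\theta+2\pi qj/p)=\tfrac{p}{2^{p-1}}\cos(pq\theta)+\mathrm{const}$. With rings of coprime sizes, your cloud-level reasoning would therefore yield a bound of order $\mathrm{lcm}(p,q)\sim K^2$, which this identity refutes. It only gives the right number because you chose equal rings.

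The site outputs do obey your budget $L$. However, the symmetry you invoke, a $z$-rotation combined with a cyclic relabelling of one ring, permutes the sites of the relabelled ring. Periodicity of the orbit therefore says nothing about an individual ring site's output. The second inequality has the same defect: a frequency bound $\omega_T=\ellmax\sum_t\numax^{\,t}$ on the invariant channels gives nothing about $\sum_i\mathrm{MLP}(\cdot)$ unless each channel is shown to be constant.

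\textbf{The missing idea is to argue site by site.} Then use the fact that if all site outputs are constant, the cloud output is constant.
\begin{itemize}
\item For $a_j$ on the upper ring, shift only the \emph{lower} phase by $2\pi/m$. This leaves the cloud and $a_j$ unchanged.
\item Compose with the rotation $g_{-\pi/m}$ about the axis. This returns to your symmetric family and carries $a_j(t)$ to $a_j(t-\pi/m)$.
\item Hence $s_{a_j}$ is $\pi/m$-periodic in $t$, so its nonzero $t$-frequencies are multiples of $2m$. It is therefore constant when $L<2m$.
\item The lower ring is handled symmetrically. The centre and the axis point are fixed by both relabellings.
\end{itemize}
Running the same per-site argument on each invariant channel gives the second inequality. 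This is the paper's argument, in its two-phase form: $m_1+m_2=0$ and $m_2\equiv0\pmod q$ at $a_j$.

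If continuous functions may be applied to invariant channels in intermediate layers, as in the paper's model, you also need this argument inductively over layers. It shows those functions act on constant channels and so do not break the frequency budget.

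\textbf{A minor point on the construction.} Your two bullet points are inconsistent about the axis point, since visibility is symmetric. The ring points must see the axis point, otherwise it is isolated in the $\rho$-adjacency graph and (O) fails. Letting them see it is harmless because those distances are constant. Alternatively, use rings of sizes $m$ and $m+1$ as the paper does.
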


\begin{proof}[Proof idea]
The counterexample is a local version of the counter-rotating polygons of Theorem~\ref{thm:lower-quadratic}: for $p,q\ge3$ we form a family of clouds of $K=p+q+1$ points in $\RR^3$, each consisting of a point at the origin and two rings of $p$ and $q$ points with free phases $\alpha,\beta$ around a common axis (see Figure~\ref{fig:rings}). The cutoff radius is such that every distance the model can read is constant along the family, the cross-ring distances that do vary all exceeding the cutoff, so only angular resolution is at stake, while the sole invariant $\alpha-\beta$ is defined modulo $2\pi/\mathrm{lcm}(p,q)$. Each spherical harmonic of degree $\ell$ in a message contributes at most $\ell$ to the frequency in $(\alpha,\beta)$ of the features, and products of $\numax$ messages add frequencies, which gives the budget on the left. Shifting $\beta$ by $2\pi/q$ leaves the cloud and each point of the first ring fixed, and a rotation about the axis shifts $\alpha$ and $\beta$ together, so the output at such a point is constant unless one of its Fourier modes $(m_1,m_2)$ has $|m_1|+|m_2|\geq 2q$; symmetrically, $2p$ for the second ring, and more for the center. A model whose point outputs are all constant separates nothing, and choosing $p$ and $q$ as equal as possible gives $2\min(p,q)=2\lfloor (K-1)/2\rfloor$.
\end{proof}
The significance of this proposition is that MACE with the modestly sized hyperparameters usually selected in practice is not complete even under the regularity assumptions. To illustrate this: for $\ellmax=\numax=3$ and the readout of \eqref{eq:Ei}, the family is not separated once $K\geq11$ for $T=1$, $K\geq39$ for $T=2$ and $K\geq119$ for $T=3$. Figure~\ref{fig:rings} shows a $12$-point configuration whose relative polygon rotations can be distinguished by MACE only if
\begin{wrapfigure}{r}{0.28\linewidth}
  \vspace{-1em}
  \centering
  \includegraphics[
    width=\linewidth,
    trim=60bp 45bp 60bp 55bp,
    clip
  ]{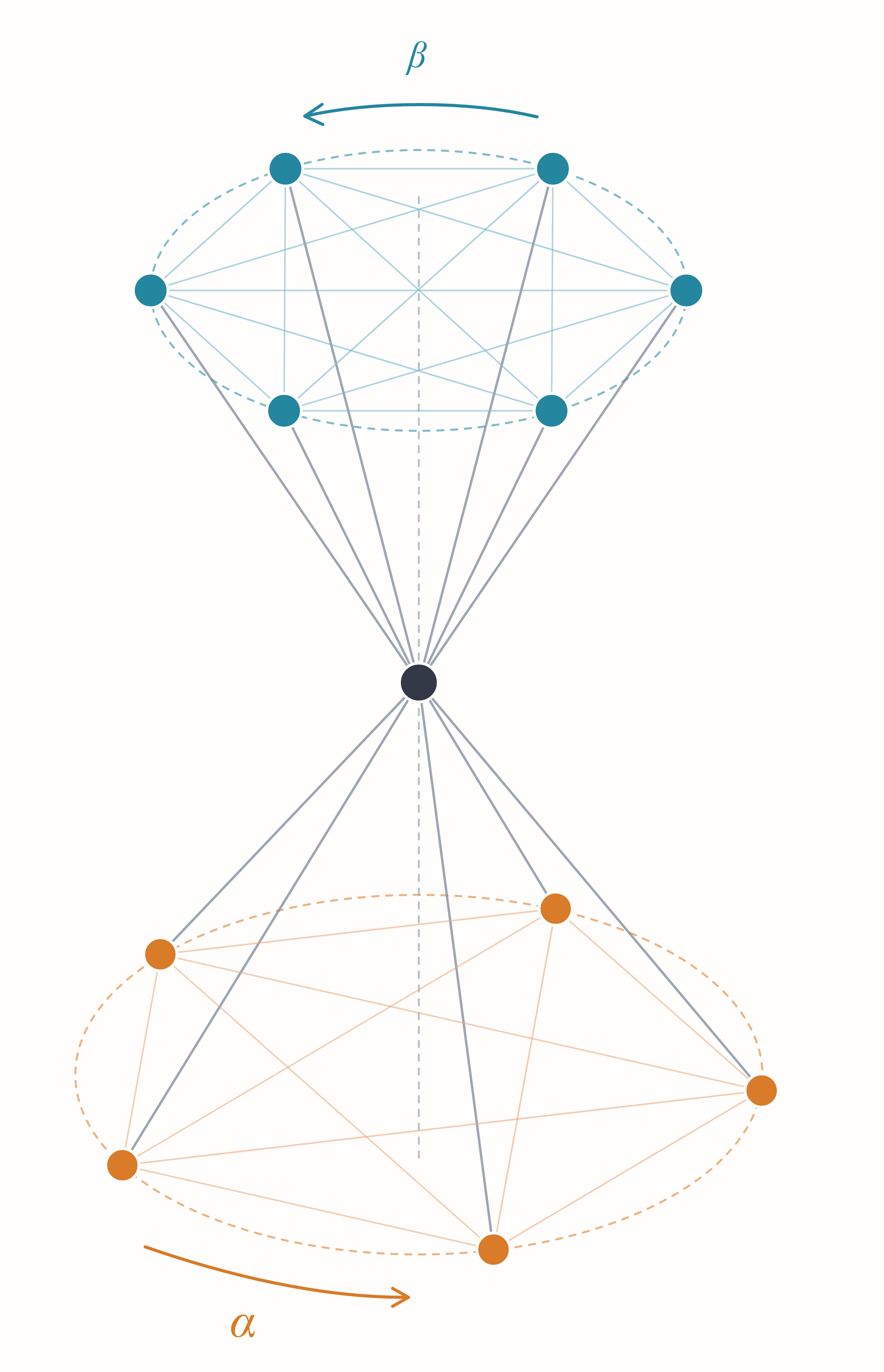}
  \caption{A family of regular point clouds which cannot be separated by mace without high degree/correlation order. See Proposition \ref{prop:localLB} for details.}
  \label{fig:rings}
  \vspace{-1em}
\end{wrapfigure}
\[
\nu_{\mathrm{read}}\,\ell_{\max}\sum_{i=1}^{T}\nu_{\max}^{\,i}\ \geq\ 2\Bigl\lfloor\frac{11}{2}\Bigr\rfloor=10 .
\]
\section{Conclusion, Limitations and Future Work}
In this paper we showed that the degree of separation for point clouds is lower bounded by $\sim n^2$ and upper bounded by $\sim dn^2$. We showed this implies a $\sim n^2$ lower bound and $\sim n^3$ upper bound for weighted graphs, and used the upper bound for point clouds to derive completeness results for TFN and MACE with irreducible order which grows quadratically in $n$, or under additional assumptions, quadratically in the maximal neighborhood size. We also provide a lower bound for the MACE architecture. To the best of our knowledge these are the first quantitative results of this kind.

In the context of polynomial separation, the remaining gap is that the upper bound  exceeds the lower bound by a factor of order $d$. This is essentially equivalent to closing the gap between the quadratic lower bound and cubic upper bound attained for weighted graphs. In the context of invariant networks there is a more substantial gap:  Our results used the fact that TFN and MACE can construct invariant polynomials up to a degree determined by the irreducibles used. However, these models can also express non-polynomial functions. Thus, we do not have tight lower bounds for these models. An interesting open question is to give tight upper and lower bounds on the degree of separation of these models. In particular, are MACE and TFN complete with an irreducible degree that does not depend on $n$? For regular point clouds Proposition~\ref{prop:localLB} answers it in part: with bounded depth and correlation order the irreducible degree must grow at least linearly in the neighborhood size. Whether linear growth suffices, the question for the class of all point clouds, and the case of MACE architectures with unrestricted neighborhoods, remain open. 

\subsection*{AI use statement}

We used generative AI tools to edit and improve the writing, check manuscript formatting, and assist in verifying the correctness of our results. We did not use generative AI tools for research ideation. All AI-assisted suggestions and checks were reviewed by the authors. The authors take full responsibility for the final content of this work, including all claims, proofs, and conclusions.
\subsection*{Acknowledgments}

S.H. and N.D. acknowledge funding by the Israel Science Foundation grant no. 272/23 and \mbox{DFG-ISF} grant no. 3597/26. S.H. is supported by the Gloria and Ken Levy Foundation Fellowship. H.D. was partially supported by a Simons Foundation International grant administered by the Simons Foundation (SFI-MPS-SFM-00010862). M.P. was supported by the Chilean ANID basal center CenIA, code FB210017.

\bibliography{iclr2027_conference}
\bibliographystyle{iclr2027_conference}
\newpage
\appendix

\section{Further Related Work}\label{app:rl-wrk}

\paragraph{Completeness and universality in Geometric Deep Learning}
Understanding the expressive power of machine learning models for point clouds allows researchers to have a benchmark for model comparison. \citet{dymMaron} characterized the polynomial expressivity of models that utilize high-dimensional tensors, often termed equivariant geometric machine learning models, and proved such models are universal at the limit of tensor order \citep[e.g.,][]{TFN}. Subsequent work used tools developed there to prove universality of novel architectures \citep{weitao_clofnet_2021, gasteiger_gemnet_2021, DBLP:conf/tagml/FinkelshteinBMD22}. 

Universality was also proven for sufficiently high-order geometric graph neural networks (GNNs), which process pairwise invariants and computes high-order correlations among them \citep{DBLP:conf/icml/HordanAD24,DBLP:conf/aaai/HordanAGD24, DBLP:conf/nips/RoseKRPB23, DBLP:conf/nips/LiWHZ23}. Yet equivariant models and GNNs are structured differently, with nonlinearities embedded throughout the components of GNNs, in contrast with equivariant models which are composed of linear maps except for the radial functions and readout aggregation, so those results do not carry over to equivariant GNNs. High order GNNs incur high computational cost and poor scalability compared with equivariant models \citep{mace}, leaving quantitative completeness in equivariant models an open question, up to this manuscript.

\section{Missing proofs}
\center*
\begin{proof}
Throughout, let $H_0\in\{O(d),SO(d)\}$, and let $H$ denote the corresponding group with translations, i.e.\ $H=E(d)$ if $H_0=O(d)$ and $H=SE(d)$ if $H_0=SO(d)$. We prove
\[
D(H\times S_n,V)=D(H_0\times S_n,\Vcent)\leq D(H_0\times S_n,V),
\]
which gives \eqref{eq:equiv1} and \eqref{eq:equiv2} simultaneously.

\emph{Matrix form of the action.} For $\sigma\in S_n$ let $P_\sigma$ be the permutation matrix with $P_\sigma e_j=e_{\sigma(j)}$. The $j$-th column of $XP_\sigma^T$ is $x_{\sigma^{-1}(j)}$, so the action defined in \Cref{sub:notation} reads
\[
(t,R,\sigma)\cdot X \;=\; R\,(X-t1_n^T)\,P_\sigma^T ,
\]
and $H_0\times S_n$ is identified with the subgroup $\{(0,R,\sigma)\}$ of $H\times S_n$. Note that $P_\sigma 1_n=1_n$ and hence also $1_n^TP_\sigma^T=1_n^T$.

\emph{The centering map.} Let $C:=I_n-\frac1n1_n1_n^T$ and define the linear map $c:V\to V$, $c(X):=XC=X-\frac1nX1_n1_n^T$. We record four properties.
\begin{enumerate}
    \item[(i)] $c(V)\subseteq\Vcent$ and $c(X)=X$ for $X\in\Vcent$. 
    \item[(ii)] $X$ and $c(X)$ lie in the same $H\times S_n$-orbit.
    \item[(iii)] $c$ is equivariant: $c\big((t,R,\sigma)\cdot X\big)=(0,R,\sigma)\cdot c(X)$ for all $(t,R,\sigma)\in H\times S_n$. 
    \item[(iv)] $\Vcent$ is $H_0\times S_n$-invariant: if $X1_n=0$ then $RXP_\sigma^T1_n=RX1_n=0$.
\end{enumerate}

\emph{Claim: $X,Y\in V$ lie in the same $H\times S_n$-orbit if and only if $c(X),c(Y)$ lie in the same $H_0\times S_n$-orbit.} 

If $Y=(t,R,\sigma)\cdot X$, then $c(Y)=(0,R,\sigma)\cdot c(X)$ by (iii). Conversely, if $c(Y)=(0,R,\sigma)\cdot c(X)$, then $c(X)$ and $c(Y)$ lie in the same $H\times S_n$-orbit, since $H_0\times S_n\subseteq H\times S_n$. By (ii), $X$ lies in the orbit of $c(X)$ and $Y$ lies in the orbit of $c(Y)$, so $X$ and $Y$ lie in the same $H\times S_n$-orbit.

We will repeatedly use that composing a polynomial with a linear map, or restricting it to a linear subspace, yields a polynomial of no larger degree.

We now show that $D(H\times S_n,V)= D(H_0\times S_n,\Vcent)$ by proving inequality in both directions
\emph{Step 1: $D(H\times S_n,V)\leq D(H_0\times S_n,\Vcent)$.} Set $D:=D(H_0\times S_n,\Vcent)$ and let $X,Y\in V$ lie in different $H\times S_n$-orbits. By the claim, $c(X),c(Y)\in\Vcent$ lie in different $H_0\times S_n$-orbits, so there is $p\in\RR[\Vcent]^{H_0\times S_n}_{\leq D}$ with $p(c(X))\neq p(c(Y))$. Set $q:=p\circ c\in\RR[V]_{\leq D}$. By (iii), for all $(t,R,\sigma)\in H\times S_n$,
\[
q\big((t,R,\sigma)\cdot X\big)=p\big((0,R,\sigma)\cdot c(X)\big)=p(c(X))=q(X),
\]
so $q$ is $H\times S_n$-invariant and separates $X$ from $Y$. Hence $\RR[V]^{H\times S_n}_{\leq D}$ is $(H\times S_n,V)$ complete.

\emph{Step 2: $D(H_0\times S_n,\Vcent)\leq D(H\times S_n,V)$.} Set $D':=D(H\times S_n,V)$ and let $X,Y\in\Vcent$ lie in different $H_0\times S_n$-orbits. By (i), $c(X)=X$ and $c(Y)=Y$, so by the claim $X,Y$ lie in different $H\times S_n$-orbits of $V$. Hence there is $q\in\RR[V]^{H\times S_n}_{\leq D'}$ with $q(X)\neq q(Y)$. Its restriction $q|_{\Vcent}$ is a polynomial on $\Vcent$ of degree $\leq D'$. It is $H_0\times S_n$-invariant, because $\Vcent$ is $H_0\times S_n$-invariant by (iv) and $H_0\times S_n\subseteq H\times S_n$. Also $q|_{\Vcent}(X)\neq q|_{\Vcent}(Y)$. Hence $\RR[\Vcent]^{H_0\times S_n}_{\leq D'}$ is $(H_0\times S_n,\Vcent)$ complete.

Finally, the inequality $D(H_0\times S_n,\Vcent)\leq D(H_0\times S_n,V)$ follows directly from the fact that $\Vcent$ is an invariant subspace of $V$.
\end{proof}

\graphs*
\begin{proof}
We first prove $D(O(n)\times S_n,\RR^{n\times n})\leq 2D(S_n,\Symn)$. Let $X,Y\in \RR^{n\times n}$ from different $O(n)\times S_n $ orbits. Then the Gram matrices $X^TX $ and $Y^TY $ are not in the same $S_n$ orbit, since otherwise, we would have that $X^TX=P^TY^TYP $ for some permutation matrix $P$, implying that $X $ and $YP$ have the same Gram matrix, and thus there exists an orthogonal matrix $R\in O(n)$ such that $X=RYP $, contradicting the assumption that $X,Y$ are not from the same orbit. It follows that there exists an $S_n$ invariant polynomial $p$ of degree $\leq  D(S_n,\Symn)$ such that $p(X^TX)\neq p(Y^TY) $. The mapping $X\mapsto p(X^TX) $ is a $O(n)\times S_n $ invariant polynomial of degree $2 \deg(p)$ and thus we have $D(O(n)\times S_n,\RR^{n\times n})\leq 2D(S_n,\Symn)$.

We now prove the converse direction $D(O(n)\times S_n,\RR^{n\times n})\geq 2D(S_n,\Symn)$. Let $A,B\in \Symn$ which are not from the same $S_n$ orbit. There exists some $t>0$ such that $A+tI_n $ and $B+tI_n $ are both positive definite. It follows that for some $X,Y\in \RR^{n\times n} $, 
$$A+tI_n=X^TX, B+tI_n=Y^TY.  $$
 $X,Y$ are not in the same $O(n)\times S_n $ orbit (for that would imply that $A,B$ are in the same $S_n$ orbit). Therefore there exists an $O(n)\times S_n $ invariant polynomial $p$ of degree $\leq D(O(n)\times S_n,\RR^{n\times n}) $, such that $p(X)\neq p(Y) $. By the First Fundamental Theorem of Invariant Theory for $O(n)$, any $O(n)$ invariant polynomial is a polynomial in the inner products of columns of $X$, or in other words, $p(X)=\phi(X^TX)$ for an appropriate polynomial $\phi$. The polynomial $\phi$ is $S_n$ invariant on all positive definite matrices, and since these form an open subset of $\Symn $, it follows that $\phi$ is $S_n$ invariant on all of $\Symn$. Finally we have that 
$$\phi(A+tI_n)=\phi(X^TX)\neq \phi(Y^TY)=\phi(B+tI_n) $$
and so the polynomial $A\mapsto \phi(A+tI_n) $ separates $A$ from $B$ and has degree 
$$\deg(\phi)=\frac{\deg(p)}{2} \leq \frac{ D(O(n)\times S_n,\RR^{n\times n})}{2} .$$ 
\end{proof}

\section{Upper Bound}

\subsection{Proofs of elementary statements}

\begin{proposition}\label{prop:gram-orbits}
    For $X,Y\in\Rdnmat$ we have $\Phi(X)=\Phi(Y)$ if and only if $Y=RX$ for some $R\in\OO(d)$.
\end{proposition}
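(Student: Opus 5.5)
Here $\Phi$ is the Gram map $\Phi(X)=X^TX$, so the claim is the classical fact that two tuples of $n$ vectors in $\RR^d$ with the same pairwise inner products differ by an orthogonal transformation. The plan is to prove the two directions separately: the forward direction is a one-line computation, and the reverse direction builds $R$ as an isometry between column spaces and then extends it to all of $\RR^d$.

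The direction "if" is immediate. If $Y=RX$ with $R^TR=I_d$, then $\Phi(Y)=X^TR^TRX=X^TX=\Phi(X)$.

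For the direction "only if", assume $X^TX=Y^TY$. Let $U=\operatorname{span}\{x_1,\dots,x_n\}$ and $U'=\operatorname{span}\{y_1,\dots,y_n\}$. Define $T:U\to U'$ by
\[
T\Bigl(\sum_j c_jx_j\Bigr)=\sum_j c_jy_j ,
\]
that is, $T(Xc)=Yc$ for $c\in\RR^n$. The key step is to check that $T$ is well defined. Suppose $Xc=0$. Then
\[
\|Yc\|^2=c^TY^TYc=c^TX^TXc=\|Xc\|^2=0,
\]
so $Yc=0$. Linearity then gives well-definedness. The same identity $\|Yc\|=\|Xc\|$ shows that $T$ is a linear isometry. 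It is surjective onto $U'$ by construction, so $\dim U=\dim U'$.

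Next, choose orthonormal bases of the orthogonal complements $U^\perp$ and $U'^\perp$. These complements have equal dimension $d-\dim U$. Let $S:U^\perp\to U'^\perp$ be the linear map sending the first basis to the second, and set $R=T\oplus S$ on $\RR^d=U\oplus U^\perp$. Then $R$ maps an orthonormal basis of $\RR^d$ to an orthonormal basis, so $R\in O(d)$. Finally, $Rx_j=Te_j$-image $=y_j$ for every $j$, that is, $RX=Y$.

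The main obstacle is the well-definedness of $T$ when the $x_j$ are linearly dependent, which is exactly why the norm identity is needed. The extension to the orthogonal complements is routine.
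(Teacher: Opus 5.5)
Your proof is correct and follows essentially the same route as the paper: define $T(Xc)=Yc$ on the column span, use $\|Yc\|^2=c^TY^TYc=c^TX^TXc=\|Xc\|^2$ for both well-definedness and isometry, then extend by an isometry between the orthogonal complements. The only minor difference is that you get $\dim U=\dim U'$ from $T$ being a bijective isometry, whereas the paper uses $\ker Z=\ker(Z^TZ)$; either works.
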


\begin{proof}
    If $Y=RX$ with $R\in\OO(d)$, then $Y^TY=X^TR^TRX=X^TX$. Conversely, suppose $X^TX=Y^TY$, and let $x_1,\dots,x_n$ and $y_1,\dots,y_n$ denote the columns of $X$ and $Y$, so that $\langle x_i,x_j\rangle=\langle y_i,y_j\rangle$ for all $i,j$. The assignment $\sum_i c_ix_i\mapsto \sum_i c_iy_i$ is a well-defined linear isometry $T$ from $\mathrm{span}(x_1,\dots,x_n)$ onto $\mathrm{span}(y_1,\dots,y_n)$: for every $c\in\R^n$,
    \[ \Big\|\sum_i c_iy_i\Big\|^2=c^TY^TYc=c^TX^TXc=\Big\|\sum_i c_ix_i\Big\|^2, \]
    which gives both well-definedness (take $\sum_i c_ix_i=0$) and preservation of norms. Since $\ker Z=\ker(Z^TZ)$ for any real matrix $Z$, both spans have dimension $\mathrm{rank}(X^TX)=\mathrm{rank}(Y^TY)$, hence their orthogonal complements in $\R^d$ have equal dimension as well; extending $T$ by any linear isometry between these complements produces $R\in\OO(d)$ with $RX=Y$.
\end{proof}

\begin{proposition}[Equivariance of the Gram map]\label{prop:gram-equivariant}
    For all $X\in\Rdnmat$, $R\in\OO(d)$ and $\tau\in S_n$ we have
    \[ \Phi\big((R,\tau)\cdot X\big) \;=\; \tau\cdot \Phi(X). \]
    In particular, $\Phi$ is $\OO(d)$-invariant and intertwines the $S_n$-actions on $\Rdnmat$ and on $\Symn$.
\end{proposition}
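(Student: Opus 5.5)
The claim is a direct computation with the matrix form of the action, so I would simply unwind the definitions. Here $\Phi(X)=X^TX$ denotes the Gram map $\Rdnmat\to\Symn$. As in the proof of Lemma~\ref{lem:closing_options}, I would write the action of $(R,\tau)\in\OO(d)\times S_n$ (with no translation) as
\[
(R,\tau)\cdot X=R\,X\,P_\tau^T ,
\]
where $P_\tau$ is the permutation matrix with $P_\tau e_j=e_{\tau(j)}$.

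The first step is to compute
\[
\Phi\big((R,\tau)\cdot X\big)=(RXP_\tau^T)^T(RXP_\tau^T)=P_\tau X^TR^TRXP_\tau^T=P_\tau\,(X^TX)\,P_\tau^T ,
\]
using $R^TR=I_d$.

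The second step is to check that conjugation by $P_\tau$ is exactly the $S_n$-action on $\Symn$ defined in Subsection~\ref{sub:notation}. For any $A\in\Symn$,
\[
(P_\tau A P_\tau^T)_{ij}=e_i^TP_\tau A P_\tau^T e_j=(P_\tau^Te_i)^T A\,(P_\tau^Te_j)=e_{\tau^{-1}(i)}^TAe_{\tau^{-1}(j)}=A_{\tau^{-1}(i),\tau^{-1}(j)}=(\tau\cdot A)_{ij},
\]
where I used $P_\tau^T=P_{\tau^{-1}}$. Combining the two steps gives $\Phi((R,\tau)\cdot X)=\tau\cdot\Phi(X)$.

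The two "in particular" statements follow by specialization. Taking $\tau=\mathrm{id}$ gives $\OO(d)$-invariance of $\Phi$, and taking $R=I_d$ gives the intertwining of the two $S_n$-actions. The only point that needs care is keeping the $\tau$ versus $\tau^{-1}$ conventions consistent between the column-permutation action on $\Rdnmat$ and the conjugation action on $\Symn$; once both are written through $P_\tau$ as above, they agree.
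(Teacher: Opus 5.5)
Your proof is correct and follows the paper's argument, which is the same one-line computation $(RXP_\tau^T)^T(RXP_\tau^T)=P_\tau(X^TX)P_\tau^T$. Your extra entrywise check, that conjugation by $P_\tau$ equals the index action $(\tau\cdot A)_{ij}=A_{\tau^{-1}(i),\tau^{-1}(j)}$, makes explicit a convention the paper only records later, in its conventions paragraph.
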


\begin{proof}
    $(RXP_\tau^T)^T(RXP_\tau^T)=P_\tau X^TR^TRXP_\tau^T=P_\tau(X^TX)P_\tau^T$.
\end{proof}

\begin{lemma}[The first $r$ rows determine a rank-$r$ matrix]\label{lem:completion}
    Let $C\in \Symn$ be a rank-$r$ symmetric matrix whose leading $r\times r$ block $C_{11}$ is nonsingular. Then the first $r$ rows determine $C$; that is, writing
    \[C=\begin{pmatrix}
        C_{11} & C_{12} \\
        C_{12}^T & C_{22}
    \end{pmatrix}, \qquad C_{22} = C_{12}^TC_{11}^{-1}C_{12}.\]
\end{lemma}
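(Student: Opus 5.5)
The idea is to use the Schur complement of the leading block. Since $C_{11}$ is nonsingular, we can write the block factorization
\[
C=\begin{pmatrix} I_r & 0\\ C_{12}^TC_{11}^{-1} & I_{n-r}\end{pmatrix}
\begin{pmatrix} C_{11} & 0\\ 0 & S\end{pmatrix}
\begin{pmatrix} I_r & C_{11}^{-1}C_{12}\\ 0 & I_{n-r}\end{pmatrix},
\qquad S:=C_{22}-C_{12}^TC_{11}^{-1}C_{12}.
\]
This is checked by direct multiplication. The symmetry of $C$ is used here: the lower-left block is $C_{12}^T$, and $C_{11}^{-1}$ is symmetric.

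Both outer factors are unit triangular and hence invertible. Therefore $\mathrm{rank}(C)=\mathrm{rank}(C_{11})+\mathrm{rank}(S)=r+\mathrm{rank}(S)$. Since $\mathrm{rank}(C)=r$ by hypothesis, we get $\mathrm{rank}(S)=0$, so $S=0$. This is exactly the identity $C_{22}=C_{12}^TC_{11}^{-1}C_{12}$.

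To conclude, note that the first $r$ rows of $C$ are the blocks $(C_{11}\;C_{12})$. These determine $C_{12}^T$, the lower-left block, by symmetry, and they determine $C_{22}$ through the formula just derived. So any two rank-$r$ symmetric matrices with nonsingular leading block and the same first $r$ rows coincide.

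An alternative route, if one prefers to avoid the factorization, is to observe that the first $r$ rows are linearly independent, because they contain the nonsingular $C_{11}$. They therefore span the row space, so each remaining row is a combination $M(C_{11}\;C_{12})$. Comparing the first $r$ columns gives $M=C_{12}^TC_{11}^{-1}$, and the same formula follows. The argument involves no real obstacle; the only care needed is to check the block multiplication and to invoke symmetry for the off-diagonal block.
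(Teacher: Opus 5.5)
Your proposal is correct and follows the paper's own proof: the same Schur-complement block factorization with invertible unit-triangular outer factors, then rank additivity, $\mathrm{rank}(C)=r+\mathrm{rank}(S)$, which forces $S=0$. Your alternative row-space argument, which obtains $M=C_{12}^TC_{11}^{-1}$ by comparing the first $r$ columns, is also valid.
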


\begin{proof}
    Setting $S = C_{22}-C_{12}^TC_{11}^{-1}C_{12}$, block multiplication verifies the congruence
    \[ C=\begin{pmatrix} I&0\\ C_{12}^TC_{11}^{-1} & I\end{pmatrix}
         \begin{pmatrix}C_{11}&0\\0&S\end{pmatrix}
         \begin{pmatrix}I& C_{11}^{-1}C_{12}\\0&I\end{pmatrix}, \]
    so that $\mathrm{rank}(C)=\mathrm{rank}(C_{11})+\mathrm{rank}(S)$ (rank additivity of the Schur complement, \citealp{guttman1946enlargement}). As both sides of the first equality are $r$, we get $\mathrm{rank}(S)=0$, i.e.\ $S=0$. Note that positive semidefiniteness is not needed.
\end{proof}

The point of Lemma~\ref{lem:completion} is that it localizes disagreement between non-equivalent matrices to the first $r$ rows, whose positions number
\[ |\mathcal{P}_r| = \sum_{i=1}^{r}(n-i+1) = rn-\binom{r}{2}=N_r\leq N_d, \qquad \mathcal{P}_r:=\{(i,j) : 1\leq i\leq r,\ i\leq j\leq n\}. \]

\begin{lemma}[Witness lemma]\label{lem:witness}
    Let $A,B\in \Symn$ be rank-$r$ matrices in different $S_n$-orbits, with the leading $r\times r$ block of $A$ nonsingular. Then for every $\sigma\in S_n$ there exists $(i,j)\in\mathcal{P}_r$ such that $(\sigma\cdot B)_{ij}\neq A_{ij}$.
\end{lemma}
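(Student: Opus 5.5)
We argue by contradiction. Suppose that for some $\sigma\in S_n$ the matrix $C:=\sigma\cdot B$ agrees with $A$ at every position $(i,j)\in\mathcal{P}_r$. Since $S_n$ acts by simultaneous permutation of rows and columns, $C$ is again symmetric and has rank $r$, because $C=P_\sigma B P_\sigma^T$ is congruent to $B$. Agreement on $\mathcal{P}_r$ together with symmetry of both matrices then gives agreement on every entry of the first $r$ rows and, by symmetry, of the first $r$ columns. In particular the leading $r\times r$ blocks coincide, $C_{11}=A_{11}$, and likewise the off-diagonal blocks $C_{12}=A_{12}$.

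Next we invoke Lemma~\ref{lem:completion} twice. Since $A_{11}$ is nonsingular and $C_{11}=A_{11}$, both $A$ and $C$ are rank-$r$ symmetric matrices whose leading $r\times r$ block is nonsingular. The lemma therefore yields
\[ A_{22}=A_{12}^TA_{11}^{-1}A_{12}, \qquad C_{22}=C_{12}^TC_{11}^{-1}C_{12}. \]
The right-hand sides are equal because $C_{11}=A_{11}$ and $C_{12}=A_{12}$, so $C_{22}=A_{22}$. Hence $C=A$, that is, $\sigma\cdot B=A$. This contradicts the assumption that $A$ and $B$ lie in different $S_n$-orbits, so some position in $\mathcal{P}_r$ must witness a disagreement.

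There is no real obstacle here; the argument is a direct application of Lemma~\ref{lem:completion}. The only points needing care are the following. First, one must check that the rank hypothesis of the lemma transfers to $\sigma\cdot B$, which holds because permutation conjugation preserves rank. Second, the nonsingularity of the leading block of $\sigma\cdot B$ is not assumed; it is inherited from $A$ through the hypothetical entrywise agreement. Third, $\mathcal{P}_r$ records only the upper-triangular positions $j\geq i$ of the first $r$ rows, so symmetry of both matrices is what upgrades agreement on $\mathcal{P}_r$ to agreement on the full $r\times n$ strip $[C_{11}\; C_{12}]$.
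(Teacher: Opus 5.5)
Your proof is correct and follows the paper's argument: assume $\sigma\cdot B$ agrees with $A$ on the first $r$ rows, note that its leading block is then nonsingular and its rank is $r$, and apply Lemma~\ref{lem:completion} to both matrices to force $\sigma\cdot B=A$, a contradiction. Your explicit use of symmetry to pass from agreement on $\mathcal{P}_r$ to agreement on the full first $r$ rows fills in a detail the paper leaves implicit.
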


\begin{proof}
    Suppose otherwise, so that $\sigma\cdot B$ agrees with $A$ on the first $r$ rows. In particular its leading $r\times r$ block equals that of $A$ and is therefore nonsingular, while $\mathrm{rank}(\sigma\cdot B)=\mathrm{rank}(B)=r$. Lemma~\ref{lem:completion} thus applies to both $A$ and $\sigma\cdot B$, and since they share the same first $r$ rows they share the same completion: $\sigma\cdot B=A$, contradicting that $A$ and $B$ lie in different orbits.
\end{proof}

\begin{lemma}[Row power sums detect rows up to reordering]\label{lem:row-multiset}
    For $u\in\R^n$ define
    \[ \mathrm{pw}(u) \;=\; \Big(\sum_{l=1}^n u_l,\ \sum_{l=1}^n u_l^2,\ \dots,\ \sum_{l=1}^n u_l^n\Big)\in\R^n, \]
    and for $C\in \Symn$ let $\mathrm{pw}_i(C):=\mathrm{pw}(C_{i1},\dots,C_{in})$, a polynomial map whose components have degree $\leq n$ in the entries of $C$. Then:
    \begin{enumerate}
        \item[(a)] $\mathrm{pw}(u)=\mathrm{pw}(v)$ if and only if $u$ and $v$ agree up to a permutation of their entries;
        \item[(b)] $\mathrm{pw}_i(\tau\cdot C)=\mathrm{pw}_{\tau^{-1}(i)}(C)$ for all $\tau\in S_n$ and $C\in \Symn$.
    \end{enumerate}
\end{lemma}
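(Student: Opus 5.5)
For part (a), the plan is to reduce the statement to the fact that the power sums determine the elementary symmetric polynomials. The direction ``$u,v$ agree up to a permutation $\Rightarrow$ $\mathrm{pw}(u)=\mathrm{pw}(v)$'' is immediate, since each power sum $\sum_l u_l^k$ is symmetric in the entries. For the converse, we invoke Newton's identities. Writing $p_k=\sum_l u_l^k$ and $e_k$ for the elementary symmetric polynomials, they give
\[
k\,e_k=\sum_{m=1}^{k}(-1)^{m-1}e_{k-m}p_m, \qquad 1\leq k\leq n .
\]
Because we work over $\R$, the division by $k$ is legitimate, so by induction $e_1,\dots,e_n$ are polynomial functions of $p_1,\dots,p_n$. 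Hence $\mathrm{pw}(u)=\mathrm{pw}(v)$ forces $e_k(u)=e_k(v)$ for all $k$. Consequently the monic polynomials $\prod_l(z-u_l)$ and $\prod_l(z-v_l)$ in $\R[z]$ have identical coefficients. By unique factorization in $\C[z]$ their roots coincide with multiplicity, so the multisets $\{u_l\}$ and $\{v_l\}$ are equal, i.e.\ $v$ is a rearrangement of $u$.

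For part (b), we unfold the conjugation action. By definition $(\tau\cdot C)_{il}=C_{\tau^{-1}(i),\tau^{-1}(l)}$, so the $i$-th row of $\tau\cdot C$ is the vector $\bigl(C_{\tau^{-1}(i),\tau^{-1}(l)}\bigr)_{l=1}^n$. This is the $\tau^{-1}(i)$-th row of $C$ with its entries reordered by the bijection $l\mapsto\tau^{-1}(l)$. Since $\mathrm{pw}$ is invariant under reordering its argument (the easy direction of (a)), we obtain $\mathrm{pw}_i(\tau\cdot C)=\mathrm{pw}_{\tau^{-1}(i)}(C)$.

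The degree claim is immediate: the $k$-th component of $\mathrm{pw}_i(C)$ is $\sum_l C_{il}^k$, which is homogeneous of degree $k\leq n$ in the entries of $C$.

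The only nontrivial step is the converse in (a), and it rests on characteristic zero. Newton's identities give it directly, so I do not expect a genuine obstacle.
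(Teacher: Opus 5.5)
Your proposal is correct and follows the paper's proof essentially verbatim: (a) uses symmetry of the power sums for one direction, and for the converse uses Newton's identities (valid in characteristic zero) to recover $e_1,\dots,e_n$ and hence the monic polynomial $\prod_l(z-u_l)$ whose roots give the multiset. For (b), you observe, as the paper does, that the $i$-th row of $\tau\cdot C$ is a reordering of the $\tau^{-1}(i)$-th row of $C$.
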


\begin{proof}
    (a) Power sums are symmetric in the entries, so agreement up to permutation implies $\mathrm{pw}(u)=\mathrm{pw}(v)$. Conversely, in characteristic zero Newton's identities \citep[Ch.~I, \S2]{macdonald1995symmetric} determine the elementary symmetric polynomials $e_1,\dots,e_n$ of $u_1,\dots,u_n$ from the power sums $p_1,\dots,p_n$; hence $\mathrm{pw}(u)$ determines the polynomial $\prod_{l=1}^n(t-u_l)=t^n-e_1t^{n-1}+\dots+(-1)^ne_n$, whose multiset of roots is $\{u_1,\dots,u_n\}$.
    (b) The $i$-th row of $\tau\cdot C$ is $\big(C_{\tau^{-1}(i),\tau^{-1}(j)}\big)_{j=1}^n$; as $j$ runs over $\{1,\dots,n\}$, so does $\tau^{-1}(j)$, so this row is a reordering of the $\tau^{-1}(i)$-th row of $C$, and power sums are invariant under reordering.
\end{proof}

We will use the fact that a rank-$r$ Gram matrix contains a nonsingular $r \times r$ \emph{principal} submatrix: writing $C=Z^TZ$ with $Z\in\R^{r\times n}$ and choosing $r$ linearly independent columns of $Z$, the principal submatrix indexed by those columns is their Gram matrix, which is nonsingular. Principality matters here, because $S_n$ acts by \emph{simultaneous} row and column permutations, so only a principal submatrix can be moved to the leading position. As invariant polynomials are constant on orbits, we may replace $A$ by any element of its orbit, and hence assume that the leading $r\times r$ block of $A$ is nonsingular. No assumption on $B$ is needed.

The \emph{Gram map} $\Phi:\Rdnmat\to \Symn$, $\Phi(X)=X^TX$, records the pairwise inner products of the points of the cloud, and its image is the cone $\mathcal{G}_d=\{C\in \Symn : C\succeq 0,\ \mathrm{rank}(C)\leq d\}$.

\begin{corollary}\label{cor:reduction}
    Two matrices $X,Y\in\Rdnmat$ lie in the same $\OO(d)\times S_n$-orbit if and only if $\Phi(X)$ and $\Phi(Y)$ lie in the same $S_n$-orbit in $\Symn$. Consequently, if $Q:\Symn\to\R$ is an $S_n$-invariant polynomial with $Q(\Phi(X))\neq Q(\Phi(Y))$, then $p:=Q\circ\Phi$ is an $\OO(d)\times S_n$-invariant polynomial with $p(X)\neq p(Y)$ and $\deg(p)\leq 2\deg(Q)$.
\end{corollary}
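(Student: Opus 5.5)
The corollary has two parts: an orbit equivalence, and a statement about invariant polynomials. I would prove the equivalence first, using Proposition~\ref{prop:gram-orbits} and Proposition~\ref{prop:gram-equivariant}, and then read off the polynomial statement.

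\emph{Forward direction of the equivalence.} Suppose $Y=(R,\tau)\cdot X$. By Proposition~\ref{prop:gram-equivariant}, $\Phi(Y)=\tau\cdot\Phi(X)$, so the two Gram matrices lie in the same $S_n$-orbit.

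\emph{Converse direction.} Suppose $\Phi(Y)=\tau\cdot\Phi(X)$ for some $\tau\in S_n$. Put $X':=(I,\tau)\cdot X=XP_\tau^T$. By Proposition~\ref{prop:gram-equivariant} again, $\Phi(X')=\tau\cdot\Phi(X)=\Phi(Y)$. Proposition~\ref{prop:gram-orbits} then gives $R\in\OO(d)$ with $Y=RX'$. Hence $Y=(R,\tau)\cdot X$, using that the $\OO(d)$ and $S_n$ actions commute (left versus right multiplication). So $X$ and $Y$ lie in the same orbit.

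\emph{The polynomial statement.} Let $p:=Q\circ\Phi$.
\begin{itemize}
\item \textbf{Invariance.} For any $(R,\tau)$ we have $p((R,\tau)\cdot X)=Q(\tau\cdot\Phi(X))=Q(\Phi(X))=p(X)$. The first equality is Proposition~\ref{prop:gram-equivariant} and the second is the $S_n$-invariance of $Q$.
\item \textbf{Separation.} $p(X)=Q(\Phi(X))\neq Q(\Phi(Y))=p(Y)$ holds by hypothesis.
\item \textbf{Degree.} Each entry of $\Phi(X)=X^TX$ is a homogeneous quadratic in the entries of $X$. Substituting quadratics into a polynomial of degree $\deg Q$ gives a polynomial of degree at most $2\deg Q$.
\end{itemize}

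No step is a real obstacle. The only point needing care is the bookkeeping for the permutation convention, namely $P_\tau$ versus $P_\tau^T$. Since $\tau$ is arbitrary this is harmless: if needed, one replaces $\tau$ by $\tau^{-1}$.
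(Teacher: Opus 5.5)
Your proposal is correct and follows essentially the same route as the paper: the forward direction comes from equivariance of the Gram map, and the converse applies Proposition~\ref{prop:gram-orbits} to $XP_\tau^T$. Invariance, separation and the degree bound then follow from $\Phi$ being $\OO(d)$-invariant, $S_n$-equivariant and quadratic; your caveat about the permutation convention is unnecessary, since the paper's conventions already give $\Phi(XP_\tau^T)=\tau\cdot\Phi(X)$ directly.
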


\begin{proof}
    If $Y=(R,\tau)\cdot X$, then $\Phi(Y)=\tau\cdot\Phi(X)$ by Proposition~\ref{prop:gram-equivariant}. Conversely, if $\Phi(Y)=\tau\cdot\Phi(X)=\Phi(XP_\tau^T)$, then Proposition~\ref{prop:gram-orbits} provides $R\in\OO(d)$ with $Y=R(XP_\tau^T)=(R,\tau)\cdot X$. For the second statement: $p$ is $\OO(d)$-invariant because $\Phi$ is, and $S_n$-invariant because $Q$ is and $\Phi$ intertwines the two $S_n$-actions (Proposition~\ref{prop:gram-equivariant}); every entry of $\Phi(X)$ is a polynomial of degree $2$ in the entries of $X$, so $\deg(p)\leq 2\deg(Q)$.
\end{proof}

\begin{remark}
    By Weyl's first fundamental theorem for $\OO(d)$ \citep[Theorem~2.9.A]{weyl1946classical}, this reduction is lossless: \emph{every} $\OO(d)\times S_n$-invariant polynomial on $\Rdnmat$ is a polynomial in the pairwise inner products $(X^TX)_{ij}$. For the upper bound below, however, only the elementary direction is used, namely Corollary~\ref{cor:reduction}.
\end{remark}

Corollary~\ref{cor:reduction} reduces Theorem~\ref{thm:dn2} to a statement about Gram matrices: it suffices to separate any two matrices $A,B\in\mathcal{G}_d$ lying in different $S_n$-orbits by an $S_n$-invariant polynomial of degree $\leq dn(2n-1)+N_d(n-1)$ in the Gram entries. We will introduce some preliminary lemmas and claims before embarking on a full proof.

\begin{lemma}\label{lem:rank-sep}
    If $\mathrm{rank}(A)\neq \mathrm{rank}(B)$ then there exists $k\leq n$ such that $\mathrm{tr}(A^k) \neq \mathrm{tr}(B^k)$. Therefore, a polynomial of degree $\leq n$ can separate $A$ and $B$.
\end{lemma}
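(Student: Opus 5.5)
The plan is to read the rank off the spectrum. Since $A,B\in\Symn$ are real symmetric, each is orthogonally diagonalizable with real eigenvalues $\lambda_1,\dots,\lambda_n$ and $\mu_1,\dots,\mu_n$, listed with multiplicity. The rank of a symmetric matrix equals the number of nonzero eigenvalues counted with multiplicity. Hence if $\mathrm{rank}(A)\neq\mathrm{rank}(B)$, the multisets $\{\lambda_l\}$ and $\{\mu_l\}$ differ: they have different numbers of zero entries. Note that this step does not need positive semidefiniteness.

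Next I would invoke Lemma~\ref{lem:row-multiset}(a), applied to the eigenvalue vectors $u=(\lambda_1,\dots,\lambda_n)$ and $v=(\mu_1,\dots,\mu_n)$. Because $u$ and $v$ are not rearrangements of each other, $\mathrm{pw}(u)\neq\mathrm{pw}(v)$. So there is some $k\in\{1,\dots,n\}$ with $\sum_l\lambda_l^k\neq\sum_l\mu_l^k$. Since $\sum_l\lambda_l^k=\mathrm{tr}(A^k)$, and likewise for $B$, this gives $\mathrm{tr}(A^k)\neq\mathrm{tr}(B^k)$.

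For the second assertion, $C\mapsto\mathrm{tr}(C^k)$ is a homogeneous polynomial of degree $k\leq n$ in the entries of $C$. It is $S_n$-invariant, since $\tau\cdot C=P_\tau CP_\tau^T$ and the trace is invariant under conjugation. Therefore it separates $A$ from $B$ with degree at most $n$.

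There is no real obstacle here. The only point to state carefully is the identification of the rank with the number of nonzero eigenvalues, which uses symmetry, that is, diagonalizability. When this is composed with $\Phi$ in the main theorem, the degree becomes $2n$, via Corollary~\ref{cor:reduction}.
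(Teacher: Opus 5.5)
Your proposal is correct and follows the paper's argument: rank is the number of nonzero eigenvalues of a symmetric matrix, and Newton's identities make the power sums $\mathrm{tr}(A^k)$, $k\le n$, determine the eigenvalue multiset. The only difference is packaging: you reach Newton's identities through Lemma~\ref{lem:row-multiset}(a), while the paper argues contrapositively that equal traces give equal characteristic polynomials.
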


Proof of this lemma follows from the power sum polynomials being a determining set for the elementary symmetric polynomials in characteristic zero (Newton's identities, \citealp[Ch.~I, \S2]{macdonald1995symmetric}), combined with the trace of the $k$-th power of a symmetric matrix being the $k$-th power sum of its eigenvalues: if $\mathrm{tr}(A^k)=\mathrm{tr}(B^k)$ for all $k\leq n$ then $A$ and $B$ have the same characteristic polynomial, hence the same number of nonzero eigenvalues. The power sums are of maximal degree $n$, and each $\mathrm{tr}(C^k)$ is $S_n$-invariant since conjugation by a permutation matrix is a similarity. Since rank is likewise constant on orbits, from now on we may assume $\mathrm{rank}(A)=\mathrm{rank}(B)=r\leq d$. More generally, if $A$ and $B$ are not similar, then there exists $k\leq n$ such that $\mathrm{tr}(A^k)\neq \mathrm{tr}(B^k)$, since symmetric matrices are diagonalizable, so two of them are similar if and only if they have the same characteristic polynomial.

\begin{lemma}[Row indicators]\label{lem:row-indicator}
    Let $A,B\in \Symn$ and $1\leq i\leq n$. There exists a polynomial $\delta_i:\Symn\to\R$ with $\deg(\delta_i)\leq n(2n-1)$ such that for every $\tau\in S_n$ and every $C\in\{A,B\}$,
    \[ \delta_i(\tau\cdot C)=\begin{cases} 1 & \text{if the $i$-th row of $\tau\cdot C$ agrees with the $i$-th row of $A$ up to reordering},\\ 0 & \text{otherwise.}\end{cases} \]
\end{lemma}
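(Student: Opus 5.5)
The plan is to build $\delta_i$ as a Lagrange-type interpolation polynomial composed with the row power-sum map $\mathrm{pw}_i$ of Lemma~\ref{lem:row-multiset}. By part (b) of that lemma, for $C\in\{A,B\}$ and $\tau\in S_n$ we have $\mathrm{pw}_i(\tau\cdot C)=\mathrm{pw}_{\tau^{-1}(i)}(C)$. Hence, as $\tau$ and $C$ vary, the vector $\mathrm{pw}_i(\tau\cdot C)$ takes only finitely many values. These are the elements of
\[
\mathcal{S}:=\{\mathrm{pw}_k(A),\ \mathrm{pw}_k(B) : 1\leq k\leq n\}\subset\R^n ,
\]
a set of at most $2n$ points. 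By part (a), the condition ``the $i$-th row of $\tau\cdot C$ agrees with the $i$-th row of $A$ up to reordering'' is equivalent to $\mathrm{pw}_i(\tau\cdot C)=a$, where $a:=\mathrm{pw}_i(A)$. So it suffices to find a polynomial $g:\R^n\to\R$ with $g(a)=1$ and $g(s)=0$ for every $s\in\mathcal{S}\setminus\{a\}$, and then set $\delta_i:=g\circ\mathrm{pw}_i$.

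To build $g$, let $s_1,\dots,s_m$ be the distinct elements of $\mathcal{S}\setminus\{a\}$, so $m\leq 2n-1$. For each $s_k$ choose a coordinate $c_k\in\{1,\dots,n\}$ with $(s_k)_{c_k}\neq a_{c_k}$; it exists because $s_k\neq a$. Then define
\[
g(y)=\prod_{k=1}^{m}\frac{y_{c_k}-(s_k)_{c_k}}{a_{c_k}-(s_k)_{c_k}} .
\]
This is a product of $m$ affine functions, so $g(a)=1$ and $g(s_k)=0$ for each $k$, and $\deg g\leq m\leq 2n-1$.

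For the degree count, the $c$-th component of $\mathrm{pw}_i$ is the power sum $\sum_l C_{il}^{c}$, which has degree $c\leq n$ in the entries of $C$. Each affine factor of $g$ therefore becomes a polynomial of degree at most $n$ after substitution, so $\deg\delta_i\leq n\,m\leq n(2n-1)$, as claimed.

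The only point needing care is to interpolate only on the finite set $\mathcal{S}$ of values actually attained on the orbits of $A$ and $B$, rather than on all of $\R^n$. This is what keeps the number of factors at most $2n-1$, and it is exactly why $\delta_i$ is required to behave correctly only on $\tau\cdot C$ for $C\in\{A,B\}$. Everything else follows directly from Lemma~\ref{lem:row-multiset}.
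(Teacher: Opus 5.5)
Your proof is correct and follows the paper's construction: $\delta_i$ is an interpolating polynomial of degree at most $2n-1$ on the at most $2n$ attained power-sum vectors, composed with $\mathrm{pw}_i$. The only difference is how the interpolant is built: you take a product of coordinate-wise affine separators, whereas the paper chooses a linear functional $\pi$ injective on this finite set and applies a univariate Lagrange polynomial to $\pi$. Your version therefore avoids the (easy) fact that $(\R^n)^*$ is not a finite union of proper subspaces.
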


\begin{proof}
    Let $\mathcal{N}=\{\mathrm{pw}_j(A) : 1\leq j\leq n\}\cup\{\mathrm{pw}_j(B) : 1\leq j\leq n\}\subset\R^n$, a set of $m:=|\mathcal{N}|\leq 2n$ distinct vectors. There is a linear functional $\pi:\R^n\to\R$ that is injective on $\mathcal{N}$: for each pair $v\neq w$ in $\mathcal{N}$, the functionals with $\pi(v)=\pi(w)$ form a proper linear subspace of the dual space $(\R^n)^*$, and a vector space over an infinite field is not a union of finitely many proper subspaces. Let $L_i$ be the univariate Lagrange polynomial of degree $\leq m-1\leq 2n-1$ taking the value $1$ at $\pi(\mathrm{pw}_i(A))$ and $0$ at the $m-1$ remaining points of $\pi(\mathcal{N})$, and set
    \[ \delta_i \;:=\; L_i\circ\pi\circ\mathrm{pw}_i . \]
    Since $\pi\circ\mathrm{pw}_i$ is a polynomial of degree $\leq n$ in the entries of $C$, we get $\deg(\delta_i)\leq n(2n-1)$.

    Now let $\tau\in S_n$ and $C\in\{A,B\}$. By Lemma~\ref{lem:row-multiset}(b), $\mathrm{pw}_i(\tau\cdot C)=\mathrm{pw}_{\tau^{-1}(i)}(C)\in\mathcal{N}$, so $\delta_i(\tau\cdot C)\in\{0,1\}$; as $\pi$ is injective on $\mathcal{N}$, the value is $1$ exactly when $\mathrm{pw}_i(\tau\cdot C)=\mathrm{pw}_i(A)$, which by Lemma~\ref{lem:row-multiset}(a) happens exactly when the $i$-th row of $\tau\cdot C$ agrees with the $i$-th row of $A$ up to reordering.
\end{proof}

We fix the conventions used below. On $\Symn$ the action is $(\tau\cdot C)_{ij}=C_{\tau^{-1}(i)\tau^{-1}(j)}$, and on $\Rdnmat$ it is $(R,\tau)\cdot Z=RZP_\tau^T$ (Proposition~\ref{prop:gram-equivariant}), where $P_\tau$ is the permutation matrix with $P_\tau e_j=e_{\tau(j)}$, so that $P_{\tau\sigma}=P_\tau P_\sigma$. We write $\tau\cdot Z:=ZP_\tau^T$ and $RZ$ for the actions of the two factors, so that $(R,\tau)\cdot Z=R(\tau\cdot Z)=\tau\cdot(RZ)$. The $j$-th column of $\tau\cdot Z$ is $z_{\tau^{-1}(j)}$: this is the action of the Notation section, and it is the convention under which Proposition~\ref{prop:gram-equivariant} holds, since $\Phi(\tau\cdot Z)_{ij}=\langle z_{\tau^{-1}(i)},z_{\tau^{-1}(j)}\rangle=(\tau\cdot\Phi(Z))_{ij}$. Consequently $\tau\cdot(\sigma\cdot Z)=(\tau\sigma)\cdot Z$ on $\Rdnmat$, and likewise on $\Symn$. Moreover $\mathrm{rank}(\Phi(Z))=\mathrm{rank}(Z)$, because $\ker Z=\ker(Z^TZ)$. Finally, for $d\leq n$ and $Z\in\Rdnmat$ we let $Z_{[d]}\in\R^{d\times d}$ denote the matrix formed by the first $d$ columns of $Z$. Thus $(\tau\cdot Z)_{[d]}$ has columns $z_{\tau^{-1}(1)},\dots,z_{\tau^{-1}(d)}$, and $(\tau\cdot Z)_{[d]}^T(\tau\cdot Z)_{[d]}$ is the leading $d\times d$ block of $\Phi(\tau\cdot Z)=\tau\cdot\Phi(Z)$.

\begin{lemma}[Rank-deficient clouds]\label{lem:rank-deficient}
    Let $X\in\Rdnmat$ with $\mathrm{rank}(X)<d$. Then the $\SO(d)\times S_n$-orbit of $X$ equals its $\OO(d)\times S_n$-orbit. In particular, if $Y\in\Rdnmat$ lies in the $\OO(d)\times S_n$-orbit of $X$, then $Y$ lies in the $\SO(d)\times S_n$-orbit of $X$.
\end{lemma}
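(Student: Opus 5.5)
The plan is to show that every element of the $\OO(d)\times S_n$-orbit of $X$ is already reached using only proper rotations. Since $\SO(d)\times S_n\subseteq \OO(d)\times S_n$, one inclusion of orbits is immediate. For the other, it suffices to prove the following: for every $R\in\OO(d)$ there exists $R'\in\SO(d)$ with $R'X=RX$. Given that, $(R,\tau)\cdot X=\tau\cdot(RX)=\tau\cdot(R'X)=(R',\tau)\cdot X$, using the conventions fixed just before the lemma. The ``in particular'' clause is then a direct restatement of orbit equality.

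To construct $R'$, let $U=\mathrm{span}(x_1,\dots,x_n)\subseteq\R^d$. Since $\mathrm{rank}(X)<d$, the orthogonal complement $U^\perp$ is nonzero. Choose a unit vector $v\in U^\perp$ and let $F=I_d-2vv^T$ be the reflection across $v^\perp$. Then $F\in\OO(d)$, $\det F=-1$, and $Fu=u$ for every $u\in U$; in particular $FX=X$. If $\det R=1$, take $R'=R$. If $\det R=-1$, set $R'=RF$. Then $\det R'=1$, so $R'\in\SO(d)$, and $R'X=R(FX)=RX$.

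There is no serious obstacle here. The only point needing care is that the reflection fixes every column of $X$, and this holds because $v$ is orthogonal to the column span. That is exactly where the hypothesis $\mathrm{rank}(X)<d$ is used: it guarantees that $U^\perp$ is nontrivial, so such a $v$ exists. Translations play no role, since the lemma concerns only $\OO(d)$ and $\SO(d)$. This is consistent with the reduction in Lemma~\ref{lem:closing_options}.
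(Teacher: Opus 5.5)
Your proof is correct and takes the same approach as the paper. The paper also picks a unit vector orthogonal to the column span, uses the reflection $M=I_d-2uu^T$ (which fixes $X$ and has $\det(M)=-1$), and replaces $R$ by $RM\in\SO(d)$ when $\det(R)=-1$.
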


\begin{proof}
    The columns of $X$ span a subspace $U\subseteq\R^d$ of dimension $\mathrm{rank}(X)<d$, so there is a unit vector $u\in\R^d$ orthogonal to $U$. Let $M:=I_d-2uu^T$ be the reflection in the hyperplane $u^\perp\supseteq U$. Then $M^TM=I_d$, so $M\in\OO(d)$; $Mu=-u$ and $Mv=v$ for $v\perp u$, so $\det(M)=-1$; and $Mx=x$ for every $x\in U$, so $MX=X$. Now let $(R,\tau)\in\OO(d)\times S_n$. If $\det(R)=1$ there is nothing to show. If $\det(R)=-1$, then
    \[ (R,\tau)\cdot X \;=\; RXP_\tau^T \;=\; RMXP_\tau^T \;=\; (RM,\tau)\cdot X, \qquad \det(RM)=1 . \]
    Hence every element of the $\OO(d)\times S_n$-orbit of $X$ lies in its $\SO(d)\times S_n$-orbit; the reverse inclusion is trivial.
\end{proof}

\begin{lemma}[Stabiliser sign]\label{lem:stab-sign}
    Let $d\leq n$, let $X\in\Rdnmat$ have rank $d$, and set $A:=\Phi(X)$. Let $\tau\in S_n$ satisfy $\tau\cdot A=A$.
    \begin{enumerate}
        \item[(a)] There is a unique $R_\tau\in\OO(d)$ with $\tau\cdot X=R_\tau X$, and $\det (\tau\cdot X)_{[d]}=\det(R_\tau)\det X_{[d]}$.
        \item[(b)] For every $M\in\OO(d)$ we have $\Phi(MX)=A$ and $\det (\tau\cdot MX)_{[d]}=\det(M)\det(R_\tau)\det X_{[d]}$.
        \item[(c)] If $\det(R_\tau)=-1$, then $MX$ lies in the $\SO(d)\times S_n$-orbit of $X$ for every $M\in\OO(d)$ with $\det(M)=-1$.
    \end{enumerate}
\end{lemma}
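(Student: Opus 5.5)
Part (a) follows from the equivariance of the Gram map. Proposition~\ref{prop:gram-equivariant} gives $\Phi(\tau\cdot X)=\tau\cdot\Phi(X)=\tau\cdot A=A=\Phi(X)$. Proposition~\ref{prop:gram-orbits} then supplies $R_\tau\in\OO(d)$ with $\tau\cdot X=R_\tau X$.

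For uniqueness, suppose $RX=R'X$. Since $\mathrm{rank}(X)=d$, the columns of $X$ span $\R^d$, so $(R-R')X=0$ forces $R=R'$.

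For the determinant identity, take the first $d$ columns of both sides of $\tau\cdot X=R_\tau X$. This gives $(\tau\cdot X)_{[d]}=R_\tau X_{[d]}$, and multiplicativity of the determinant yields $\det(\tau\cdot X)_{[d]}=\det(R_\tau)\det X_{[d]}$.

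For (b), the first claim $\Phi(MX)=X^TM^TMX=A$ is immediate. The two actions commute, as recorded in the conventions: $\tau\cdot(MX)=(MX)P_\tau^T=M(\tau\cdot X)=MR_\tau X$. Restricting to the first $d$ columns gives $(\tau\cdot MX)_{[d]}=MR_\tau X_{[d]}$. Taking determinants then yields $\det(M)\det(R_\tau)\det X_{[d]}$.

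For (c), let $\det(R_\tau)=\det(M)=-1$. Define $R':=MR_\tau^{-1}$, which lies in $\OO(d)$ and has $\det R'=(-1)(-1)=1$, so $R'\in\SO(d)$. Using (a) and the commuting actions,
\[
(R',\tau)\cdot X=R'(\tau\cdot X)=MR_\tau^{-1}R_\tau X=MX .
\]
Hence $MX$ lies in the $\SO(d)\times S_n$-orbit of $X$.

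There is no genuine obstacle in this argument. The only points requiring care are that uniqueness of $R_\tau$ uses the full-rank hypothesis, and that the permutation action commutes with the rotation action under the stated conventions for $P_\tau$.
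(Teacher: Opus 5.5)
Your proof is correct and follows the paper's argument essentially step for step. Both use Gram-map equivariance together with Proposition~\ref{prop:gram-orbits} for existence of $R_\tau$, full rank for uniqueness, restriction to the first $d$ columns plus multiplicativity of $\det$ for the determinant identities in (a) and (b), and $MR_\tau^{-1}\in\SO(d)$ paired with $\tau$ as the group element witnessing (c).
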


\begin{proof}
    (a) By Proposition~\ref{prop:gram-equivariant}, $\Phi(\tau\cdot X)=\tau\cdot\Phi(X)=\tau\cdot A=A=\Phi(X)$, so Proposition~\ref{prop:gram-orbits} provides $R_\tau\in\OO(d)$ with $\tau\cdot X=R_\tau X$. If also $\tau\cdot X=R'X$ with $R'\in\OO(d)$, then $(R_\tau-R')X=0$, so $R_\tau-R'$ vanishes on the span of the columns of $X$, which is $\R^d$ because $\mathrm{rank}(X)=d$; hence $R'=R_\tau$. The first $d$ columns of $R_\tau X$ are $R_\tau$ times the first $d$ columns of $X$, so $(\tau\cdot X)_{[d]}=(R_\tau X)_{[d]}=R_\tau X_{[d]}$, and $\det$ is multiplicative.

    (b) $\Phi(MX)=\Phi(X)=A$ by Proposition~\ref{prop:gram-equivariant}. The two actions commute: $\tau\cdot(MX)=MXP_\tau^T=M(\tau\cdot X)=MR_\tau X$ by (a). Hence $(\tau\cdot MX)_{[d]}=MR_\tau X_{[d]}$ and $\det (\tau\cdot MX)_{[d]}=\det(M)\det(R_\tau)\det X_{[d]}$.

    (c) By (a), $MX=MR_\tau^{-1}(R_\tau X)=MR_\tau^{-1}(\tau\cdot X)=MR_\tau^{-1}XP_\tau^T=(MR_\tau^{-1},\tau)\cdot X$, and $\det(MR_\tau^{-1})=\det(M)/\det(R_\tau)=1$.
\end{proof}

\begin{proof}[Proof of Theorem~\ref{thm:dn2}]
    For $1\leq r\leq d$ set $c_r:=rn(2n-1)+N_r(n-1)$. We show that any two $X,Y\in\Rdnmat$ in different $\OO(d)\times S_n$-orbits are separated by an $\OO(d)\times S_n$-invariant polynomial of degree $\leq 2c_d$, and that any two in different $\SO(d)\times S_n$-orbits are separated by an $\SO(d)\times S_n$-invariant polynomial of degree $\leq 2c_d+d$. The two bounds of the theorem then follow from the identity $2c_d=6dn^2-4dn-d(d-1)(n-1)$ verified at the end.

    \emph{Reductions.} Let $X,Y\in\Rdnmat$ lie in different $\OO(d)\times S_n$-orbits, or in different $\SO(d)\times S_n$-orbits, and set $A_0:=\Phi(X)$, $B:=\Phi(Y)$. If $\mathrm{rank}(A_0)\neq\mathrm{rank}(B)$, Lemma~\ref{lem:rank-sep} gives $k\leq n$ with $\mathrm{tr}(A_0^k)\neq\mathrm{tr}(B^k)$. As $C\mapsto\mathrm{tr}(C^k)$ is an $S_n$-invariant polynomial on $\Symn$ of degree $k$, Corollary~\ref{cor:reduction} shows that $Z\mapsto\mathrm{tr}(\Phi(Z)^k)$ is an $\OO(d)\times S_n$-invariant, hence also $\SO(d)\times S_n$-invariant, polynomial of degree $\leq 2n$ separating $X$ from $Y$. So assume $\mathrm{rank}(A_0)=\mathrm{rank}(B)=:r$. Then $r=\mathrm{rank}(X)=\mathrm{rank}(Y)\leq\min(d,n)$, and $r\geq1$, since $\Phi(X)=0$ forces $X=0$ and $r=0$ would give $X=Y=0$. Choose $r$ linearly independent columns $x_{j_1},\dots,x_{j_r}$ of $X$ and $\sigma_0\in S_n$ with $\sigma_0(j_i)=i$ for $i\leq r$. The $i$-th column of $\sigma_0\cdot X$ is $x_{\sigma_0^{-1}(i)}=x_{j_i}$, so the leading $r\times r$ block of $\Phi(\sigma_0\cdot X)=\sigma_0\cdot A_0$ (Proposition~\ref{prop:gram-equivariant}) is the Gram matrix of $x_{j_1},\dots,x_{j_r}$, which is nonsingular. Since $\sigma_0\cdot X$ lies in the $S_n$-orbit of $X$, replacing $X$ by $\sigma_0\cdot X$ changes neither the hypothesis on the pair $(X,Y)$ nor the value at $X$ of any $\SO(d)\times S_n$-invariant polynomial. We may therefore assume that $A:=\Phi(X)$ has a nonsingular leading $r\times r$ block $A_{11}$, where $B=\Phi(Y)$ and $\mathrm{rank}(A)=\mathrm{rank}(B)=r\geq1$.

    \emph{Step 1: an $S_n$-invariant polynomial on $\Symn$.}
    Let $\delta_1,\dots,\delta_r$ be the row indicators of Lemma~\ref{lem:row-indicator} for the pair $(A,B)$; each has degree $\leq n(2n-1)$. For $(i,j)\in\mathcal{P}_r$ let
    \[ \Lambda_i \;:=\; \{A_{il} : 1\leq l\leq n\} \]
    be the set of values in the $i$-th row of $A$, and let $\ell_{ij}$ be the univariate Lagrange polynomial with $\ell_{ij}(A_{ij})=1$ and $\ell_{ij}(\lambda)=0$ for all $\lambda\in\Lambda_i\setminus\{A_{ij}\}$; its degree is $\leq|\Lambda_i|-1\leq n-1$. For $C\in \Symn$ define
    \[ \chi(C) \;:=\; \prod_{i=1}^{r}\delta_i(C)\ \prod_{(i,j)\in\mathcal{P}_r}\ell_{ij}\big(C_{ij}\big), \qquad Q(C) \;:=\; \sum_{\tau\in S_n} \chi(\tau\cdot C) . \]
    The polynomial $\chi$ has degree $\leq\sum_{i\leq r}\deg(\delta_i)+\sum_{(i,j)\in\mathcal{P}_r}\deg(\ell_{ij})\leq rn(2n-1)+N_r(n-1)=c_r$, and since $C\mapsto\tau\cdot C$ is linear, so does $Q$. For $\sigma\in S_n$ we have $\tau\cdot(\sigma\cdot C)=(\tau\sigma)\cdot C$, hence $\chi(\tau\cdot(\sigma\cdot C))=\chi((\tau\sigma)\cdot C)$; as $\tau\mapsto\tau\sigma$ is a bijection of $S_n$, replacing $C$ by $\sigma\cdot C$ merely reindexes the sum, and $Q$ is $S_n$-invariant.

    We claim that for $C\in\{A,B\}$ and $\tau\in S_n$,
    \begin{equation}\tag{$\ast$}
        \chi(\tau\cdot C)\in\{0,1\}, \qquad\text{and}\qquad \chi(\tau\cdot C)=1 \iff \tau\cdot C=A .
    \end{equation}
    Fix such $C$ and $\tau$. If $\delta_i(\tau\cdot C)=0$ for some $i\leq r$, then $\chi(\tau\cdot C)=0$ (an exact zero annihilates the product), and $\tau\cdot C\neq A$, because $\delta_i(A)=1$ by Lemma~\ref{lem:row-indicator}. Otherwise $\delta_i(\tau\cdot C)=1$ for all $i\leq r$, so by Lemma~\ref{lem:row-indicator} every row $i\leq r$ of $\tau\cdot C$ is a reordering of row $i$ of $A$. Then $(\tau\cdot C)_{ij}\in\Lambda_i$ for every $(i,j)\in\mathcal{P}_r$, where $\ell_{ij}$ takes values in $\{0,1\}$, with value $1$ exactly when $(\tau\cdot C)_{ij}=A_{ij}$. (Off its node set a Lagrange factor is uncontrolled --- possibly negative or large --- which is why the exact vanishing of a $\delta$-factor is what disposes of the mismatched terms.) Hence $\chi(\tau\cdot C)\in\{0,1\}$, and $\chi(\tau\cdot C)=1$ if and only if $(\tau\cdot C)_{ij}=A_{ij}$ for all $(i,j)\in\mathcal{P}_r$, that is, by symmetry, if and only if $\tau\cdot C$ and $A$ have the same first $r$ rows. If $\tau\cdot C=A$ this holds. Conversely, if the first $r$ rows agree, then $\tau\cdot C$ and $A$ share the nonsingular leading block $A_{11}$, and both have rank $r$ (conjugation by a permutation matrix preserves rank), so Lemma~\ref{lem:completion} applies to both and yields the same completion: $\tau\cdot C=A$. This proves ($\ast$).

    Writing $\mathrm{Stab}(A):=\{\tau\in S_n:\tau\cdot A=A\}$, ($\ast$) gives
    \[ Q(A)=|\mathrm{Stab}(A)|\;\geq\;1, \qquad Q(B)=\big|\{\tau\in S_n:\ \tau\cdot B=A\}\big| , \]
    so $Q(B)=0$ whenever $A$ and $B$ lie in different $S_n$-orbits (cf.\ Lemma~\ref{lem:witness}). The polynomial $Q$ is used in Steps~2 and~3, the polynomial $\chi$ in Step~4.

    \emph{Step 2: $\OO(d)\times S_n$.}
    Let $X,Y$ lie in different $\OO(d)\times S_n$-orbits. By Corollary~\ref{cor:reduction}, $A$ and $B$ lie in different $S_n$-orbits, so $Q(A)\geq1>0=Q(B)$ by Step~1. By Corollary~\ref{cor:reduction} again, $Q\circ\Phi$ is an $\OO(d)\times S_n$-invariant polynomial of degree $\leq 2c_r\leq 2c_d$ with $Q(\Phi(X))=Q(A)\neq Q(B)=Q(\Phi(Y))$.

    \emph{Step 3: $\SO(d)\times S_n$, $r<d$.}
    Let $X,Y$ lie in different $\SO(d)\times S_n$-orbits, with $r<d$. If $Y$ lay in the $\OO(d)\times S_n$-orbit of $X$, Lemma~\ref{lem:rank-deficient}, applicable because $\mathrm{rank}(X)=r<d$, would place $Y$ in the $\SO(d)\times S_n$-orbit of $X$. Hence $X,Y$ lie in different $\OO(d)\times S_n$-orbits, and the polynomial $Q\circ\Phi$ of Step~2 separates them. Being $\OO(d)\times S_n$-invariant, it is $\SO(d)\times S_n$-invariant, and its degree is $\leq 2c_d$.

    \emph{Step 4: $\SO(d)\times S_n$, $r=d$.}
    Let $X,Y$ lie in different $\SO(d)\times S_n$-orbits, with $r=d$; then $d\leq n$ and $X$ has rank $d$. By the conventions above, $X_{[d]}^TX_{[d]}=A_{11}$, so $(\det X_{[d]})^2=\det A_{11}\neq0$. Let
    \[ \kappa(t) \;:=\; \tfrac12\Big(1+\frac{t}{\det X_{[d]}}\Big) \]
    be the affine function with $\kappa(\det X_{[d]})=1$ and $\kappa(-\det X_{[d]})=0$, and define, for $Z\in\Rdnmat$,
    \begin{align*}
    p(Z) \;&:=\; \sum_{\tau\in S_n}\ \kappa\big(\det (\tau\cdot Z)_{[d]}\big)\ \chi\big(\Phi(\tau\cdot Z)\big)\\
    &\;=\; \sum_{\tau\in S_n} \kappa\big(\det (\tau\cdot Z)_{[d]}\big)\, \chi\big(\tau\cdot\Phi(Z)\big) ,
    \end{align*}
    where the second expression uses $\Phi(\tau\cdot Z)=\tau\cdot\Phi(Z)$ (Proposition~\ref{prop:gram-equivariant}). Thus $p$ is a polynomial on $\Rdnmat$ whose $\tau$-summand is the $\tau$-summand of $Q\circ\Phi$ multiplied by a determinant factor; the latter does not factor through $\Phi$.

    \emph{Invariance.} In the first expression, the $\tau$-summand depends on $Z$ only through $\tau\cdot Z$. For $\sigma\in S_n$ we have $\tau\cdot(\sigma\cdot Z)=(\tau\sigma)\cdot Z$, so the $\tau$-summand of $p(\sigma\cdot Z)$ is the $\tau\sigma$-summand of $p(Z)$, and $p(\sigma\cdot Z)=p(Z)$. For $R\in\OO(d)$ we have $\tau\cdot(RZ)=R(\tau\cdot Z)$, hence $\Phi(\tau\cdot RZ)=\Phi(\tau\cdot Z)$ by Proposition~\ref{prop:gram-equivariant} and $(\tau\cdot RZ)_{[d]}=R\,(\tau\cdot Z)_{[d]}$, so that
    \[ p(RZ) \;=\; \sum_{\tau\in S_n} \kappa\big(\det(R)\det (\tau\cdot Z)_{[d]}\big)\, \chi\big(\tau\cdot\Phi(Z)\big) . \]
    For $\det(R)=1$ this is $p(Z)$. Hence $p$ is $\SO(d)\times S_n$-invariant. For $\det(R)=-1$ the argument of $\kappa$ changes sign, so $p$ need not be $\OO(d)$-invariant; this is the only place where $p$ can distinguish a cloud from its mirror image.

    \emph{Values.} Let $Z\in\Rdnmat$ satisfy $\Phi(Z)=A$. By ($\ast$), $\chi(\tau\cdot A)=1$ for $\tau\in\mathrm{Stab}(A)$ and $\chi(\tau\cdot A)=0$ otherwise, so
    \[ p(Z) \;=\; \sum_{\tau\in\mathrm{Stab}(A)} \kappa\big(\det (\tau\cdot Z)_{[d]}\big) . \]
    For $Z=X$, Lemma~\ref{lem:stab-sign}(a) gives $\det (\tau\cdot X)_{[d]}=\det(R_\tau)\det X_{[d]}$ with $\det(R_\tau)=\pm1$, so each term is $\kappa(\pm\det X_{[d]})\in\{0,1\}$. Fix $M\in\OO(d)$ with $\det(M)=-1$, for instance $M=\mathrm{diag}(-1,1,\dots,1)$. By Lemma~\ref{lem:stab-sign}(b), $\Phi(MX)=A$ and $\det (\tau\cdot MX)_{[d]}=-\det(R_\tau)\det X_{[d]}$. Therefore
    \begin{align*}
    p(X) \;&=\; \#\{\tau\in\mathrm{Stab}(A):\ \det(R_\tau)=+1\}\;\geq\;1,\\
    p(MX) \;&=\; \#\{\tau\in\mathrm{Stab}(A):\ \det(R_\tau)=-1\} ,
    \end{align*}
    where $p(X)\geq1$ because $\mathrm{id}\in\mathrm{Stab}(A)$ and $R_{\mathrm{id}}=I_d$ by the uniqueness in Lemma~\ref{lem:stab-sign}(a). Finally, if $A$ and $B$ lie in different $S_n$-orbits, then $\chi(\tau\cdot B)=0$ for every $\tau$ by ($\ast$), and $\Phi(Y)=B$ gives $p(Y)=0$.

    \emph{Separation.} If $Y$ lies outside the $\OO(d)\times S_n$-orbit of $X$, then $A$ and $B$ lie in different $S_n$-orbits by Corollary~\ref{cor:reduction}, and $p(X)\geq1>0=p(Y)$. Otherwise $Y=(R',\sigma)\cdot X$ with $R'\in\OO(d)$ and $\sigma\in S_n$. Every such $Y$ has the form $(R,\sigma)\cdot X$ or $(R,\sigma)\cdot MX$ with $R\in\SO(d)$: take $R=R'$ if $\det(R')=1$, and otherwise $R=R'M^{-1}\in\SO(d)$, for which $(R,\sigma)\cdot MX=R'M^{-1}MXP_\sigma^T=Y$. The first form is excluded, since $Y$ is not in the $\SO(d)\times S_n$-orbit of $X$. So $Y=(R,\sigma)\cdot MX$ with $R\in\SO(d)$, and $p(Y)=p(MX)$ by invariance. Moreover $Y$ and $MX$ lie in the same $\SO(d)\times S_n$-orbit, so $MX$ does not lie in the $\SO(d)\times S_n$-orbit of $X$; by Lemma~\ref{lem:stab-sign}(c), no $\tau\in\mathrm{Stab}(A)$ has $\det(R_\tau)=-1$, and the value formula gives $p(MX)=0$. Thus $p(X)\geq1>0=p(Y)$ in this case too. In either case $p$ is an $\SO(d)\times S_n$-invariant polynomial separating $X$ from $Y$.

    \emph{Degree.}\label{pg:degree} Since $N_{r+1}-N_r=n-r$, we have $c_{r+1}-c_r=n(2n-1)+(n-1)(n-r)>0$ for $r<n$, so $c_r$ is increasing in $r$ and $c_r\leq c_d=dn(2n-1)+N_d(n-1)$ for all $r\leq d$; also $2n\leq 2c_1\leq 2c_d$. Hence the polynomials of the rank reduction and of Steps~2 and~3 have degree $\leq 2c_d$. In Step~4, every entry of $\Phi(Z)$ is quadratic in the entries of $Z$, so $Z\mapsto\chi(\Phi(\tau\cdot Z))$ has degree $\leq 2c_d$; and $\kappa$ is affine while $\det (\tau\cdot Z)_{[d]}$, the determinant of the $d\times d$ matrix with columns $z_{\tau^{-1}(1)},\dots,z_{\tau^{-1}(d)}$, is a homogeneous polynomial of degree $d$ in the entries of $Z$. Hence $\deg(p)\leq 2c_d+d$. Using $2N_d(n-1)=2dn(n-1)-d(d-1)(n-1)=2dn^2-2dn-d(d-1)(n-1)$,
    \begin{align*}
    2c_d \;&=\; 2dn(2n-1)+2N_d(n-1)\\
    &=\; (4dn^2-2dn)+(2dn^2-2dn)-d(d-1)(n-1) \;=\; 6dn^2-4dn-d(d-1)(n-1) .
    \end{align*}
    Therefore $D(\OO(d)\times S_n,\Rdnmat)\leq 6dn^2-4dn-d(d-1)(n-1)$ by Step~2, and $D(\SO(d)\times S_n,\Rdnmat)\leq 6dn^2-4dn-d(d-1)(n-1)+d$ by Steps~3 and~4. By Lemma~\ref{lem:closing_options}, $D(E(d)\times S_n,\Rdnmat)\leq D(\OO(d)\times S_n,\Rdnmat)$ and $D(SE(d)\times S_n,\Rdnmat)\leq D(\SO(d)\times S_n,\Rdnmat)$, so the same two bounds hold for $E(d)$ and $SE(d)$. Both bounds are at most $6dn^2$, because $d\leq 4dn+d(d-1)(n-1)$ for $n\geq1$.
\end{proof}

\section{Constructing invariant polynomials via invariant models}
\subsection{Tensor Field Networks} \label{app:tfn}
Here we add some details on the construction of $\Gthree$ invariant polynomials by  TFN (Proposition \ref{prop:TFN}). First we discuss our assumptions: the TFN model we refer to is the one defined in \cite{dymMaron}. It  differs from the original TFN models only in that it allows the radial functions in the TFN model to be polynomial rather than MLPs. This allows the TFN models to exactly express polynomials. In the MLP version, the word express can be replaced with 'approximate on a compact set', since MLPs can approximate any polynomial on a compact set. This is discussed in detail in \cite{dymMaron}. 

In our analysis we assume that TFN produces a permutation invariant global feature by summing over all node features. 

Now we  explain how Proposition \ref{prop:TFN} follows from the results in \cite{dymMaron}. The first part of Theorem 2 in that paper, in the special case where $W_T$ is the trivial representation $W_0=\RR $, gives the following result:
\begin{proposition}  [Taken from Theorem 2 in \citealp{dymMaron}]
For all $n,D\in \NN$, every polynomial $p:\RR^{3\times n} \to \RR^n $ which is rotation and translation invariant, and permutation equivariant, can be expressed using TFN with irreducible features of order $D$, and $2D$ layers. 
\end{proposition}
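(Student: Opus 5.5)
This proposition is a direct specialization of Theorem~2 of \cite{dymMaron}, so the plan is first to check that the hypotheses line up and then to recall why the construction there works. Concretely, I would take the output representation $W_T$ in that theorem to be the trivial representation $W_0=\RR$. The theorem's conclusion then concerns maps $\RR^{3\times n}\to W_0^n=\RR^n$ that are $SO(3)$ and translation invariant and $S_n$ equivariant, which is exactly the class in the statement. One also has to confirm that the TFN model there, with polynomial radial functions, is the one fixed at the start of this appendix. With that checked, the statement follows formally. For completeness I would still recall the structure of the argument in three steps.

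\emph{Step 1: a spanning family.} Translation invariance lets one write $p$ as a polynomial in the relative vectors $x_j-x_i$. Equivariance lets one average over $S_n$ and over $SO(3)$ (Haar measure), which preserves degree. The aim is to show that the equivariant polynomials of degree $\le D$ are spanned by maps of the form
\[
X\mapsto \Big(\sum_{j_1,\dots,j_k}\Big\langle C,\ \textstyle\bigotimes_{s} \|x_{j_s}-x_i\|^{2a_s} Y_{\ell_s}(x_{j_s}-x_i)\Big\rangle\Big)_{i=1}^n ,
\]
with $\sum_s(2a_s+\ell_s)\le D$ and $C$ an invariant Clebsch--Gordan contraction. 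The justification is that the product of spherical harmonics decomposes every homogeneous polynomial on $\RR^3$ into harmonic pieces times powers of the norm. This is the ``$D$-spanning'' property in \cite{dymMaron}.

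\emph{Step 2: realization by TFN.} Each layer performs a convolution $\sum_j R(\|x_j-x_i\|)Y_\ell(x_j-x_i)\otimes V_j$, followed by a Clebsch--Gordan projection. By induction on the number of factors, after one pair of layers per factor one can carry the partially contracted tensor in channels of order $\le D$. The polynomial radial functions supply the norm powers, and linear channel mixing supplies arbitrary linear combinations. The total degree is at most $D$, so no irreducible of order above $D$ is needed. Together with the factor two for the pairing, this yields depth $2D$.

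\emph{Main obstacle.} I expect the hard part to be the sums over tuples $j_1,\dots,j_k$ with repeated or coinciding indices, including $j_s=i$. Iterated message passing naturally produces nested sums of the form $\sum_{j}\sum_{j'}$ over neighbours of neighbours rather than independent sums around a common centre $i$. To handle this, I would follow \cite{dymMaron} and use alternate layers to first build the node-wise quantity $\sum_j Y_\ell(x_j-x_i)\otimes(\cdot)$ and then combine it with self-interaction. This is where the doubling $2D$ in the depth comes from. Since this bookkeeping is carried out in full in \cite{dymMaron}, here I would only cite it.
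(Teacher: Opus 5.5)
Your proposal is correct and coincides with the paper's argument, which likewise obtains the statement by specializing the first part of Theorem~2 of \cite{dymMaron} to the trivial output representation $W_T=W_0=\RR$, for the TFN with polynomial radial functions defined there (with the implicit hypothesis, which the statement omits and you also use, that $p$ has degree at most $D$). The recap of the internal argument is not needed, and its last step is not accurate as stated: TFN self-interaction is linear and cannot multiply node features, so products of sums centred at $x_i$ must come from the tensor products inside the convolutions after re-centring, e.g.\ $x_k-x_i=(x_k-x_j)+(x_j-x_i)$ as the paper does for multi-iteration MACE; since you defer to the citation, this does not affect the proof.
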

It remains only to show that this claim can be translated from equivariant polynomials to invariant polynomials. This is easily done: let $q:\RR^{3\times n} \to \RR $ be a $\Gthree $
invariant polynomial of degree $D$. Then
$p(X)=n^{-1}q(X)1_n $
is a rotation and translation invariant, and permutation equivariant, polynomial. And therefore $p$ can be expressed as a TFN model of the advertised size. By applying sum pooling we obtain $q$, that is 
$q(X)=\sum_{j=1}^n p_j(X) .$

\subsection{MACE}\label{app:MACE}
Here we give the details regarding the construction of $\Gthree$ invariant polynomials by  MACE (Proposition \ref{prop:mace}).

We consider the original MACE \citep{mace} architecture with the following simplifications: (a) we take a single iteration $t=1$ (b) we remove from the input the $z_i$ features discussed in MACE (representing the atom numbers) (c) For every point $i$ we perform message passing on all points and not only on points within a given distance threshold.  We also make two changes (a) as in the TFN model, we allow the functions $R_{k\ell} $ to be any continuous functions, and (b) following ACE \citep{ace}, in which each factor of a basis function has its own radial index, we use weights $w_{k\tilde k\xi}$ that depend on the position $\xi$ of the factor, instead of $w_{k\tilde k\ell_\xi}$ of \cite{mace}. This extends MACE: with $w_{k\tilde k\ell_\xi}$, factors of equal order in the same channel coincide, so couplings antisymmetric in them vanish, e.g. the triple product $\det(v,v,v)=0$ of three equal factors of order $1$. In particular, if $w_{k\tilde k\xi}=1$ for $\tilde k=k_\xi$ and $w_{k\tilde k\xi}=0$ otherwise, as in the proof below, then \eqref{eq:B} becomes $\sum_{\boldsymbol{\ell m}}\mathcal C^{LM}_{\eta_\nu,\boldsymbol{\ell m}}\prod_{\xi=1}^{\nu}A_{i,k_\xi\ell_\xi m_\xi}$, which has the form of an ACE basis function \citep{ace}, with the channel $k_\xi$ in the role of the radial index of the $\xi$-th factor. Finally, the Clebsch--Gordan coefficients in \eqref{eq:B}, and in \eqref{eq:At} below, range over all couplings of representations of $SO(3)$, without the parity selection rule of the $O(3)$-equivariant implementation of \cite{mace}, which makes the model invariant under $X\mapsto-X$; with this rule, the same argument gives $(E(3)\times S_n,\RR^{3\times n})$ completeness with $D=18n^2-18n+6$, since only couplings with $\ell_1+\dots+\ell_\nu$ even are then needed. 

With these changes, the MACE model is of the form:

\begin{align}
A_{i,k\ell m} &= \sum_{j\neq i} R_{k\ell}(|x_j-x_i|)\, Y^{m}_{\ell}\left(\frac{x_j-x_i}{|x_j-x_i|}\right) \label{eq:A} \\
  B_{i,\eta_\nu k L M} &= \sum_{\boldsymbol{\ell m}}
    \mathcal C^{LM}_{\eta_\nu, \boldsymbol{\ell m}}
    \prod_{\xi=1}^{\nu} \sum_{\tilde k} w_{k \tilde k \xi}\, A_{i,\tilde k \ell_\xi m_\xi},
  \qquad \boldsymbol{\ell m} = (\ell_1 m_1, \dots, \ell_\nu m_\nu) \label{eq:B}\\
  m_{i,kLM} &= \sum_{\nu} \sum_{\eta_\nu} W_{ k L, \eta_\nu}\, B_{i,\eta_\nu k L M}, \label{eq:pooling}
\\
E_i&=\mathrm{MLP}\left(m_{ik00}, k=1,\ldots,\Kmax \right) \label{eq:Ei}\\
E&=\sum_{i=1}^nE_i \label{eq:E}
\end{align}
Important hyperparameters of this model are: $\ellmax$, the maximal irreducible degree used throughout the model. $\numax$, the maximal `body order', and  $\Kmax$, the number of `channels'.  

We now explain this in more detail. In \eqref{eq:A} we define $\Kmax  $ high dimensional features $A_{i,k}, k=1,\ldots,\Kmax $
for each point $x_i$. Each such feature is in $W_0\oplus W_1\oplus \ldots \oplus W_{\ellmax} $, and so is indexed by $A_{i,k,\ell,m}$ where $\ell$ denotes the representation and $m$ is the internal index inside the representation. The functions $R_{k\ell} $ are generally chosen to be some parametric model, and in our analysis we will choose them to be polynomial functions, as will be seen in the proof.

In Equation \ref{eq:B} new features $B_{i,k} $ are obtained by taking tensor products of $\nu\leq \numax $ linear combinations of $A_{i,\tilde k} $ features, and the using the Clebsch-Gordan coefficients  $\mathcal C^{LM}_{\eta_\nu, \boldsymbol{lm}}$ to decompose these tensor products back to $W_0^{r_0}\oplus W_1^{r_1}\oplus \ldots \oplus W_{\ellmax}^{r_{\ellmax}}$ (the tensor product could have components in $W_\ell $ with $\ell>\ellmax$ but these components are discarded). The features $ B_{i,\eta_\nu k L M}  $ are indexed similarly to the $A_{i,k\ell m}$ features, but now we have an additional index $\eta_\nu$ due to the possible multiplicity $r_\ell>1 $ of the features. 

Equation \ref{eq:pooling} is a linear pooling step over multiplicities and body orders. As can be seen in the next equation, in our setting only the $L=0,m=0$ components are of interest in this step. 

Equation \ref{eq:Ei} obtains a rotation invariant feature per node $E_i$ by applying an MLP (Multi-Layer Perceptron) to the $\ell=0,m=0$ features, for which the action of $SO(3)$ is trivial, namely $D_\ell(R)=1 $ when $\ell=0$. 

Finally, Equation \ref{eq:E} produces a permutation invariant feature by summing over $E_i$. Thus $E=E(X) $ is a $\Gthree$ invariant function.
We now recall and prove Proposition \ref{prop:mace}:
\propmace*
In other words, this means that if we choose the MACE hyperparameters to be $\numax=\min\{D,n-1\} $ and $\ellmax = \lfloor D/2 \rfloor$, and choose some appropriate $\Kmax=\Kmax(D) $, then there exists a choice of the functions $R_{kl} $ and their parameters $w_{k \tilde k \eta},W_{kL\eta_\nu} $ of MACE such that $E(X)=q(X)$.

The proof of the proposition will be based on the following proposition:
\begin{proposition}\label{prop:mace_equi}
Let $p:\RR^{3\times n}\to \RR^n$ be a polynomial with degree at most $D$ that is invariant to rotations and translations, and equivariant to permutations. That is, that the coordinates $p_j(X) $ are rotation and translation invariant, and
    \begin{equation}\label{eq:poly}
    \left(p_1(\tau(X)),\ldots,p_n(\tau(X)) \right)=\left(p_{\tau^{-1}(1)}(X),\ldots, p_{\tau^{-1}(n)}(X) \right), \quad \forall \tau \in S_n. \end{equation}
  Then there exists a single layer MACE architecture with $\numax=\min\{D,n-1\}$ and $\ellmax = \lfloor D/2 \rfloor$, such that the node features $E_i=E_i(X) $ from the MACE architecture satisfy
  $$E_i(X)=p_i(X), \forall i=1\ldots,n $$
\end{proposition}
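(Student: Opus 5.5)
Our plan is to write each coordinate $p_i$ in terms of the relative vectors $r_{ij}:=x_j-x_i$, $j\neq i$, and realize it as a rotation invariant polynomial of these vectors that is symmetric in them. By translation invariance, $p_i(X)=p_i(X-x_i1_n^T)$, so $p_i$ is a polynomial of degree $\leq D$ in $(r_{ij})_{j\neq i}$. By \eqref{eq:poly} applied to permutations $\tau$ fixing $i$, $p_i$ is symmetric under permutations of $\{r_{ij}\}_{j\neq i}$. Moreover, applying \eqref{eq:poly} with $\tau$ mapping $1$ to $i$ shows $p_i(X)=\tilde p(r_{i\cdot})$ for a single symmetric function $\tilde p=p_1$ of $n-1$ vectors. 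So it suffices to express one $SO(3)$-invariant polynomial $\tilde p$ of $n-1$ vectors in $\RR^3$, symmetric under $S_{n-1}$ and of degree $\leq D$, by the node feature $m_{i,k00}$ (possibly summed over a few channels, with the MLP in \eqref{eq:Ei} taken as a linear readout).

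\textbf{Step 1 (symmetric polynomials via power sums).} We would use the classical fact that multisymmetric polynomials in $n-1$ vector variables are spanned by products of at most $n-1$ power sums $\sum_{j} \phi(r_{ij})$, where each $\phi$ is a monomial in $\RR^3$. Degree is preserved if we restrict to products whose factor degrees sum to $\leq D$ (hence at most $\min\{D,n-1\}$ factors, since degree-zero factors are constants). This explains $\numax=\min\{D,n-1\}$: writing $\tilde p=\sum c_\alpha \prod_{\xi=1}^{\nu}\big(\sum_j \phi_{\alpha,\xi}(r_{ij})\big)$ with $\nu\leq \numax$.

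\textbf{Step 2 (matching to \eqref{eq:A}--\eqref{eq:B}).} We decompose each monomial $\phi(r)$ of degree $s$ as $\sum_{\ell\equiv s \,(2)} |r|^{s-\ell} h_\ell(r)$ with $h_\ell$ harmonic of degree $\ell$, i.e. $|r|^{s}\sum_m c_{\ell m} Y_\ell^m(r/|r|)$. Each power sum then becomes a linear combination of $A_{i,k\ell m}$ with polynomial radial functions $R_{k\ell}(t)=t^{s}$, each assigned its own channel $k$. The product over $\xi$ is an element of $\bigotimes_\xi (\bigoplus_\ell W_\ell)$, and we must take its $SO(3)$-invariant part. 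Since $\tilde p$ is invariant, we can average over $SO(3)$ (Reynolds operator), which commutes with the sums and replaces each tensor product term by its projection onto the trivial component. That projection is exactly a combination of the $L=0$ Clebsch--Gordan couplings $\mathcal C^{00}_{\eta_\nu,\boldsymbol{\ell m}}$. The per-factor weights $w_{k\tilde k\xi}$ select channel $k_\xi$ for factor $\xi$, as explained above, and $W_{kL,\eta_\nu}$ sums the terms.

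\textbf{Step 3 (the bound $\ellmax=\lfloor D/2\rfloor$).} This is the main obstacle, since naively a factor of degree up to $D$ needs $\ell$ up to $D$. The key is that a coupling to $L=0$ of orders $\ell_1,\dots,\ell_\nu$ is nonzero only if the largest $\ell_\xi$ is at most the sum of the others (the triangle inequality from iterated Clebsch--Gordan). Hence $2\max_\xi\ell_\xi\leq\sum_\xi\ell_\xi\leq\sum_\xi s_\xi\leq D$, so $\max\ell_\xi\leq\lfloor D/2\rfloor$. We would verify that the Reynolds projection kills every term with some $\ell_\xi>\lfloor D/2\rfloor$, which lets us truncate $\ellmax$. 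This also requires checking that the intermediate couplings of the iterated decomposition stay within $\ellmax$: we can order the coupling so that partial sums never exceed $\lfloor D/2\rfloor$, or note that \eqref{eq:B} only restricts the final $L$. Finally, $E_i=p_i$ holds for all $i$ simultaneously because the same weights are used at every node.
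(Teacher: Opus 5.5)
Your proposal is correct and follows essentially the same route as the paper: power-sum expansion of $p_i$ in the relative vectors $x_j-x_i$ with at most $\min\{D,n-1\}$ factors, harmonic decomposition realized by polynomial radial functions with per-factor channel selection, averaging over $SO(3)$ to keep only the $L=0$ couplings, and the Clebsch--Gordan triangle inequality $2\ell_\xi\le\sum_{\xi'}\ell_{\xi'}\le D$ to truncate at $\lfloor D/2\rfloor$. The one difference is in the factor bound: you obtain it from the classical monomial basis of multisymmetric polynomials (inclusion--exclusion over set partitions), whereas the paper starts from the cited characterization of permutation-equivariant polynomials and proves the antisymmetrization Lemma~\ref{lem:newton}. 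Your concern about intermediate couplings is harmless, since \eqref{eq:B} restricts only the output order $L$, and in any case every intermediate order on a coupling path ending at $L=0$ is at most $\tfrac12\sum_\xi\ell_\xi\le D/2$.
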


This proposition easily implies the original proposition \ref{prop:mace}, for if $q:\RR^{3\times n} \to \RR $ is of degree $\leq D$ and $\Gthree $ invariant, then we can apply Proposition \ref{prop:mace_equi} for the equivariant polynomial $p=n^{-1}(q,q,\ldots,q) $ to obtain a MACE architecture with $E_i(X)=n^{-1}q(X) $, and so $E(X)=q(X) $. 

In the proof we will use the following lemma, which shows that a product of many sums over the same points can be written using products of fewer sums.
\begin{lemma}\label{lem:newton}
Let $u_1,\ldots,u_M\in \RR^3$ and let $N\geq M$. For a function $f:\RR^3\to \RR$ denote $P(f)=\sum_{j=1}^M f(u_j)$. Then for every $f_0,\ldots,f_N:\RR^3\to \RR$, the product $P(f_0)P(f_1)\cdots P(f_N)$ is a linear combination of products of the form
$$P\Big(\prod_{s\in c_1}f_s\Big)\cdot P\Big(\prod_{s\in c_2}f_s\Big)\cdots P\Big(\prod_{s\in c_r}f_s\Big), $$
where $c_1,\ldots,c_r$ is a partition of $\{0,\ldots,N\}$ into $r\leq N$ sets. The coefficients depend only on the partition, and not on the functions $f_s$, the points $u_j$ or $M$.
\end{lemma}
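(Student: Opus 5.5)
The plan is to expand the product as a sum over index tuples, group the terms by which indices coincide, discard the one class that must vanish because $N+1>M$, and convert what remains back into products of $P$'s by Möbius inversion on the lattice of set partitions.

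\emph{Expansion and grouping.} Expanding the product gives
$$P(f_0)\cdots P(f_N)=\sum_{(j_0,\dots,j_N)\in[M]^{N+1}}\ \prod_{s=0}^N f_s(u_{j_s}).$$
Each tuple $(j_0,\dots,j_N)$ determines a \emph{kernel partition} $\pi$ of $\{0,\dots,N\}$: $s$ and $s'$ lie in the same block exactly when $j_s=j_{s'}$. For a partition $\pi$ with blocks $b_1,\dots,b_k$, define the injective sum
$$S_\pi:=\sum_{\substack{(i_1,\dots,i_k)\in[M]^k\\ \text{pairwise distinct}}}\ \prod_{a=1}^k\Big(\prod_{s\in b_a}f_s\Big)(u_{i_a}).$$
Sorting the tuples by their kernel partition gives $P(f_0)\cdots P(f_N)=\sum_\pi S_\pi$. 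For the partition into $N+1$ singletons, $S_\pi$ would need $N+1$ pairwise distinct indices in $[M]$, which is impossible since $N+1>M$. So that term vanishes, and $P(f_0)\cdots P(f_N)=\sum_{|\pi|\le N}S_\pi$.

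\emph{Möbius inversion.} For each partition $\sigma$, set $T_\sigma:=\prod_{c\in\sigma}P\big(\prod_{s\in c}f_s\big)$, which is exactly the kind of product allowed in the statement. Expanding $T_\sigma$ by the same grouping argument, but now over tuples indexed by the blocks of $\sigma$, gives $T_\sigma=\sum_{\pi\ge\sigma}S_\pi$, where $\pi\ge\sigma$ means $\pi$ is coarser than $\sigma$: blocks of $\sigma$ that receive equal indices merge into a block of $\pi$. Möbius inversion on the partition lattice then yields
$$S_\pi=\sum_{\sigma\ge\pi}\mu(\pi,\sigma)\,T_\sigma,$$
with $\mu$ the Möbius function of the lattice. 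Substituting this into the previous paragraph,
$$P(f_0)\cdots P(f_N)=\sum_{|\pi|\le N}\ \sum_{\sigma\ge\pi}\mu(\pi,\sigma)\,T_\sigma .$$

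\emph{Conclusion.} Every $\sigma$ appearing in the double sum is coarser than some $\pi$ with at most $N$ blocks, so $\sigma$ itself has at most $N$ blocks. This gives the required expression, with the coefficient of $T_\sigma$ equal to $\sum_{\pi\le\sigma,\ |\pi|\le N}\mu(\pi,\sigma)$. That coefficient depends only on the partition $\sigma$ and on $N$, not on the $f_s$, the points $u_j$, or $M$.

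The only step needing care is the orientation of the inversion. The coarsening in $T_\sigma=\sum_{\pi\ge\sigma}S_\pi$ must be set up so that inverting it expresses each $S_\pi$ through $T_\sigma$ with $\sigma$ coarser than $\pi$, never finer. This is what guarantees the block count never rises above $N$. I would check it directly on small cases such as $N=1$, $M=1$, where the lemma gives $P(f_0)P(f_1)=P(f_0f_1)$ since $M=1$.
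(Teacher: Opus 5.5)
Your proof is correct. It reaches the paper's identity by a different mechanism.

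The paper never introduces injective sums or the partition lattice. It shows directly that $\sum_{\sigma\in S_{N+1}}\mathrm{sgn}(\sigma)\prod_{c}P\big(\prod_{s\in c}f_s\big)=0$, with the product over the cycles of $\sigma$. To do so it expands each term as a sum over maps $\psi:\{0,\dots,N\}\to\{1,\dots,M\}$ with $\psi\circ\sigma=\psi$ and exchanges the order of summation. The inner sum $\sum_{\sigma:\,\psi\circ\sigma=\psi}\mathrm{sgn}(\sigma)$ then cancels under the sign-reversing bijection $\sigma\mapsto\sigma\circ\tau$, where $\tau$ swaps two points with the same $\psi$-value; such points exist because $N+1>M$. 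Isolating the identity term gives the lemma.

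Your argument is the partition-lattice form of the same fact. Write $\hat 0$ for the partition into singletons. The vanishing of $S_{\hat 0}$ plays the role of the non-injectivity of $\psi$. The signed number of permutations whose cycles are the blocks of $\sigma$ is exactly $\mu(\hat 0,\sigma)=\prod_{c\in\sigma}(-1)^{|c|-1}(|c|-1)!$. Your coefficient $\sum_{\pi\le\sigma,\,\pi\ne\hat 0}\mu(\pi,\sigma)$ equals $-\mu(\hat 0,\sigma)$, because $\sum_{\pi\le\sigma}\mu(\pi,\sigma)=0$ for $\sigma\ne\hat 0$. So it coincides with the paper's ``minus the sum of the signs.''

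This also shows you can shorten your proof. Instead of inverting every $S_\pi$ with $|\pi|\le N$, invert only $0=S_{\hat 0}=\sum_{\sigma}\mu(\hat 0,\sigma)T_\sigma$ and solve for $T_{\hat 0}=P(f_0)\cdots P(f_N)$. That is literally the paper's identity.

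The paper's route is more self-contained: it needs one involution rather than M\"obius inversion and the M\"obius function of the partition lattice, and it gives the explicit coefficients at once. Yours is more conceptual, placing the lemma within the standard inclusion--exclusion between products of power sums and injective sums. It also settles the orientation issue you flagged cleanly, since every $T_\sigma$ that appears is coarser than some partition with at most $N$ blocks.
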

\begin{proof}
We first show that
\begin{equation}\label{eq:newton}
\sum_{\sigma\in S_{N+1}}\mathrm{sgn}(\sigma)\prod_{c}P\Big(\prod_{s\in c}f_s\Big)=0,
\end{equation}
where $\sigma$ is a permutation of $\{0,\ldots,N\}$ and the product is over the cycles $c$ of $\sigma$ (including fixed points). Expanding the sums, the product over the cycles of $\sigma$ is equal to $\sum_{\psi}\prod_{s=0}^N f_s(u_{\psi(s)})$, where the sum is over all functions $\psi:\{0,\ldots,N\}\to \{1,\ldots,M\}$ which are constant on the cycles of $\sigma$, that is $\psi\circ \sigma=\psi$. Changing the order of summation, the left hand side of \eqref{eq:newton} is equal to
$$\sum_{\psi}\prod_{s=0}^N f_s(u_{\psi(s)})\sum_{\sigma: \, \psi\circ\sigma=\psi}\mathrm{sgn}(\sigma). $$
Since $N+1>M$, for every $\psi$ there exist $s\neq s'$ with $\psi(s)=\psi(s')$. Let $\tau$ be the transposition of $s$ and $s'$. Then $\sigma\mapsto \sigma\circ\tau$ is a bijection of the set $\{\sigma: \, \psi\circ\sigma=\psi\}$ to itself which changes the sign, so the inner sum is zero. This proves \eqref{eq:newton}.

The term of the identity permutation in \eqref{eq:newton} is $P(f_0)P(f_1)\cdots P(f_N)$, and every other permutation has at most $N$ cycles. Moving these terms to the other side we obtain the claim. The coefficient of a partition is minus the sum of the signs of the permutations whose cycles are the sets of the partition, so it depends only on the partition.
\end{proof}

\begin{proof}[Proof of Proposition \ref{prop:mace_equi}]
    The proof is based on the characterization of $\Gthree$ equivariant and invariant polynomials in \cite{dymMaron}.

Now fix some $i\in \{1,\ldots,n\}$ and    consider the $i$-th coordinate  $p_i:\RR^{3\times n}\to \RR$.
The permutation equivariance of $p$ implies (see \citealp[Theorem~2]{segol}) that 
$p_i $ is a linear combination of monomials of the form 
$$\sum_{i_2,i_3,\ldots,i_r=1}^n x_i^{\alpha_1}x_{i_2}^{\alpha_2}\cdots x_{i_r}^{\alpha_r} $$
where $r\geq 1$, each $\alpha_j=(\alpha_{1j},\alpha_{2j},\alpha_{3j}) $ is a multi-index and the total degree $\sum_{j=1}^r \|\alpha_j\|_1 $ is at most $D$. Since $p_i$ is translation invariant, it is unaffected by translating all entry columns by the  column $x_i$, showing that it is in fact a linear combination of expressions of the form  
\begin{equation*}\sum_{i_2,i_3,\ldots,i_r=1}^n (x_{i_2}-x_i)^{\alpha_2}\cdots (x_{i_r}-x_i)^{\alpha_r}= \left(\sum_{j\neq i} (x_j-x_i)^{\alpha_2} \right)\ldots \left(\sum_{j\neq i} (x_j-x_i)^{\alpha_r} \right) 
\end{equation*}
 Noting that factors with  $\alpha_t = 0$ are constants and are absorbed into the coefficients and every remaining factor has degree $\geq 1$, so the number of factors in the product above is at most $\sum_{t} |\alpha_t | \leq D$. Moreover, each of the sums is over the $n-1$ points $x_j-x_i$, $j\neq i$. Therefore, by applying Lemma \ref{lem:newton} repeatedly, a product of more than $n-1$ such sums is a linear combination of products of at most $n-1$ sums of the same form, where the new multi-indices are sums of the original $\alpha_t$, so the total degree does not change. The coefficients of this linear combination do not depend on $i$. Thus we can assume that the number of factors is at most $\min\{D,n-1\}$. 
  
  Next, we leverage the fact that every polynomial $q:\RR^3\to \RR $ of degree $D$ can be written as linear combination of polynomials of the form $\|x\|^{2k}Y_\ell^m(x) $ with $\ell+2k\leq D $  \citep{book_harmonic}. As a result, we deduce that every expression of the form $\left(\sum_{j\neq i} (x_j-x_i)^{\alpha} \right) $ above can be rewritten as a linear combination of polynomials of the form
  $$a_{i,k\ell m}(X)=\sum_{j\neq i}^n \|x_j-x_i\|^{2k}Y_{\ell}^m(x_j-x_i)$$
and therefore $p_i$ is a linear combination of polynomials of the form $$
c_{i,\vk,\vl,\vm}(X)=a_{i,k_1,\ell_1, m_1}(X)\cdot a_{i,k_2,\ell_2, m_2}(X) \cdot \ldots \cdot a_{i,k_\nu,\ell_\nu, m_\nu}(X) $$
where 
$$\vk=(k_1,\ldots,k_\nu), \vl=(\ell_1,\ldots,\ell_\nu), \vm=(m_1,\ldots,m_\nu) .$$
This gives that each $\sum_{j} (x_j - x_i)^{\alpha}$ is homogeneous of degree $|\alpha|$, so its expansion uses only $2k + l = | \alpha|$, therefore every product
  $c_{i,k,l,m}$ in the expansion of $p_i$ satisfies $\sum_{s} (2k_s + l_s) \leq D$. Now, let $c_{i,\vk}(X) $ be the tensor containing all entries $c_{i,\vk,\vl,\vm}(X) $, and let $$b_{i,\vk}(X)=\mathcal{C}c_{i,\vk}(X)  $$where $\mathcal C$ is the linear equivariant invertible map mapping the tensor product domain to a direct sum of irreducibles which we denote by $W=W_0^{r_0}\oplus W_1^{r_1}\oplus \ldots \oplus W_D^{r_D}$ (whose coordinates are the Clebsch-Gordan coefficients). 

Since $\sum_{s} l_s \leq D$, the irreducibles in $W$ are labeled at most by $D$. We deduce that for appropriate linear $\Lambda_{\vk}:W\to \RR $,
\begin{equation}\label{eq:pi_final}
p_i(X)=\sum_{\vk} \Lambda_{\vk}\left(b_{i,\vk}(X)\right)  \end{equation}
Moreover, if $p_i$ is additionally rotation invariant, then an averaging argument (see the proof of Theorem 1 in \citealp{dymMaron}) shows that the $\Lambda_{\vk}$ are rotation invariant. It follows almost directly from Schur's lemma (see Lemma 5 in \citealp{dymMaron} for details)  that $\Lambda_{\vk} $ only operates on the components of  $ b_{i,\vk}$ in the trivial representation $W_0^{r_0} $. Namely, if a general element in the tensor $ b_{i,\vk}$ is indexed by $b_{i,\vk,\eta,\ell,m}$, where $\eta$ ranges from $1$ to $r_\ell$, the multiplicity of $W_\ell$ in $W$, then 
\begin{equation}\label{eq:lambda}
\Lambda_{\vk}\left(b_{i,\vk} \right)=\sum_{\eta=1}^{r_0} \lambda_{\vk,\eta}b_{i,\vk\eta00} \text{ for some vector } \lambda_\vk=\left(\lambda_{\vk,1},\ldots, \lambda_{\vk,r_0}\right)
\end{equation}

 Since $\Lambda_{\vk}$ reads only the $W_0$-components of $b_{i,\vk}$, a term
$c_{i,\vk,\vl,\vm}$ contributes to $p_i$ only if $W_0$ occurs in
$W_{\ell_1}\otimes\cdots\otimes W_{\ell_\nu}$. A tensor product of irreducibles
contains no $W_J$ with $J$ larger than the sum of its labels (iterate the
Clebsch--Gordan rule $W_a\otimes W_b=\bigoplus_{c=|a-b|}^{a+b}W_c$), so if 
$W_0\subseteq W_{\ell_\xi}\otimes\bigl(\bigotimes_{\xi'\neq\xi}W_{\ell_{\xi'}}\bigr)$
then $\ell_\xi\le\sum_{\xi'\neq\xi}\ell_{\xi'}$ for every $\xi$, thus
$2\ell_\xi\le\sum_\xi\ell_\xi\le\sum_\xi(2k_\xi+\ell_\xi)\le D$. All other terms
may be dropped from the expansion of $p_i$, and we assume from now on that
$\ell_\xi\le\lfloor D/2\rfloor$ and $k_\xi\le\lfloor D/2\rfloor$ for every factor.

We now explain how MACE can express $p_i$. First note that we can express $a_{i,klm} $ using $A_{i,klm} $,  by choosing $R_{kl}(|x_j-x_i|)=|x_j-x_i|^{2k+l} $ to obtain 
$$A_{i,klm}(X)=\sum_{j\neq i} |x_j-x_i|^{2k+l}Y_l^m\left(\frac{x_j-x_i}{|x_j-x_i|}\right)\stackrel{(*)}{=}\sum_{j\neq i} |x_j-x_i|^{2k}Y_l^m\left(x_j-x_i\right)=a_{i,klm}(X),$$
where (*) follows from the fact that $Y_\ell^m $ is a degree $\ell$ homogeneous polynomial and above $A_{i,klm}$ features have $\lfloor D/2 \rfloor + 1$ channels indexed by $k = 0, …, \lfloor D/2 \rfloor$, with $R_{kl}(r)=r^{2k+l}$ for $l\leq \lfloor D/2 \rfloor$.

Next, we choose 
$$w_{\vk \tilde k \xi }=\twopartdef{1}{\tilde k=k_\xi}{0}{{\tilde k\neq k_\xi}}, \text{ so that } \sum_{\tilde k} w_{k \tilde k \xi}\, A_{i,\tilde k l_\xi m_\xi}= A_{i,k_\xi l_\xi m_\xi}$$
and we can then obtain that 
\begin{align*}
B_{i,\vk}&=\sum_{\boldsymbol{lm}}
    \mathcal C^{LM}_{\eta_\nu, \boldsymbol{lm}}
    \prod_{\xi=1}^{\nu} \sum_{\tilde k} w_{k \tilde k \xi}\, A_{i,\tilde k l_\xi m_\xi}\\
    &=
    \sum_{\boldsymbol{lm}}
    \mathcal C^{LM}_{\eta_\nu, \boldsymbol{lm}}
    \prod_{\xi=1}^{\nu} \, A_{i,k_\xi l_\xi m_\xi}\\
    &=
\mathcal{C}\left(a_{i,k_1}\otimes a_{i,k_2} \otimes \ldots \otimes a_{i,k_\nu} \right) 
\end{align*}
where $\mathcal{C}$ is the linear mapping defined by the Clebsch-Gordan coefficients  $\mathcal C^{LM}_{\eta_\nu, \boldsymbol{lm}} $. The $B_{i, \vk}$ are indexed by the tuple $\vk = (k_1, \ldots, k_{\nu})$ and there are $\Kmax(D)$ valid tuples, that is those with $k_{i} \leq \lfloor D/2 \rfloor$ and $\nu \leq \min\{D,n-1\}$, which are those we iterate over so that $w_{k\tilde k \xi}$ has a well-defined index range. The $L=0$ components of $B_{i,\vk}$ coincide with those of $b_{i,\vk}$ used in \Cref{eq:lambda}, a fact we use below. 

Thus, choosing $W_{\vk 0 \eta}=\lambda_{\vk \eta} $ and $W_{\vk L \eta}=0$ for $L>0$, and choosing the MLP in the definition of $E_i$ to just sum over all coordinates $\vk$, we obtain 
\begin{align*}
p_i(X)&\stackrel{\eqref{eq:pi_final}}{=} \sum_{\vk} \Lambda_{\vk}\left(b_{i,\vk}(X)\right)\stackrel{\eqref{eq:lambda}}{=}
\sum_{\vk}\sum_{\eta=1}^{r_0} \lambda_{\vk,\eta}b_{i,\vk\eta00}=E_i(X) 
\end{align*}

\end{proof}

\subsection{MACE with multiple iterations}\label{app:MACE_multi}
So far we considered MACE with a single iteration. We now explain how the results above extend to MACE with $T\geq 1$ iterations. As above, we perform message passing on all points, so there is no underlying graph. In the MACE architecture \citep{mace} with $T$ iterations, every point $x_i$ has a hidden state $h^{(t)}_i$, where $h^{(0)}_i=1$, and the hidden states are updated by
\begin{align}
A^{(t)}_{i,k\ell_3 m_3} &= \sum_{\ell_1 m_1,\ell_2 m_2}\mathcal C^{\ell_3 m_3}_{\ell_1 m_1,\ell_2 m_2}\sum_{j\neq i} R^{(t)}_{k\ell_1\ell_2\ell_3}(|x_j-x_i|)\, Y^{m_1}_{\ell_1}\left(\frac{x_j-x_i}{|x_j-x_i|}\right)\sum_{\tilde k} W^{(t)}_{k\tilde k\ell_2}\, h^{(t)}_{j,\tilde k\ell_2 m_2} \label{eq:At}\\
h^{(t+1)}_{i,kLM} &= \sum_{\tilde k} U^{(t)}_{kL,\tilde k}\, m^{(t)}_{i,\tilde k LM}+\sum_{\tilde k} \tilde U^{(t)}_{kL,\tilde k}\, h^{(t)}_{i,\tilde k LM} \label{eq:update}
\end{align}
for $t=0,\ldots,T-1$, where $B^{(t)}_i$ and $m^{(t)}_i$ are computed from $A^{(t)}_i$ as in \eqref{eq:B} and \eqref{eq:pooling}, and the node feature $E_i$ is obtained by applying the MLP to the features $h^{(T)}_{i,k00}$. Note that for $t=0$ we have $h^{(0)}_j=1$, so only $\ell_2=0$ appears and \eqref{eq:At} is the same as \eqref{eq:A}. Therefore the single iteration model above is obtained by choosing $T=1$, $U^{(0)}$ to be the identity and $\tilde U^{(0)}=0$.

\begin{proposition}\label{prop:mace_multi}
Let $T\geq 1$. For every MACE model with a single iteration, there exists a MACE model with $T$ iterations and the same $\numax$, $\ellmax$ and $\Kmax$, which has the same node features $E_i$. In particular, Proposition \ref{prop:mace} and Proposition \ref{prop:mace_equi} hold for MACE with $T$ iterations. The same is true when the summation in \eqref{eq:A} and \eqref{eq:At} is only over the points at distance at most $\rho$ from $x_i$.
\end{proposition}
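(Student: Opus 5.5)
The plan is to simulate the single-iteration model inside a $T$-iteration model by making iterations $t\geq1$ act as the identity on the hidden state. Fix a single-iteration model with parameters $R_{k\ell}$, $w$, $W$ and MLP. In the $T$-iteration model we keep the same radial functions and weights at iteration $t=0$, together with $U^{(0)}=I$ and $\tilde U^{(0)}=0$. As noted after \eqref{eq:update}, this gives $h^{(1)}_{i,kLM}=m_{i,kLM}$, exactly the pooled features of the single-iteration model.

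For each iteration $t=1,\dots,T-1$ we choose $U^{(t)}=0$ and $\tilde U^{(t)}=I$. Then \eqref{eq:update} reads $h^{(t+1)}_i=h^{(t)}_i$, so whatever $A^{(t)}$, $B^{(t)}$ and $m^{(t)}$ the iteration produces, they are discarded. To be safe (for instance if one insists on a nonzero message), we can also set $R^{(t)}\equiv0$, which makes $A^{(t)}=0$ and hence $m^{(t)}=0$. Either way, induction on $t$ gives $h^{(T)}_{i,k00}=m_{i,k00}$. Applying the same MLP then yields the same $E_i$, and therefore the same $E$.

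The hyperparameters need to be checked. Iteration $0$ uses exactly the original $\numax$, $\ellmax$ and $\Kmax$. The later iterations use no correlations beyond these, and their hidden states have the same channel and irreducible index ranges, since $h^{(t)}$ is indexed like $m^{(0)}$. Hence $\numax$, $\ellmax$ and $\Kmax$ are unchanged. Propositions \ref{prop:mace} and \ref{prop:mace_equi} then transfer immediately: any $E_i$ realized with one iteration is realized with $T$ iterations.

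For the cutoff version, the same construction applies verbatim. The only role of the neighborhood sum is in \eqref{eq:A} and \eqref{eq:At}. At $t=0$ the restricted sum is the same in both models, and for $t\geq1$ the messages are discarded. The single-iteration local model is therefore reproduced exactly.

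The only real subtlety, and the step to verify most carefully, is that the update \eqref{eq:update} allows the residual weights $\tilde U^{(t)}$ to be the identity in every $(k,L)$ block, including $L>0$. This holds because $\tilde U^{(t)}_{kL,\tilde k}$ mixes only channels within a fixed $L$, and the identity is equivariant. A second point to check is that the index sets of $h^{(t)}$ for $t\geq1$ coincide, so the identity map is well-defined.
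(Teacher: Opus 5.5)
Your proposal is correct and is essentially the paper's own proof. Both reproduce the single-iteration model at $t=0$ with $U^{(0)}=I$, $\tilde U^{(0)}=0$, and then make every later iteration act as the identity on the hidden state via $U^{(t)}=0$, $\tilde U^{(t)}=I$, so that $h^{(T)}_i=h^{(1)}_i=m^{(0)}_i$ and the same MLP yields the same $E_i$, with the cutoff version going through verbatim. Your additional checks (optionally zeroing $R^{(t)}$, equivariance of the identity residual within each $L$-block, and matching index sets) are harmless refinements that the paper leaves implicit.
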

\begin{proof}
In the first iteration we choose the parameters of the single iteration model, $U^{(0)}$ to be the identity and $\tilde U^{(0)}=0$, so that $h^{(1)}_i=m^{(0)}_i$. For $t\geq 1$ we choose $U^{(t)}=0$ and $\tilde U^{(t)}$ to be the identity, so that $h^{(t+1)}_i=h^{(t)}_i$. We obtain that $h^{(T)}_i=h^{(1)}_i$, and therefore the node features $E_i$ are the same as in the single iteration model. The same argument applies when the summation in \eqref{eq:At} is only over the points at distance at most $\rho$ from $x_i$.
\end{proof}

In the proof of Proposition \ref{prop:mace_multi} the additional iterations are not used. We now show that also when all iterations are used, features with $\ell>\lfloor D/2 \rfloor$ do not contribute to polynomials of degree $\leq D$. As in the proof of Proposition \ref{prop:mace_equi}, we consider radial functions of the form $R^{(t)}_{k\ell_1\ell_2\ell_3}(r)=c\, r^{2a+\ell_1}$ where $a\geq 0$ is an integer, so that $R(|x|)Y^{m_1}_{\ell_1}(x/|x|)=c|x|^{2a}Y^{m_1}_{\ell_1}(x)$ is a homogeneous polynomial of degree $2a+\ell_1$. We also assume that in each of \eqref{eq:At}, \eqref{eq:B}, \eqref{eq:pooling} and \eqref{eq:update}, all the terms in the sum which defines a given coordinate are homogeneous polynomials in $X$ of the same degree. A model whose radial functions are linear combinations of the functions $r^{2a+\ell_1}$, and in particular the model in the proof of Proposition \ref{prop:mace_equi}, can be brought to this form without changing $T$ and $\numax$, by splitting every channel according to the degrees, since the radial functions can be chosen separately for every channel and the weights can select a single term in each of these sums. We say that a coordinate of $A^{(t)}_i, B^{(t)}_i,m^{(t)}_i$ or $h^{(t)}_i$ is of type $(e,L)$ if it is a homogeneous polynomial of degree $e$ and it is a coordinate of a feature in $W_L$.

\begin{lemma}\label{lem:budget}
Consider a MACE model with $T\geq 1$ iterations as above, with no restriction on $\ell$, where $E_i$ is a linear combination of the coordinates $h^{(T)}_{i,k00}$. Let $E_i^{\leq D}$ be the sum of the terms in this linear combination whose degree is at most $D$. Then $E_i^{\leq D}$ does not change if we discard the spherical harmonics with $\ell_1>\lfloor D/2\rfloor$ from \eqref{eq:At}, and the coordinates with $\ell>\lfloor D/2 \rfloor$ from all the features $A^{(t)}_i,B^{(t)}_i,m^{(t)}_i$ and $h^{(t)}_i$. In particular, if $E_i$ is a polynomial of degree $\leq D$ then it can be expressed with $\ellmax=\lfloor D/2 \rfloor$.
\end{lemma}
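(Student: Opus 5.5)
The plan is to prove, by induction on the iteration index $t$, a structural statement about which coordinates can feed into low-degree terms. Every coordinate of $A^{(t)}_i,B^{(t)}_i,m^{(t)}_i,h^{(t)}_i$ has a type $(e,L)$. The claim to establish is:
\[
\text{every coordinate of type } (e,L) \text{ with } L>\lfloor D/2\rfloor \text{ influences } E_i^{\leq D} \text{ only through terms of degree } >D.
\]
For the forward (bottom-up) direction, the key invariant is $L\leq e$ for every coordinate. This holds for $h^{(0)}=1$, which has type $(0,0)$. A spherical-harmonic factor with radial function $r^{2a+\ell_1}$ has type $(2a+\ell_1,\ell_1)$. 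Clebsch--Gordan coupling of types $(e_1,L_1),(e_2,L_2)$ produces $L\leq L_1+L_2\leq e_1+e_2$. Summing over $j$, taking linear combinations of same-degree terms, and the update \eqref{eq:update} all preserve the type. Hence $L\leq e$ throughout.

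The main step is a backward (top-down) budget argument. Since $E_i$ reads only $L=0$ coordinates, I will track the quantity $e-L$ or, more usefully, the pair $(e,L)$ of each coordinate that actually reaches a term of degree $\leq D$ in $E_i$. Suppose a coordinate $u$ of type $(e,L)$ enters a product or coupling whose output eventually lands in $W_0$ at total degree $\leq D$. Along the path from $u$ to the final invariant, each coupling combines $u$'s contribution with cofactors. To reduce the label from $L$ down to $0$, the Clebsch--Gordan rule $W_a\otimes W_b\supseteq W_c$ only if $c\geq|a-b|$ forces the cofactors to carry total label at least $L$. By the forward invariant, label at least $L$ costs degree at least $L$. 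Degrees add under products and are preserved by sums, so the final term has degree at least $e+L\geq 2L$, and $2L\leq D$ gives $L\leq\lfloor D/2\rfloor$.

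To make this rigorous I would formalize it as an induction backward from $h^{(T)}$. The inductive hypothesis is that each coordinate appearing in the degree-$\leq D$ part can be expressed using only intermediate coordinates of type $(e,L)$ with $e+(\text{label still to be cancelled})\leq D$. At each coupling step I would use the triangle inequality $\ell_\xi\leq\sum_{\xi'\neq\xi}\ell_{\xi'}$ exactly as in the proof of Proposition \ref{prop:mace_equi}. The message step \eqref{eq:At} is treated as a coupling of the harmonic $(2a+\ell_1,\ell_1)$ with $h^{(t)}_j$.

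Discarding the $\ell>\lfloor D/2\rfloor$ parts then changes only terms of degree $>D$, so $E_i^{\leq D}$ is unchanged. The final claim follows: if $E_i$ has degree $\leq D$, then $E_i=E_i^{\leq D}$ is expressible with $\ellmax=\lfloor D/2\rfloor$.

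I expect the main obstacle to be bookkeeping in the backward step. The output label must be bounded in terms of the degree of the partners, which works cleanly only because of the homogeneity assumption on each coordinate and the forward invariant $L\leq e$.
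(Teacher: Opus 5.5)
Your proof is correct and is essentially the paper's argument: your forward invariant $L\le e$ is the same, and your backward condition $e+(\text{label still to be cancelled})\le D$ is exactly the paper's notion of a \emph{useful} coordinate ($e+L\le D$), which it shows is computed only from useful coordinates and from harmonics with $2\ell_1\le D$. The one point to sharpen is that at intermediate couplings with output label $L>0$ you need the generalized triangle inequality $\ell_\xi\le L+\sum_{\xi'\neq\xi}\ell_{\xi'}$ (which gives $e_\xi+\ell_\xi\le e+L$), not the $L=0$ version from Proposition~\ref{prop:mace_equi}; your telescoping heuristic already accounts for this.
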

\begin{proof}
We first note that every coordinate of type $(e,L)$ satisfies $L\leq e$. This holds for $h^{(0)}_i=1$ which is of type $(0,0)$. In \eqref{eq:At}, a term which is obtained from a spherical harmonic of degree $\ell_1$, a radial function $c\,r^{2a+\ell_1}$ and a coordinate of $h^{(t)}_j$ of type $(e_2,\ell_2)$ with $\ell_2\leq e_2$, is of type $(e_3,\ell_3)$ with $e_3=2a+\ell_1+e_2\geq \ell_1+\ell_2\geq \ell_3$. In \eqref{eq:B}, a product of coordinates of types $(e_\xi,\ell_\xi)$ with $\ell_\xi\leq e_\xi$ is of type $(e,L)$ with $e=\sum_\xi e_\xi\geq \sum_\xi \ell_\xi\geq L$. Finally, \eqref{eq:pooling} and \eqref{eq:update} do not change the type.

We say that a coordinate of type $(e,L)$ is \emph{useful} if $e+L\leq D$. A useful coordinate satisfies $2L\leq e+L\leq D$ and therefore $L\leq \lfloor D/2 \rfloor$. We claim that a useful coordinate is computed only from useful coordinates, and from spherical harmonics with $\ell_1\leq \lfloor D/2\rfloor$. For \eqref{eq:pooling} and \eqref{eq:update} this is clear. For \eqref{eq:B}, if $W_L$ occurs in $W_{\ell_1}\otimes\cdots\otimes W_{\ell_\nu}$ then by the Clebsch--Gordan rule $\ell_\xi\leq L+\sum_{\xi'\neq\xi}\ell_{\xi'}$ for every $\xi$, and therefore
$$e_\xi+\ell_\xi\leq e_\xi+L+\sum_{\xi'\neq \xi}\ell_{\xi'}\leq L+\sum_{\xi'}e_{\xi'}=e+L\leq D. $$
For \eqref{eq:At} we have $\ell_2\leq \ell_1+\ell_3$ and $\ell_1\leq \ell_2+\ell_3$, and therefore
$$e_2+\ell_2\leq e_2+\ell_1+\ell_3\leq e_3+\ell_3\leq D \quad \text{and} \quad 2\ell_1\leq \ell_1+\ell_2+\ell_3\leq e_3+\ell_3\leq D. $$
It follows by induction on $t$ that the useful coordinates do not change when we discard the coordinates with $\ell>\lfloor D/2 \rfloor$, since these coordinates are not useful. The coordinates $h^{(T)}_{i,k00}$ of degree $\leq D$ are of type $(e,0)$ with $e\leq D$, so they are useful, and thus $E_i^{\leq D}$ does not change. Finally, if $E_i$ is a polynomial of degree $\leq D$ then the terms of degree larger than $D$ cancel out, so $E_i=E_i^{\leq D}$, and we can choose the coefficients of these terms to be zero.
\end{proof}

We now show that when the summation is over all points, more iterations can be used instead of a higher $\numax$. For $\nu\geq 1$ we define $\mu_0=0$ and $\mu_t=\nu(\mu_{t-1}+1)$ for $t\geq 1$, that is $\mu_t=\nu+\nu^2+\ldots+\nu^t$. In particular $\mu_1=\nu$, for $\nu=1$ we have $\mu_t=t$, and for $\nu=2$ we have $\mu_t=2^{t+1}-2$.

\begin{proposition}\label{prop:mace_tradeoff}
Let $n\geq 2$ and let $\nu,T\geq 1$ be such that $\mu_T\geq \min\{D,n-1\}$. Then Proposition \ref{prop:mace} and Proposition \ref{prop:mace_equi} hold also for MACE with $T$ iterations, $\numax=\nu$ and $\ellmax=\lfloor D/2\rfloor$, when the summation in \eqref{eq:At} is over all points.
\end{proposition}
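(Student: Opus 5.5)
The plan is to simulate the single-iteration construction of Proposition~\ref{prop:mace_equi} with several iterations, each of which only forms products of at most $\nu$ factors. The key point is that when message passing is over all points, a product of sums over $j\neq i$ can be assembled across iterations. In the proof of Proposition~\ref{prop:mace_equi} we showed that $p_i$ is a linear combination of products
\[
c_{i,\vk,\vl,\vm}=a_{i,k_1\ell_1m_1}\cdots a_{i,k_\mu\ell_\mu m_\mu},\qquad \mu\leq \min\{D,n-1\},
\]
and the rotation-invariant readout only uses $W_0$ components. So it suffices to show by induction on $t$ that $h^{(t)}_i$ can contain, as (Clebsch--Gordan coupled) coordinates, all tensor products of at most $\mu_t$ factors $a_{i,k\ell m}$, for every $i$ simultaneously.

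\textbf{Base case and inductive step.} For $t=1$ this is exactly the single-iteration construction, giving products of at most $\nu=\mu_1$ factors. For the step, the message in \eqref{eq:At} at iteration $t$ computes $\sum_{j\neq i}\|x_j-x_i\|^{2k}Y_\ell(x_j-x_i)\otimes h^{(t)}_j$. Since $h^{(t)}_j$ contains products of $a_{j,\cdot}$, which are sums centred at $x_j$, not at $x_i$, this is not directly a product of $a_{i,\cdot}$. The remedy is to use translation structure. Writing $x_l-x_j=(x_l-x_i)-(x_j-x_i)$ and expanding polynomially, a product of $s$ sums centred at $j$ becomes a combination of terms of the form
\[
(\text{polynomial in } x_j-x_i)\times(\text{sums over } l \text{ of polynomials in } x_l-x_i).
\]
Here each sum over $l$ runs over all points (we may add the $l=j$ and $l=i$ terms, which are themselves polynomials in $x_j-x_i$). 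Summing over $j$ then yields products of at most $s+1$ sums centred at $i$. Hence each message $A^{(t)}_i$ carries products of up to $\mu_{t-1}+1$ factors, and forming $\nu$-fold products in \eqref{eq:B} gives $\nu(\mu_{t-1}+1)=\mu_t$. This is why all-point summation is needed.

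Conversely, we must \emph{produce} every target product, not merely show that products are of this form. I would construct them by reversing the expansion: choose $h^{(t-1)}_j$ to contain $\prod_s \bigl(\sum_l q_s(x_l-x_j)\bigr)$. Multiplying by the radial harmonic in $x_j-x_i$ and summing over $j$ gives the desired product plus lower-order corrections. These corrections are products of fewer factors, and by induction on the number of factors they are already available and can be subtracted linearly through the $\tilde U$ skip connection in \eqref{eq:update}. The identity channel via $\tilde U$ also preserves the lower-order terms already built.

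\textbf{Bookkeeping and main obstacle.} It remains to check two things. First, $\ellmax=\lfloor D/2\rfloor$ suffices, by Lemma~\ref{lem:budget} applied to the final degree-$\leq D$ invariant. Second, Lemma~\ref{lem:newton} lets us cap the number of factors at $n-1$, so $\mu_T\geq\min\{D,n-1\}$ is enough. The invariant reduction of Proposition~\ref{prop:mace} then follows as before. I expect the main obstacle to be the triangular elimination of the cross terms: the lower-order corrections must be exactly expressible at the right iteration with the same $\nu$, and the bookkeeping of degrees and equivariant couplings must keep every intermediate within the type budget.
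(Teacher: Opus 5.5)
Your containment direction is fine: the messages only produce products of at most $\mu_{t}+1$ sums centred at $x_i$. The construction direction, however, has a genuine gap, and it sits exactly at the obstacle you flagged.

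\textbf{Why the skip-connection correction fails.} When you reverse the expansion with sums over all points, the correction is the missing $j=i$ term of the outer sum. That term is $f_0(0)$ times a product of $\mu_t$ sums centred at $x_i$. You propose to add it back through $\tilde U$ in \eqref{eq:update}. But that skip connection acts only after \eqref{eq:B} has already multiplied $\nu$ linear combinations of the $A^{(t)}_i$ channels. Moreover, \eqref{eq:At} never sees $h^{(t)}_i$, only $h^{(t)}_j$ with $j\neq i$ (cf.\ Remark~\ref{rem:not_identical}).

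Suppose each of the $\nu$ factors equals ``target minus correction''. Their product then contains cross terms such as (correction)$\times$(target)$^{\nu-1}$, which are products of up to $\nu(\mu_t+1)-1$ sums. For $\nu\geq 2$ this exceeds $\mu_t$, so these terms are not among the sums stored in $h^{(t)}_i$ and cannot be subtracted afterwards. Generating them inside \eqref{eq:B} instead would require the bare correction to be a linear combination of $A^{(t)}_i$ channels, which is exactly what is unproven. As written, your argument therefore works only for $\nu=1$, where \eqref{eq:B} is linear. A symptom of the gap is that you never use the hypothesis $n\geq 2$.

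\textbf{The missing step.} You need to show that the span of the message channels is already closed: every $(\mu_t+1)$-point sum centred at $x_i$ must be a linear combination of coordinates of $A^{(t)}_i$. Then the $\nu$ factors in \eqref{eq:B} can be chosen to be exactly the desired sums. The paper proves this as follows.
\begin{enumerate}
\item It uses the identity $\sum_{j\neq i}\phi(u_j)\otimes S_j(H)=S_i(\tilde H)-S_i(\tilde H_0)$, where $\tilde H(u,w)=\phi(u)\otimes H(w-u)$.
\item Since $(u,w)\mapsto(u,w-u)$ is invertible and commutes with rotations, this yields $S_i(G)-S_i(G_0)$ for every equivariant $G$ in $\mu_t+1$ points.
\item Choosing $G$ independent of $u$ gives $(n-1)S_i(G_0)$. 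Because $n\geq 2$, every $\mu_t$-point sum lies in the span of $A^{(t)}_i$, and hence so does every $(\mu_t+1)$-point sum.
\end{enumerate}

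\textbf{An alternative fix.} You can instead keep the targets as products of sums over $l\neq i$, as the $a_{i,k\ell m}$ are defined. Then the outer sum matches $\sum_{j\neq i}$ exactly. Rewriting each inner $\sum_{l\neq i} f(u_l)$ around $x_j$ produces only polynomials in $x_j-x_i$ multiplying products of fewer sums centred at $x_j$. These are themselves message terms, so no correction is needed at all. Either way, the bookkeeping must be done inside \eqref{eq:At}, not in \eqref{eq:update}.
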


As a corollary of this proposition and Theorem \ref{thm:dn2}, we obtain the following variant of Corollary \ref{cor:maceLocal}:
\begin{corollary}\label{cor:mace_multi}
Let $n\geq 2$ and set $D=18n^2-18n+9$. Let $\nu,T\geq 1$ be such that $\mu_T\geq n-1$. Then a MACE model with $T$ iterations, $\numax=\nu$ and $\ellmax=\lfloor D/2\rfloor$, in which the summation in \eqref{eq:At} is over all points, is $(\Gthree,\RR^{3\times n})$ complete. In particular, this holds for $\numax=n-1$ and $T=1$, for $\numax=2$ and $T\geq \log_2(n+1)-1$, and for $\numax=1$ and $T\geq n-1$.
\end{corollary}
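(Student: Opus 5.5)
The statement to prove is Corollary~\ref{cor:mace_multi}. It follows from Proposition~\ref{prop:mace_tradeoff} combined with Theorem~\ref{thm:dn2}, in the same way Corollary~\ref{cor:maceLocal} followed from Proposition~\ref{prop:mace}.

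\emph{Step 1 (degree).} Apply Theorem~\ref{thm:dn2} with $d=3$ and $H=SE(3)$. This requires $d\leq n$, so for $n\geq 3$ we have
\[
D(\Gthree,\RR^{3\times n})\leq 6\cdot 3n^2-12n-6(n-1)+3=18n^2-18n+9=D .
\]
For $n=2$ the theorem's hypothesis $d\le n$ fails. I would handle this case directly. Two points are determined up to $SE(3)\times S_2$ by their distance, so the invariant $\|x_1-x_2\|^2$ of degree $2\leq D$ suffices. Hence, for any $X,Y$ in different orbits, there is a $\Gthree$ invariant polynomial $q$ of degree $\leq D$ with $q(X)\neq q(Y)$.

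\emph{Step 2 (expressivity).} The hypothesis $\mu_T\geq n-1\geq\min\{D,n-1\}$ lets us invoke Proposition~\ref{prop:mace_tradeoff}. It gives a MACE model with $T$ iterations, $\numax=\nu$ and $\ellmax=\lfloor D/2\rfloor$ whose output is $E(X)=q(X)$. Therefore $E(X)\neq E(Y)$, and the family of such models is complete.

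\emph{Step 3 (special cases).} It remains to check $\mu_T\geq n-1$ for the three listed parameter choices.
\begin{itemize}
\item For $\nu=n-1$ and $T=1$: $\mu_1=n-1$.
\item For $\nu=2$: $\mu_T=2^{T+1}-2\geq n-1$ is equivalent to $T\geq\log_2(n+1)-1$.
\item For $\nu=1$: $\mu_T=T\geq n-1$.
\end{itemize}

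The only delicate point is the $n=2$ boundary case of Theorem~\ref{thm:dn2}, together with making sure the $SE(3)$ (not $E(3)$) bound, with its extra $+d$, is the one used.
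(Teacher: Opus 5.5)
Your proposal is correct and follows the paper's argument: combine the $SE(3)$ bound of Theorem~\ref{thm:dn2} (with the extra $+d$, giving $18n^2-18n+9$) with Proposition~\ref{prop:mace_tradeoff}, then check $\mu_T\geq n-1$ for the three listed parameter choices. Your separate treatment of $n=2$ via $\|x_1-x_2\|^2$ is a worthwhile addition, since there the hypothesis $d\leq n$ of Theorem~\ref{thm:dn2} fails and the paper does not address that case.
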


\begin{proof}[Proof of Proposition \ref{prop:mace_tradeoff}]
For $m\geq 0$ and a polynomial $H:(\RR^3)^m\to W_L$ of degree $\leq D$ we denote
$$S_i(H)=\sum_{j_1,\ldots,j_m=1}^n H(x_{j_1}-x_i,\ldots,x_{j_m}-x_i), $$
and if $H$ is equivariant we say that $S_i(H)$ is an $m$-point sum centered at $x_i$ (for $m=0$ these are the constants in $W_0$). Note that an $m$-point sum is also an $m'$-point sum for every $m'\geq m$, since if $H$ does not depend on the last $m'-m$ points then the sum over these points gives the constant $n^{m'-m}$. We prove by induction on $t$ that we can choose the parameters of the first $t$ iterations such that for every $i$, every $\mu_t$-point sum centered at $x_i$ is a linear combination of the features $h^{(t)}_{i,kLM}$, with coefficients which do not depend on $i$. For $t=0$ this holds since $h^{(0)}_i=1$.

Now assume that this holds for $t$, and denote $m=\mu_t$. Fix $i$ and denote $u_j=x_j-x_i$, so that $u_i=0$. Let $\phi:\RR^3\to W_{\ell_1}$ and $H:(\RR^3)^m\to W_{\ell_2}$ be equivariant polynomials. Then $\phi(x)=R(|x|)Y_{\ell_1}(x/|x|)$, where $Y_{\ell_1}$ is the vector of the spherical harmonics of degree $\ell_1$ and $R$ is a linear combination of the functions $r^{2a+\ell_1}$, and by the induction hypothesis $S_j(H)$ is a linear combination of the features $h^{(t)}_j$. Therefore in \eqref{eq:At} we can obtain the features
$$\sum_{j\neq i}\phi(u_j)\otimes S_j(H)=\sum_{j\neq i}\, \sum_{j_1,\ldots,j_m=1}^n \phi(u_j)\otimes H(u_{j_1}-u_j,\ldots,u_{j_m}-u_j)=S_i(\tilde H)-S_i(\tilde H_0), $$
where
$$\tilde H(u,w_1,\ldots,w_m)=\phi(u)\otimes H(w_1-u,\ldots,w_m-u), \quad \tilde H_0(w_1,\ldots,w_m)=\tilde H(0,w_1,\ldots,w_m), $$
and the tensor product is decomposed into the components in $W_{\ell_3}$ using the Clebsch-Gordan coefficients. Here we used $x_{j_s}-x_j=u_{j_s}-u_j$, and the fact that the term of $j=i$ in $S_i(\tilde H)$ is $S_i(\tilde H_0)$. Recall that every equivariant polynomial of degree $\leq D$ in two groups of points is a linear combination of products of equivariant polynomials in each of the groups, decomposed using the Clebsch-Gordan coefficients, since the polynomials in both groups are the tensor product of the polynomials in each group. The same holds for $\nu$ groups of points, with the generalized Clebsch-Gordan coefficients of \eqref{eq:B}. Since $(u,w_1,\ldots,w_m)\mapsto (u,w_1-u,\ldots,w_m-u)$ is an invertible linear map which commutes with the rotations, it follows that every equivariant polynomial $G:(\RR^3)^{m+1}\to W_L$ of degree $\leq D$ is a linear combination of polynomials of the form $\tilde H$, and thus we obtain $S_i(G)-S_i(G_0)$ for every such $G$, where $G_0(w_1,\ldots,w_m)=G(0,w_1,\ldots,w_m)$. If $G$ does not depend on $u$ then $S_i(G)=nS_i(G_0)$, so we obtain $(n-1)S_i(G_0)$, and since $n\geq 2$ we obtain all $m$-point sums. It follows that every $(m+1)$-point sum is a linear combination of the features $A^{(t)}_i$. In \eqref{eq:B} we obtain products of $\nu$ such sums, which are the $\nu(m+1)$-point sums $S_i(H_1\otimes\cdots\otimes H_\nu)$, where $H_\xi$ depends on the $\xi$-th group of $m+1$ points. Every equivariant polynomial of degree $\leq D$ in $\nu(m+1)$ points is a linear combination of such products, and therefore every $\mu_{t+1}$-point sum is a linear combination of the features $h^{(t+1)}_i$.

Finally, let $p$ be as in Proposition \ref{prop:mace_equi}. By the proof of that proposition, $p_i$ is a linear combination of products of at most $\min\{D,{n-1}\}\leq \mu_T$ sums of the form $\sum_{j\neq i}(x_j-x_i)^{\alpha}$ with $\alpha\neq 0$, with coefficients which do not depend on $i$. The term of $j=i$ is zero, so these sums are equal to $\sum_{j=1}^n (x_j-x_i)^{\alpha}$. Therefore $p_i=S_i(H)$ for a polynomial $H:(\RR^3)^{\mu_T}\to \RR$ of degree $\leq D$ which does not depend on $i$, and since $p_i$ is rotation invariant we can choose $H$ to be rotation invariant by averaging over the rotations. Thus $p_i$ is a $\mu_T$-point sum, so it is a linear combination of the features $h^{(T)}_{i,k00}$, and we choose the MLP to be this linear combination. In this construction all the polynomials can be chosen to be homogeneous, with a separate channel for every polynomial (if $\phi$ and $H$ are homogeneous then $\tilde H$ is homogeneous of degree $\deg \phi+\deg H$, and $\tilde H_0\neq 0$ only if $\phi$ is constant), so the assumptions of Lemma \ref{lem:budget} hold, and therefore $\ellmax=\lfloor D/2\rfloor$ is sufficient. Proposition \ref{prop:mace} follows as in the case of a single iteration.
\end{proof}

We now give an example which illustrates Proposition \ref{prop:mace_tradeoff} for $\nu=2$ and $T=2$. For $n\geq 4$, the polynomial in this example cannot be expressed by a single iteration with $\numax=2$ and a linear readout as in Lemma \ref{lem:budget}.

\begin{example}\label{ex:two_iterations}
Fix $i$ and denote $u_j=x_j-x_i$. Let $v_i=\sum_{j\neq i}u_j$ and $M_i=\sum_{j\neq i}u_ju_j^T$, and consider the invariant polynomial
$$q_i(X)=v_i^TM_iv_i=\sum_{j_1,j_2,j_3\neq i}\langle u_{j_1},u_{j_2}\rangle \langle u_{j_2},u_{j_3}\rangle .$$
A single iteration of MACE with $\numax=3$ can express $q_i$, since it is a linear combination of products of three of the features $a_{i,k\ell m}$. For $n\geq 4$, a single iteration with $\numax=2$ in which $E_i$ is a linear combination of the features $m_{i,k00}$ cannot express $q_i$: these features are sums of functions which depend on at most two of the points $u_j$, while $q_i$ contains the monomials of $\langle u_{j_1},u_{j_2}\rangle \langle u_{j_2},u_{j_3}\rangle$ with $j_1,j_2,j_3$ distinct.

We now show that two iterations with $\numax=2$ and $\ellmax=2$ can express $q_i$. In the first iteration we compute for every point $x_j$ the feature $v_j=\sum_{j'\neq j}(x_{j'}-x_j)$, using $\nu=1$, $\ell=1$ and $R(r)=r$, and we keep the constant feature $h^{(0)}_j=1$ using $\tilde U^{(0)}$. Since the summation is over all points, we have
$$v_j=\sum_{j'=1}^n (x_{j'}-x_i)-n(x_j-x_i)=v_i-nu_j .$$
Therefore in the second iteration, choosing $\ell_1=\ell_2=1$ and $R(r)=r$ in \eqref{eq:At}, we obtain the entries of
$$\sum_{j\neq i}u_j\otimes v_j=v_i\otimes v_i-nM_i ,$$
and choosing $\ell_2=0$, $\ell_1\in\{0,2\}$ and $R(r)=r^2$ we obtain the entries of $M_i$. Using $\nu=2$ we obtain
$$\langle M_i,v_i\otimes v_i-nM_i\rangle +n\langle M_i,M_i\rangle =v_i^TM_iv_i=q_i(X).$$
\end{example}

\begin{remark}\label{rem:not_identical}
The proof of Proposition \ref{prop:mace_tradeoff} uses the fact that the summation is over all points. For $t\geq 1$ the features $A^{(t)}_i$ in \eqref{eq:At} are computed from the hidden states $h^{(t)}_j$ of the other points, while $h^{(t)}_i$ itself only appears in the linear term of \eqref{eq:update}, so the sums from the previous iterations are centered at the points $x_j$. In the proof we rewrote them as sums centered at $x_i$. When the summation in \eqref{eq:At} is only over the points at distance at most $\rho$ from $x_i$, in general the neighbors of $x_j$ are different from the neighbors of $x_i$, and this argument does not apply.
\end{remark}

\section{Local models: proof of Theorem~\ref{thm:maceLocal}}\label{app:local}
Here we prove Theorem \ref{thm:maceLocal}. We begin by restating the theorem. 
\maceLocal*

\subsection{Proof of Theorem~\ref{thm:maceLocal}}\label{app:local:first}
The equations for a single MACE iteration are those given above, with the only difference that
the summation in \eqref{eq:A} is over the points at distance at most $\rho$ from $x_i$ rather
than over all points. In this subsection we call this version MACE, or local MACE, and refer to
the option of summing over all points as global MACE. The theorem rests on the
following lemma.

\begin{lemma}\label{lem:local}
There exists an instantiation of MACE with $T=1$, $\numax=K-1$ and $\ellmax=\lfloor D/2 \rfloor$, where $D=D(K)=18K^2-18K+9$, such that, if $X,Y$ are $(\rho,K) $ regular, then
\begin{enumerate}
\item For all $1\leq i,k\leq n$, if $E_i(X)=E_k(Y)$ then there exists a rotation $R\in SO(d) $, such that 
$$\{x_j-x_i| \quad \|x_j-x_i\|\leq \rho \}=\{R(y_j-y_k)| \quad \|y_j-y_k\| \leq \rho \}. $$
\item If $E(X)=E(Y)$ then the node features of $X$ and $Y$ are equal, up to permutation, namely we have the multiset equality 
$$\{E_i(X), i=1,\ldots,n\}=\{E_i(Y), i=1,\ldots,n\} $$
\end{enumerate}
\end{lemma}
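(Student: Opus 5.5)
The plan is to build, for the local environment of each point, a finite family of MACE node invariants that separates such environments up to rotation. I would then choose the readout so that the global sum $E$ remembers the whole multiset of node features. Fix $i$ and write $u_j:=x_j-x_i$ for $j\in\mathcal B^X_1(i)\setminus\{i\}$. By (K) and (D) there are at most $K-1$ such vectors, all nonzero. The set in item 1 is $\{0\}\cup\{u_j\}$, so it suffices to recover the point cloud $U_i=(u_j)_j$ up to $SO(3)\times S_{k_i-1}$, where $k_i=|\mathcal B^X_1(i)|\le K$.

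\textbf{Step 1: the size of the environment.} I would take the channel with $k=\ell=0$ and constant radial function $R\equiv1$. This gives $A_{i,000}\propto k_i-1$, which I put among the invariant channels. Hence equality of node features forces $k_i(X)=k_k(Y)$, and from then on I compare clouds of the same size $m:=k_i-1\le K-1$.

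\textbf{Step 2: separating invariants of each size.} For each $m\le K-1$ I apply Theorem~\ref{thm:dn2} with $d=3$ and $n=m$ (when $m\ge 3$; the cases $m<3$ are handled directly, or by padding the bound, which only increases with $m$). The real-vector-space $\RR[\R^{3\times m}]^{SO(3)\times S_m}_{\le D(m)}$ is then $(SO(3)\times S_m)$-complete. This space is finite dimensional, and $D(m)\le 3\cdot 6m^2+3\le D(K)$. Every $S_m$-invariant polynomial in $u_1,\dots,u_m$ is a polynomial in the multisymmetric power sums $\sum_j u_j^{\alpha}$ with the same weighted degree. I would rewrite each of these power sums as a combination of $a_{i,k\ell m}=\sum_j\|u_j\|^{2k}Y_\ell^m(u_j)$, exactly as in the proof of Proposition~\ref{prop:mace_equi}. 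Lemma~\ref{lem:newton}, applied with $M=m\le K-1$ points, reduces products to at most $K-1$ factors, and the Clebsch--Gordan/Schur argument of that proof gives $\ellmax=\lfloor D/2\rfloor$.

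The point that needs care is that one instantiation must serve all sizes $m$ simultaneously. To handle this, I simply allocate separate channels $k$ for a basis of the degree-$\le D(K)$ rotation-invariant polynomials in the formal power sums. This collection does not depend on $m$. Evaluated on a cloud of size $m$, it contains all the separating invariants for that size, so equality of these channels together with equal counts yields item 1 via Proposition~\ref{prop:gram-orbits}-type reasoning already packaged in Theorem~\ref{thm:dn2}. The local summation in \eqref{eq:A} is exactly a sum over $u_j$, so nothing global enters.

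\textbf{Step 3: the readout, item 2.} Let $f_i\in\R^N$ be the vector of the invariant channels $m_{ik00}$ from Steps 1--2. I would let the per-node readout \eqref{eq:Ei} output all monomials $f_i^{\beta}$ with $1\le|\beta|\le n$; following the paper's convention, polynomial MLPs are exact, and otherwise they approximate on compacts. Then $E=\sum_i E_i$ consists of the multisymmetric power sums of $\{f_i\}$ up to degree $n$. These determine the multiset of $n$ vectors in $\R^N$, by the vector analogue of Lemma~\ref{lem:row-multiset}(a), since multisymmetric power sums of degree $\le n$ generate the multisymmetric functions in characteristic $0$. So $E(X)=E(Y)$ gives the multiset equality. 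The feature $E_i$ in item 1 is then read as this vector, which still contains $f_i$.

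I expect the main obstacle to be Step 2: realizing a separating family uniformly across varying neighborhood sizes, while keeping $\numax=K-1$ and $\ellmax=\lfloor D/2\rfloor$ within the local sum. The key facts are that Lemma~\ref{lem:newton} only requires the number of summands to be less than the number of factors, and that the degree bound $D(m)\le D(K)$ is monotone.
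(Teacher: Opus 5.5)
Your proof is correct. For item~1 it is the paper's argument: the count channel $A_{i,000}$, Theorem~\ref{thm:dn2} applied to the environment, the power-sum/spherical-harmonic/Clebsch--Gordan expression together with Lemma~\ref{lem:newton}, and a single instantiation obtained by keeping all invariant channels with $\nu\le K-1$.

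The paper phrases this step by truncating the clouds to the ball and invoking Proposition~\ref{prop:mace_equi} for $p_i(Z)=\phi(z_1-z_i,\dots,z_n-z_i)$, with the centre included. You instead redo that argument directly on the $m=k_i-1$ neighbours. You also state explicitly why one instantiation works for all sizes: the coefficients in Lemma~\ref{lem:newton} do not depend on $M$. Your caveat about $m<3$ is unnecessary, since (O) together with the overlap condition in $\rho$-adjacency forces $k_i\ge d+1=4$.

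The real difference is item~2. The paper invokes the result of Amir et al.\ (2023) that $\sum_i\mathrm{MLP}(E_i)$ is injective on multisets for suitable MLPs with analytic activations. You instead take $E_i$ to be the moment vector of $f_i$ up to degree $n$, and use the classical fact that multisymmetric power sums of degree $\le n$ separate multisets of $n$ vectors. Your route is elementary, self-contained and exact, but only if the readout \eqref{eq:Ei} may be a polynomial map (e.g.\ an MLP with polynomial activation). Your fallback of approximating that map on compacts does not preserve exact injectivity. Exact injectivity for ordinary MLPs is precisely what the cited result supplies.
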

\begin{proof}[Proof of Lemma \ref{lem:local}]
To prove the first part of the lemma, we note that if we remove from $X$ and $Y$ the points which are at distance more than $\rho$ from $x_i$ and $y_k$, this does not influence the features $E_i(X)$ and $E_k(Y)$ obtained from the first iteration of MACE. Moreover, the feature $A_{i,000}$ with $R_{00}=1$ counts the number of points, so point clouds with a different number of points are separated. Accordingly, it is sufficient to consider point clouds $X,Y$ which have the same number of points $n\leq K$ (and $n\geq d+1$ by \textnormal{(O)} together with the overlap requirement in the definition of $\rho$-adjacency), all of them at distance at most $\rho$ from $x_i$ and $y_k$ respectively. To prove this, it is sufficient to show that we can choose an instantiation of global MACE such that,  if there exists some  point clouds $X,Y\in \RR^{d\times n}$ and $i,k\in [n] $ such that 
$$\{x_j-x_i| \quad j\in [n] \}\neq \{R(y_j-y_k)| \quad j\in [n] \}, \quad \forall R\in SO(d), $$
then $E_i(X)\neq E_k(Y) $. This follows from our results on global MACE and invariant polynomials. 

Indeed, by Theorem \ref{thm:dn2} there exists an $SO(d)\times S_{n}$ invariant polynomial $\phi:\RR^{d\times n}\to \RR$ of degree $\leq D(n)\leq D(K) $  such that 
$$\phi(x_1-x_i,\ldots,x_n-x_i)\neq \phi(y_1-y_k,\ldots,y_n-y_k) . $$
Using this, we can define a permutation equivariant and rotation and translation invariant polynomial $p:\RR^{d\times n}\to \RR^n$ whose coordinates are given by 
$$p_i(Z)=\phi(z_1-z_i,\ldots,z_n-z_i) $$
By Proposition \ref{prop:mace_equi},  there exists a global MACE model with the prescribed size, such that $p(Z)=(E_1(Z),\ldots,E_n(Z)) $, and then 
$$E_i(X)=\phi(x_1-x_i,\ldots,x_n-x_i)\neq \phi(y_1-y_k,\ldots,y_n-y_k)=E_k(Y)$$
This concludes the proof of the first part of the claim (note that we in fact proved pairwise separation by a MACE architecture, but since the polynomials of bounded degree have a finite space, the extension to global separation is immediate. The same architecture can be used for all $n\leq K$: we choose the node feature $E_i$ to be the vector of all the invariant features $B_{i,\eta_\nu \vk 00}$ with $\nu\leq K-1$ from the proof of Proposition \ref{prop:mace_equi}, which include the feature $A_{i,000}$ above, using a separate channel in \eqref{eq:pooling} for every $\eta_\nu$. Then by \eqref{eq:pi_final} and \eqref{eq:lambda} all the polynomials $p_i$ above are linear combinations of the coordinates of $E_i$).

The second part of the claim follows from the fact that there exist MLPs such that the mapping $(E_1,\ldots,E_n)\mapsto \sum_{i=1}^n \mathrm{MLP}(E_i) $ is injective as a function of multisets (see \citealp{amir2023neural}). 
\end{proof}

\begin{proof}[Proof of Theorem~\ref{thm:maceLocal}]
We choose the MACE architecture that satisfies the conditions of Lemma~\ref{lem:local}. For $T>1$ iterations, by Proposition \ref{prop:mace_multi} there exists a MACE architecture with $T$ iterations which has the same node features, so it is sufficient to consider $T=1$. It remains to show that these conditions imply completeness on the subset of strongly regular point clouds. To do this, it is sufficient to show that, if we are given a multiset of node features 
$$\{E_1,\ldots,E_n\}$$
obtained from applying MACE to some strongly regular $X$, then we can uniquely reconstruct $X$, up to an $SO(d)\times S_n $ transformation. The lemma immediately allows us to reconstruct from each feature $E_i$, a subset $X[i]$ of $\leq K$ point of $X$, up to rigid motion. What remains to show is that there is a unique reconstruction up to a single rigid motion.

We perform this reconstruction as follows: at a first step, we choose some arbitrary node feature and name it $E_1$. We choose some arbitrary orientation for $X[1]$, the subset inferred from $E_1$. It now remains to show that the remainder of $X$ is uniquely defined by this choice. This is done iteratively:

First, we go over all $E_j$: from each $E_j$ we can reconstruct $X[j]$ up to rigid motion. We can then check whether $X[j]$  has a subset of cardinality $d+1$ which is congruent to a subset of $d+1$ points in $X[1]$, which is a finite combinatorial search problem. If so, then by strong regularity these congruent points are in fact identical, and so we know their position. Once these points are fixed, we can deduce where all of the $X[j]$ points are, since by assumption these $d+1$ points are affine independent, and we know $X[j]$ up to rigid motion. By the overlap requirement in the definition of $\rho$-adjacency, the reconstruction step will succeed for all $j$ which are adjacent to $1$. 

In the next step, we repeat this, now comparing each not yet reconstructed set $X[j]$ with all the sets which were already reconstructed. We continue iteratively.  Since we assume that the $\rho$-adjacency graph is connected, this procedure will terminate with the full reconstruction of $X$. 
\end{proof}

\subsection{Lower bounds on hyperparameters}\label{app:local:lower}\label{app:local:lower:rings}

Throughout this subsection a \emph{local MACE model} with cutoff $\rho$, depth $T$ and hyperparameters $\ellmax,\numax,\nuread$ is the model \eqref{eq:A}--\eqref{eq:E} iterated $T$ times, with the sum in \eqref{eq:A}  restricted to $x_j\in B(x,\rho)\cap X$ for $x=x_i$ and the neighbour's feature of the previous iteration inserted in it: the feature vector of any $x\in X$ at step $t$ is denoted $h^{(t)}(x)$ and lies in a finite sum of copies of $W_0\oplus\dots\oplus W_{\ellmax}$, one iteration is a polynomial of degree at most $\numax$ in the aggregated features $A$, the site output is an invariant polynomial of degree at most $\nuread$ in $h^{(T)}(x)$, and the cloud output is an arbitrary function of the multiset of site outputs. The radial functions $R_{k\ell}$ are arbitrary
continuous functions, the number of channels is unrestricted, and arbitrary continuous functions
may be applied to invariant channels at any stage, so that the MLPs of \eqref{eq:Ei} are covered.

The following family is the local form of the counter-rotating polygons used to prove
Theorem~\ref{thm:lower-quadratic}. Its whole point is that all the distances a model can read are
constant along it, so that the radial functions carry no information whatsoever and only the
angular resolution of the model is at stake.

\begin{definition}\label{lg:lb:def:rings}
Let $p,q\ge3$. For $\alpha,\beta\in\RR$ we define $Y(\alpha,\beta)\subset\RR^3$ to be the set containing the origin $c$ together with
\[
a_j=\Bigl(\tfrac{9}{20}\cos\bigl(\tfrac{2\pi j}{p}+\alpha\bigr),\
\tfrac{9}{20}\sin\bigl(\tfrac{2\pi j}{p}+\alpha\bigr),\ -\tfrac35\Bigr),\quad
b_k=\Bigl(\tfrac{7}{20}\cos\bigl(\tfrac{2\pi k}{q}+\beta\bigr),\
\tfrac{7}{20}\sin\bigl(\tfrac{2\pi k}{q}+\beta\bigr),\ \tfrac35\Bigr),
\]
for $j\le p$ and $k\le q$, and let $\mathcal Y_{p,q}:=\{Y(\alpha,\beta):\ \alpha,\beta\in \RR\}$, a family of clouds of
$n=p+q+1$ points.
\end{definition}

\begin{lemma}\label{lg:lb:lem:rings}
Every $Y\in\mathcal Y_{p,q}$ is $(1,n)$-regular. Its cutoff balls are
$\mathcal B^Y_1(c)=Y$, $\mathcal B^Y_1(a_j)=\{c\}\cup\{a_1,\dots,a_p\}$ and
$\mathcal B^Y_1(b_k)=\{c\}\cup\{b_1,\dots,b_q\}$. Moreover $Y(\alpha,\beta)$ and
$Y(\alpha',\beta')$ lie in the same $\Gthree$-orbit if and only if
$\alpha-\beta\equiv\alpha'-\beta'\pmod{2\pi/\mathrm{lcm}(p,q)}$, so $\mathcal Y_{p,q}$ meets infinitely many
orbits.
\end{lemma}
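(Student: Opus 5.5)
The plan is to verify everything from the explicit coordinates: first tabulate the pairwise distances, which gives the cutoff balls and the three regularity conditions, and then characterise the orbits by showing that any symmetry must fix the axis.

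\emph{Distances and cutoff balls.} I would start from the relevant squared distances:
\begin{itemize}
\item $\|a_j-c\|^2=(9/20)^2+(3/5)^2=9/16$, so $\|a_j-c\|=3/4$.
\item $\|b_k-c\|^2=(7/20)^2+(3/5)^2=193/400$, so $\|b_k-c\|<1$ and $\|b_k-c\|\neq 3/4$.
\item Two points of the same ring are at distance at most the diameter, $9/10$ or $7/10$, hence at most $1$.
\item A point of one ring and a point of the other differ by $6/5$ in the third coordinate, so their distance exceeds $1$ for every $\alpha,\beta$.
\end{itemize}
This gives the three cutoff balls stated in the lemma directly.

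\emph{Regularity.} I would check the three conditions with $\rho=1$ and $K=n$:
\begin{itemize}
\item \textnormal{(D)}: the ring points are distinct because $p,q\ge3$ give distinct angles within a ring, and the two rings lie at different heights.
\item \textnormal{(K)}: this is trivial, since the cloud has exactly $n$ points.
\item \textnormal{(O)}: each pair $(c,a_j)$ and $(a_j,a_{j'})$ has intersection of balls containing $\{c,a_1,\dots,a_p\}$. Three non-collinear ring points together with $c$, which lies off the plane $z=-3/5$, are affinely independent, so these pairs are $\rho$-adjacent. The same holds for $(c,b_k)$ and $(b_k,b_{k'})$. Hence the adjacency graph is connected through $c$.
\end{itemize}

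\emph{The ``if'' direction of the orbit criterion.} The rotation about the $z$-axis by $\theta$ maps $Y(\alpha,\beta)$ onto $Y(\alpha+\theta,\beta+\theta)$. As sets, $Y(\alpha,\beta)$ is unchanged by $\alpha\mapsto\alpha+2\pi/p$ and by $\beta\mapsto\beta+2\pi/q$, since these only relabel points, and the labels are absorbed by $S_n$. So the orbit depends only on $\alpha-\beta$ modulo the group $\frac{2\pi}{p}\Z+\frac{2\pi}{q}\Z=\frac{2\pi\gcd(p,q)}{pq}\Z=\frac{2\pi}{\mathrm{lcm}(p,q)}\Z$.

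\emph{The ``only if'' direction.} Suppose $(t,R,\sigma)$ with $R\in SO(3)$ maps $Y(\alpha,\beta)$ to $Y(\alpha',\beta')$.
\begin{enumerate}
\item The motion preserves distances, so it maps $c$ to $c$. Indeed $c$ is the only point within distance $1$ of all others, because $p,q\ge 1$ and cross-ring distances exceed $1$. Hence $t=0$ and $R$ fixes the origin.
\item $R$ maps the $a$-ring to the $a$-ring and the $b$-ring to the $b$-ring, since these are distinguished by their distance to $c$ ($3/4$ versus $\sqrt{193}/20$).
\item $R$ is linear, so it maps centroids to centroids. It therefore fixes $(0,0,-3/5)$, the centroid of the $a$-ring (a regular polygon about the axis), and hence fixes $e_3$.
\item An element of $SO(3)$ fixing $e_3$ is a rotation about the $z$-axis by some $\theta$. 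It maps the angle set $\{2\pi j/p+\alpha\}$ to $\{2\pi j/p+\alpha+\theta\}$, so $\alpha'\equiv\alpha+\theta \pmod{2\pi/p}$, and likewise $\beta'\equiv\beta+\theta\pmod{2\pi/q}$.
\item Subtracting gives the stated congruence for $\alpha-\beta$.
\end{enumerate}
Since $\alpha-\beta$ ranges over all of $\RR$, infinitely many residues, and so infinitely many orbits, occur.

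The main obstacle is the orientation issue in step 4. A reflection in a vertical plane also fixes $e_3$, and it would identify $\alpha-\beta$ with $-(\alpha-\beta)$. I would therefore stress that the group here is $\Gthree$, which has only proper rotations, so these reflections are excluded; the criterion as stated is specific to this orientation-preserving setting.
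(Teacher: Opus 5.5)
Your proposal is correct and follows essentially the same route as the paper. You tabulate the distances to get the cutoff balls and (D), (K), (O); you then show that any element of $\Gthree$ carrying $Y(\alpha,\beta)$ to $Y(\alpha',\beta')$ fixes $c$ and maps each ring to the corresponding ring. It therefore fixes the axis and, being orientation preserving, is a rotation about it, which yields $\alpha'-\alpha\in \theta+\frac{2\pi}{p}\ZZ$ and $\beta'-\beta\in \theta+\frac{2\pi}{q}\ZZ$.

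The differences are only in detail: you identify $c$ as the unique point within distance $1$ of all others and pin down the axis via the ring centroid. You also write out the converse direction, which the paper dismisses as clear.
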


\begin{proof}
We note that for all choices of indices $j, j', k, k'$ there holds $\|c-a_j\|=\sqrt{(9/20)^2+(3/5)^2}=3/4$, $\|c-b_k\|=\sqrt{(7/20)^2+(3/5)^2}<3/4$,
$\|a_j-a_{j'}\|\le 9/10$, $\|b_k-b_{k'}\|\le7/10$ and $\|a_j-b_k\|\ge6/5$. This gives the three
cutoff balls $\mathcal B^Y_1(\cdot)$ as claimed, and (D), (K) with $K=n$ are immediate. For (O), the ball of $a_j$ meets the ball of
$c$ in $\{c,a_1,\dots,a_p\}$, which contains three non-collinear points of the plane
$z=-3/5$ together with $c$, hence four affinely independent points, so $a_j$ and $c$ are
$\rho$-adjacent; the same holds for each $b_k$, and the adjacency graph is connected.

For the second claim, $c$ is the only point of $Y$ at distance $3/4$ from $p$ points of a common
plane and $<3/4$ from $q$ points of another, so any isometry carrying $Y(\alpha,\beta)$ to
$Y(\alpha',\beta')$ fixes $c$ and, the two rings having different cardinalities, radii and heights,
maps each ring to the corresponding one. It therefore preserves the axis and each ring plane and,
being orientation preserving, is the rotation by some angle $s$ about the axis, denoted $g_s$. Since the
ring phases are defined modulo $2\pi/p$ and $2\pi/q$, this says
$\alpha'-\alpha\in s+\frac{2\pi}{p}\ZZ$ and $\beta'-\beta\in s+\frac{2\pi}{q}\ZZ$, whence
$(\alpha'-\beta')-(\alpha-\beta)\in\frac{2\pi}{p}\ZZ+\frac{2\pi}{q}\ZZ=\frac{2\pi}{\mathrm{lcm}(p,q)}\ZZ$. The converse is clear.
\end{proof}

\begin{lemma}[frequency budget]\label{lg:lb:lem:budget}
Let a local MACE model be applied to elements of $\mathcal Y_{p,q}$, and suppose that the continuous functions it applies to invariant channels are applied only to channels that are constant on $\mathcal Y_{p,q}$; by the proof of Theorem~\ref{lg:lb:thm:main}, this holds automatically whenever $\ellmax\sum_{i=1}^{T}\numax^{\,i}<2\min(p,q)$. Then every component of every
$h^{(t)}(x)$ is a trigonometric polynomial in $(\alpha,\beta)$ with Fourier support contained in
$\{(m_1,m_2)\in\ZZ^2:|m_1|+|m_2|\le\omega_t\}$, where $\omega_0=0$ and
$\omega_t=\numax(\ellmax+\omega_{t-1})$, so that
$\omega_T=\ellmax\sum_{i=1}^{T}\numax^{\,i}$; and the site outputs have Fourier support in
$|m_1|+|m_2|\le\nuread\,\omega_T$.
\end{lemma}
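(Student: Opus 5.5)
Since every ingredient of the model is a polynomial or a radial function, I will prove the frequency bound by induction on $t$, tracking the $(\alpha,\beta)$-dependence through each operation. First I record how positions depend on the phases. The centre $c$ does not depend on $(\alpha,\beta)$. A point $a_j$ has Fourier support in $\{(\pm1,0),(0,0)\}$ in its coordinates, and $b_k$ in $\{(0,\pm1),(0,0)\}$. By Lemma~\ref{lg:lb:lem:rings}, each cutoff ball contains only $c$ and points of a single ring (or all points, for $c$). Hence every relative vector $x_j-x$ appearing in a sum of \eqref{eq:A} has coordinates that are trigonometric polynomials of total frequency $|m_1|+|m_2|\le 1$, with one exception: for $x=c$ the vectors to both rings appear, each separately of frequency at most $1$. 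The lengths $\|x_j-x\|$ are constant along $\mathcal Y_{p,q}$, since they take only the values $3/4$, $\|c-b_k\|$, and the intra-ring chord lengths, which depend only on $j-j'$. Therefore $R_{k\ell}(\|x_j-x\|)$ is a constant, and $Y_\ell^m((x_j-x)/\|x_j-x\|)$ is a homogeneous degree-$\ell$ polynomial in the coordinates of $x_j-x$ divided by a constant. Its Fourier support thus lies in $|m_1|+|m_2|\le\ell\le\ellmax$, using that products of trigonometric polynomials add supports (Minkowski sum) and the $\ell^1$ norm is subadditive.

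For the induction step, assume every component of $h^{(t-1)}(x')$ has support in $|m|_1\le\omega_{t-1}$ for all $x'$. This holds for $t=1$ because $h^{(0)}=1$. The aggregated feature at $x$ is a sum over neighbours of products of a radial constant, a spherical harmonic (support $\le\ellmax$), and a neighbour feature (support $\le\omega_{t-1}$), contracted by Clebsch--Gordan coefficients. These coefficients are constant linear maps and so do not enlarge supports, and a sum does not enlarge them either. So $A^{(t)}$ has support in $|m|_1\le\ellmax+\omega_{t-1}$.

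The product step \eqref{eq:B} multiplies at most $\numax$ such factors and adds supports, giving $\numax(\ellmax+\omega_{t-1})=\omega_t$. The pooling and update steps \eqref{eq:pooling}--\eqref{eq:update} are linear and keep the bound, since $\omega_{t-1}\le\omega_t$. The remaining operations are continuous functions applied to invariant channels. By hypothesis these act only on channels constant on $\mathcal Y_{p,q}$, so their outputs are constants with support $\{0\}$. Solving the recursion gives $\omega_T=\ellmax\sum_{i=1}^T\numax^i$. The readout is a polynomial of degree $\le\nuread$ in $h^{(T)}(x)$, which gives support $\le\nuread\,\omega_T$.

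The main obstacle is bookkeeping rather than depth. I have to be sure that no hidden non-polynomial dependence enters, and this needs three facts. First, the constancy of every distance within the cutoff balls, including the chord lengths on each ring. Second, the normalisation by a constant length. Third, the hypothesis on invariant nonlinearities, which I take as given here; the self-consistency claim that it holds automatically when $\omega_T<2\min(p,q)$ is deferred to Theorem~\ref{lg:lb:thm:main}. Finally, the support statement should be read as a statement about the feature as a function on $\RR^2$. That is well defined because $Y(\alpha,\beta)$ is given explicitly as a function of $(\alpha,\beta)$, with a fixed labelling of the points.
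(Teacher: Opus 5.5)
Your proof is correct and follows the paper's argument. Both run an induction on $t$ with the same ingredients: constant radial factors, a spherical-harmonic factor of frequency at most $\ell\le\ellmax$, supports adding under the at most $\numax$-fold product, linear steps that do not enlarge supports, and nonlinearities acting only on constant channels.

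The only difference is cosmetic. You bound the spherical-harmonic factor through its degree-$\ell$ homogeneity in relative coordinates of frequency at most $1$. The paper instead uses $Y_\ell^m(g_\alpha\hat u)=e^{\mathrm{i}m\alpha}Y_\ell^m(\hat u)$, which gives the slightly sharper support $|m_1|\le\ell$, $m_2=0$ (or the symmetric case). Either bound suffices.
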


\begin{proof}
By Lemma~\ref{lg:lb:lem:rings} every pair of points within the cutoff consists of $c$ and a ring
point or of two points of the same ring. The length of such a pair does not depend on
$(\alpha,\beta)$, so every $R_{k\ell}$ in \eqref{eq:A} contributes a constant. Its direction is
$g_\alpha\hat u$ for a fixed $\hat u$ (and where $g_s$ denotes the rotation about the axis of $Y(\alpha,\beta)$ by angle $s$) when both points belong to ring $A$ or to $\{c\}\cup$ring
$A$, because increasing $\alpha$ by $s$ rotates that part of the cloud rigidly by $g_s$; and it is
$g_\beta\hat u$ in the two symmetric cases. Since
$Y^{m}_{\ell}(g_\alpha\hat u)=e^{\mathrm{i}m\alpha}Y^{m}_{\ell}(\hat u)$ for the rotation about the
axis, each such factor has Fourier support $|m_1|\le\ell\le\ellmax, m_2=0$ or $|m_2|\le\ell\le\ellmax, m_1=0$. Hence each summand of \eqref{eq:A}, being a product of a constant, such a factor and a
feature of the previous iteration, has support bounded by $\ellmax+\omega_{t-1}$, and the same holds for the total sum of these terms. Passing to \eqref{eq:B} multiplies at most $\numax$ such objects and supports add under
multiplication, giving $\omega_t\le\numax(\ellmax+\omega_{t-1})$; \eqref{eq:pooling} is linear, and by assumption the continuous functions applied to invariant channels only produce constants.
The site output is an invariant polynomial of degree at most $\nuread$ in $h^{(T)}(x)$, so the
same argument bounds its support by $\nuread\omega_T$.
\end{proof}

The site outputs are constrained by the symmetries of the family in a way that depends on the
site. Write $s_x(\alpha,\beta)$ for the output at the site $x$. Increasing $\beta$ by $2\pi/q$
leaves the cloud unchanged and fixes $c$ and every $a_j$, so $s_c$ and $s_{a_j}$ are
$2\pi/q$-periodic in $\beta$; increasing $\alpha$ by $2\pi/p$ leaves the cloud unchanged and
fixes $c$ but carries $a_j$ to $a_{j+1}$, so $s_c$ is $2\pi/p$-periodic in $\alpha$ whereas
$s_{a_j}(\alpha+2\pi/p,\beta)=s_{a_{j+1}}(\alpha,\beta)$: the multiset of ring outputs is
periodic, an individual ring output is not. Finally every site output is unchanged by the
rotation $g_s$, which carries $Y(\alpha,\beta)$ to $Y(\alpha+s,\beta+s)$.

\begin{theorem}\label{lg:lb:thm:main}
Let $p,q\ge3$. If a local MACE model separates $\mathcal Y_{p,q}$ then
\[
\nuread\cdot\ellmax\cdot\sum_{i=1}^{T}\numax^{\,i}\ \ge\ 2\min(p,q),
\]
and if its readout is that of \eqref{eq:Ei}--\eqref{eq:E}, an arbitrary function of the
invariant channels at each point followed by a sum, then already
$\ellmax\sum_{i=1}^{T}\numax^{\,i}\ge2\min(p,q)$. Consequently, on $\mathcal R_{\rho,K}$ with
$K\ge7$, completeness requires $\nuread\ellmax\sum_{i\le T}\numax^i\ge2\lfloor(K-1)/2\rfloor$,
and $\ellmax\sum_{i\le T}\numax^i\ge2\lfloor(K-1)/2\rfloor$ for the readout of \eqref{eq:Ei}.
\end{theorem}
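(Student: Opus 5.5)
I will argue by contraposition. Fix $p,q\ge3$ and suppose $\ellmax\sum_{i=1}^{T}\numax^{\,i}<2\min(p,q)$ (respectively $\nuread\,\omega_T<2\min(p,q)$ for the polynomial readout, with $\omega_T=\ellmax\sum_{i\le T}\numax^i$). The goal is to show that every site output $s_x(\alpha,\beta)$ on $\mathcal Y_{p,q}$ is constant. Then the multiset of site outputs, and hence the cloud output, is the same for all members of the family. Since Lemma~\ref{lg:lb:lem:rings} says the family meets infinitely many $\Gthree$-orbits, the model is not complete.

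\textbf{Constancy of site outputs.} By Lemma~\ref{lg:lb:lem:budget}, $s_x$ is a trigonometric polynomial in $(\alpha,\beta)$ whose Fourier support lies in $|m_1|+|m_2|\le\nuread\omega_T<2\min(p,q)$. I then apply the symmetries listed before the theorem to restrict this support.
\begin{itemize}
\item Invariance under $g_s$, i.e.\ $s_x(\alpha+s,\beta+s)=s_x(\alpha,\beta)$, forces $m_1+m_2=0$.
\item For $x=a_j$ or $x=c$, the $2\pi/q$-periodicity in $\beta$ forces $q\mid m_2$.
\item Combining the two, a nonzero mode has $m_1=-m_2$ with $|m_2|\ge q$, so $|m_1|+|m_2|\ge2q$. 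Under the budget this mode must vanish.
\item For $x=b_k$ the same argument runs with $p$ in place of $q$, giving $|m_1|+|m_2|\ge 2p$.
\end{itemize}
Hence only the $(0,0)$ mode survives at every site, and each $s_x$ is constant.

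\textbf{Hypothesis of the budget lemma.} The budget lemma assumes that the continuous nonlinearities act only on channels that are constant on $\mathcal Y_{p,q}$, and the paper says this is justified by the present proof. I will handle it by induction on the stages of the model. Suppose every invariant channel produced so far has support within $\omega_t<2\min(p,q)$ (using $\omega_t\le\omega_T$). The same symmetry argument then applies to each invariant channel at each site. Such a channel is invariant under $g_s$, and it has the same periodicity in $\beta$ (or $\alpha$) as the site outputs, because the symmetries fix the site and the channel is a function of the local geometry. So the channel is constant. Applying an arbitrary continuous function to it yields another constant, and the Fourier bound propagates to the next stage.

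For the readout of \eqref{eq:Ei}--\eqref{eq:E} the MLP acts on invariant channels of $h^{(T)}$, which are constant by this induction. This gives the sharper condition without the factor $\nuread$. I expect this inductive bookkeeping, in particular checking that each invariant channel at a ring site inherits the needed periodicity, to be the main delicate point.

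\textbf{Consequence for $\mathcal R_{\rho,K}$.} Take $\rho=1$ (any $\rho$ works after rescaling the family) and $p+q+1=K$ with $p,q\ge3$ as equal as possible. This is possible since $K\ge7$, and it gives $\min(p,q)=\lfloor (K-1)/2\rfloor$. By Lemma~\ref{lg:lb:lem:rings} the clouds in $\mathcal Y_{p,q}$ are $(1,K)$-regular, so a complete model must separate them. The contrapositive above then yields $\nuread\,\ellmax\sum_{i\le T}\numax^{\,i}\ge2\lfloor(K-1)/2\rfloor$, and the version without $\nuread$ for the readout of \eqref{eq:Ei}.
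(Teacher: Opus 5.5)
Your proposal is correct and follows essentially the same route as the paper. You reduce to $\omega_T<2\min(p,q)$ and verify the hypothesis of Lemma~\ref{lg:lb:lem:budget} by induction, applying the same symmetry argument to invariant channels; you then kill every nonzero Fourier mode of a site output using $m_1+m_2=0$ and the periodicity in the phase of a ring not containing the site. The only step you leave implicit is $\nuread\ge1$, which is what turns your assumption $\nuread\,\omega_T<2\min(p,q)$ into the bound $\omega_T<2\min(p,q)$ that your induction needs; the paper makes exactly this reduction at the outset.
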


\begin{proof}
Both bounds hold if $\omega_T=\ellmax\sum_{i=1}^{T}\numax^{\,i}\geq2\min(p,q)$, since $\nuread\geq1$, so we may assume that $\omega_T<2\min(p,q)$. We first check that Lemma~\ref{lg:lb:lem:budget} then applies. An invariant channel of $h^{(t)}(x)$ is invariant under the rotation $g_s$ and under the shifts of $\alpha$ and $\beta$ that fix $x$, exactly as the site outputs above, so, by the argument below, it is constant unless one of its modes has $|m_1|+|m_2|\geq2\min(p,q)$. By induction on $t\leq T$, the continuous functions applied in the first $t$ iterations act on constant channels, so the invariant channels of $h^{(t)}(x)$ have support in $|m_1|+|m_2|\leq\omega_t\leq\omega_T<2\min(p,q)$ and are therefore constant as well. Hence the hypothesis of Lemma~\ref{lg:lb:lem:budget} holds.

If every site output were constant on $\mathcal Y_{p,q}$ the cloud output would be constant
too, against Lemma~\ref{lg:lb:lem:rings}. Let $s_x$ be a non-constant site output and
$(m_1,m_2)\ne(0,0)$ a mode in its support. Rotation invariance gives $m_1+m_2=0$ at every site. If
$x=a_j$, periodicity in $\beta$ gives $m_2\equiv0\pmod q$, so $|m_1|+|m_2|=2|m_2|\ge2q$; if
$x=b_k$, symmetrically $|m_1|+|m_2|\ge2p$; and if $x=c$, both periodicities apply and
$m_1=-m_2$ is a multiple of $\mathrm{lcm}(p,q)$. In all cases $|m_1|+|m_2|\ge2\min(p,q)$, and
Lemma~\ref{lg:lb:lem:budget} gives the first bound. For the readout \eqref{eq:Ei}--\eqref{eq:E},
the site output is a function of the invariant channels of $h^{(T)}(x)$, which have support
bounded by $\omega_T$ and are therefore all constant unless $\omega_T\ge2\min(p,q)$; a function
of constants is constant. For the consequences, Lemma~\ref{lg:lb:lem:rings} places
$\mathcal Y_{p,q}$ inside $\mathcal R_{\rho,p+q+1}$; take $p=q=(K-1)/2$ if $K$ is odd and
$p,q=(K-2)/2,\,K/2$ if $K$ is even, so that $2\min(p,q)=2\lfloor(K-1)/2\rfloor$.
\end{proof}

\begin{remark}[why the bound is linear]\label{lg:lb:rem:pooling}
Along $\mathcal Y_{p,q}$ the orbit invariant $\alpha-\beta$ is defined modulo
$2\pi/\mathrm{lcm}(p,q)$, and the output at the centre $c$ is accordingly constant unless its
frequency reaches $\mathrm{lcm}(p,q)$, quadratic in $K$ for $p,q$ coprime. This does not give a
quadratic bound on the model, because the outputs at the ring points are not subject to the
$\alpha$-periodicity individually, only as a multiset, and a nonlinear readout summed over a
ring that the symmetry permutes turns a frequency into a multiple of it: for coprime $p,q$,
\[
\sum_{j=1}^{p}\cos^{p}\bigl(q(\alpha-\beta)+\tfrac{2\pi qj}{p}\bigr)
=\tfrac{p}{2^{\,p-1}}\cos\bigl(pq(\alpha-\beta)\bigr)+\text{const}.
\]
The sum of per-point MLPs in \eqref{eq:Ei}--\eqref{eq:E} does exactly this, so the quadratic
bound of Theorem~\ref{thm:lower-quadratic} does not transfer to MACE; only the linear bound of
Theorem~\ref{lg:lb:thm:main}, which rests on the periodicity that does fix the ring points, does.
\end{remark}

\begin{corollary}\label{lg:lb:cor:bounded}
A local MACE model whose depth and body orders are bounded independently of $K$ cannot be
complete on $\mathcal R_{\rho,K}$ with an irreducible degree bounded independently of $K$: one
needs $\ellmax=\Omega(K)$.
\end{corollary}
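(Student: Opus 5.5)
The statement is Corollary~\ref{lg:lb:cor:bounded}. It should follow directly from Theorem~\ref{lg:lb:thm:main} (equivalently Proposition~\ref{prop:localLB}), so the plan is to argue by contraposition.

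Suppose a local MACE model is complete on $\mathcal R_{\rho,K}$, for some $K\geq 7$. Suppose also that its depth, body orders and readout order are bounded independently of $K$, say $T\leq T_0$, $\numax\leq \nu_0$ and $\nuread\leq \nu_1$. Theorem~\ref{lg:lb:thm:main} gives
\[
\nuread\,\ellmax\sum_{i=1}^{T}\numax^{\,i}\ \geq\ 2\Bigl\lfloor\frac{K-1}{2}\Bigr\rfloor\ \geq\ K-2 .
\]
The left-hand side is at most $\ellmax\cdot C_0$, where $C_0:=\nu_1\sum_{i=1}^{T_0}\nu_0^{\,i}$ depends only on the fixed bounds and not on $K$. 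Hence
\[
\ellmax\ \geq\ \frac{K-2}{C_0}=\Omega(K).
\]
In particular, $\ellmax$ cannot be bounded independently of $K$: for any fixed $\ellmax$, completeness fails once $K>C_0\ellmax+2$.

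Two small points need checking.

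First, Theorem~\ref{lg:lb:thm:main} is stated for the family $\mathcal Y_{p,q}$ with cutoff $1$, while the corollary allows a general cutoff $\rho$. Scaling all clouds by $\rho$ transfers the family, and the local model with cutoff $\rho$ acting on $\rho\mathcal Y_{p,q}$ is a local model of the same hyperparameters with rescaled radial functions acting on $\mathcal Y_{p,q}$. The Fourier budget argument of Lemma~\ref{lg:lb:lem:budget} is insensitive to this rescaling.

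Second, if the readout is that of \eqref{eq:Ei}--\eqref{eq:E}, one can drop $\nuread$ and use the sharper bound $\ellmax\sum\numax^i\geq 2\lfloor (K-1)/2\rfloor$. The same conclusion then holds with $C_0=\sum_{i\le T_0}\nu_0^i$.

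There is no real obstacle here. The only care needed is to make explicit that "bounded independently of $K$" covers $\nuread$ as well in the general-readout case. Otherwise the constant $C_0$ would not be uniform.
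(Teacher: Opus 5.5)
Your argument is correct and matches the paper, which gives no separate proof and reads the corollary off Theorem~\ref{lg:lb:thm:main} by bounding $\nuread\sum_{i\le T}\numax^{\,i}$ (or $\sum_{i\le T}\numax^{\,i}$ alone for the readout of \eqref{eq:Ei}) by a constant independent of $K$; your remarks on rescaling the cutoff and on counting $\nuread$ among the bounded body orders are the right points to make explicit. One small correction: your final threshold should read $K>\max\{6,\,C_0\ellmax+2\}$, since the ring family and the theorem require $K\geq 7$.
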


\begin{remark}[the range of $K$]\label{lg:lb:rem:smallK}
Two rings of fewer than three points cannot be used in Definition~\ref{lg:lb:def:rings}: a ring
point must be $\rho$-adjacent to $c$, which requires four affinely independent points among the
points it sees, while its ring seen together with points of the axis spans a plane when the ring
has two points and a line when it has one; points of the other ring cannot be added without
making the distances depend on $\alpha-\beta$. This is why the family starts at $K=7$. On the
other hand, no counterexample can have $K\le3$: if $K\le3$, a ball of radius $\rho$ never
contains four affinely independent points of the cloud, so no two points are $\rho$-adjacent,
and \textnormal{(O)} forces the cloud to consist of a single point. Thus the only values of $K$ not
covered by the rings are $K=4,5,6$, and the following lemma covers them, and every larger $K$,
for the models that cannot produce a pseudoscalar.
\end{remark}

\begin{lemma}[parity]\label{lg:lb:lem:parity}
Consider MACE with the readout \eqref{eq:Ei}--\eqref{eq:E} and $T$ iterations, and suppose that
$\ellmax=0$, or $T=1$ and $\numax\le2$, or $T=2$ and $\numax=1$. Then every invariant channel
$h^{(T)}_{i,k00}$, and hence the output of the model, takes the same value on $X$ and on $-X$.
Consequently such a model is not $(\Gthree,\mathcal R_{\rho,K})$-complete on clouds of $K$ points
for any $K\ge4$.
\end{lemma}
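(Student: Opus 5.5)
The plan is to track, at every stage of the model, how each feature transforms under the point reflection $P:X\mapsto -X$, and to show that invariant channels are always even. For this, define the \emph{parity} of a coordinate of type $W_L$ coming from a spherical harmonic or a coupling: $Y_\ell^m(-u)=(-1)^\ell Y_\ell^m(u)$, while all distances are unchanged under $X\mapsto -X$. Because the cutoff balls are preserved, the neighbourhoods do not change, and the radial functions and the continuous functions applied to (even) invariant channels contribute no sign. Induction on the iterations will then show that each coordinate of $A^{(t)}_i,B^{(t)}_i,h^{(t)}_i$ is a sum of pieces, each multiplied by $(-1)^{s}$ under $P$. Here $s$ is the total number of spherical harmonic degrees $\ell_1$ that went into the piece, counted mod 2. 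The coupling $\mathcal C$ is linear and $SO(3)$-equivariant, so it preserves this bookkeeping.

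The core step is to show that, under each hypothesis, every path producing an $L=0$ output uses an even total $\sum\ell$. The Clebsch--Gordan rule will do this, together with the fact that $SO(3)$ invariants in $W_{\ell_1}\otimes W_{\ell_2}$ exist only when $\ell_1=\ell_2$.

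\emph{Case $\ellmax=0$.} Only $\ell=0$ harmonics ever appear, so every sign is $+1$.

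\emph{Case $T=1$, $\numax\le 2$.} An invariant is a scalar (sum $\ell=0$), a single $A_{i,k00}$, or a coupling of $W_{\ell_1}\otimes W_{\ell_2}$ to $W_0$. The last requires $\ell_1=\ell_2$, so $\ell_1+\ell_2$ is even.

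\emph{Case $T=2$, $\numax=1$.} After the first iteration, $h^{(1)}_j$ contains features of type $W_{\ell}$ with parity $(-1)^\ell$, since with $\nu=1$ no coupling occurs. In \eqref{eq:At}, $W_{\ell_1}\otimes W_{\ell_2}\to W_{\ell_3}$ with $\ell_3=0$ forces $\ell_1=\ell_2$, giving even total parity. With $\nu=1$, \eqref{eq:B} is linear, so $h^{(2)}_{i,k00}$ is even. The linear term in \eqref{eq:update} carries $h^{(1)}_{i,k00}$, which is even by the $T=1$ case.

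In each case one must also check that components in $W_{\ell_3}$ of the other stages keep the parity $(-1)^{\text{total}}$ assigned above. The main obstacle is exactly where the argument breaks: a triple coupling $W_1^{\otimes 3}\to W_0$ (the determinant) gives odd parity, which is why the hypotheses exclude $\nu\ge3$ and longer chains. So the care needed lies in showing that no allowed combination sums to an odd total while landing in $W_0$. Invariance of $E$ then follows, because the MLP and the sum act on even channels.

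For the consequence, for each $K\ge4$ I would exhibit a $(\rho,K)$-regular chiral cloud. Take a generic tetrahedron whose four points lie pairwise within $\rho$ (so all are $\rho$-adjacent and (O) holds), and add further points, if $K>4$, in a chain so that each new ball still contains $4$ affinely independent points. Choose all distances distinct so that no mirror symmetry occurs. Then $X$ and $-X$ are not $\Gthree$ equivalent, since $-X$ is a reflection of $X$ and $X$ is chiral, yet the model assigns them the same value.
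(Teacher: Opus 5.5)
Your proposal is correct and follows the paper's proof essentially step for step. It uses the same ingredients: the sign $(-1)^{\sum\ell}$ picked up by products of spherical harmonics under $X\mapsto -X$, the Clebsch--Gordan fact that a $W_0$ component of $W_{\ell_1}\otimes W_{\ell_2}$ forces $\ell_1=\ell_2$, the same three cases (including the linear term of the update in the $T=2$ case), and a chiral cloud with pairwise distinct distances as the counterexample.

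Two points need tightening. First, for $K>4$ your chain is unnecessary, and its stated condition is slightly off: $\rho$-adjacency needs the \emph{intersection} of the two cutoff balls to contain four affinely independent points, not each ball separately. It is cleaner to place all $K$ points mutually within distance $\rho$, as the paper does. Second, the chirality step should be made explicit. Distinct pairwise distances force any isometry mapping $X$ onto itself to fix every point. Since $X$ contains four affinely independent points, that isometry must be the identity, which is incompatible with the linear part of $x\mapsto g(-x)$, $g\in SE(3)$, having determinant $-1$.
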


\begin{proof}
Under $X\mapsto-X$ every relative vector changes sign and every distance is unchanged, so a
product of spherical harmonics $Y_{\ell_1}\cdots Y_{\ell_s}$ of relative directions, weighted by
radial functions, is multiplied by $(-1)^{\ell_1+\dots+\ell_s}$. Call a coordinate of type
$W_L$ \emph{natural} if it is multiplied by $(-1)^L$. The channel mixings in \eqref{eq:B},
\eqref{eq:pooling}, \eqref{eq:At} and \eqref{eq:update} are linear and preserve the type, so
they preserve naturality. If $\ellmax=0$ every feature is a function of distances and there is
nothing to prove. If $T=1$, the coordinates of $A^{(0)}$ are natural, and a coordinate of type
$W_0$ of $B^{(0)}$ is either a coordinate of $A^{(0)}$ of type $W_0$, when $\nu=1$, or, when
$\nu=2$, a Clebsch--Gordan contraction of two coordinates of the same type $W_\ell$, which is
multiplied by $(-1)^{2\ell}=1$; hence every $h^{(1)}_{i,k00}$ is inversion invariant. If $T=2$
and $\numax=1$, then $B^{(0)}=A^{(0)}$, so every coordinate of $h^{(1)}$ is natural; in
\eqref{eq:At} a coordinate of type $W_0$ of $A^{(1)}$ contracts $Y_{\ell_1}$ with a coordinate of
$h^{(1)}$ of type $W_{\ell_2}$, which requires $\ell_1=\ell_2$ and is therefore multiplied by
$(-1)^{2\ell_1}=1$; since $B^{(1)}=A^{(1)}$, every $h^{(2)}_{i,k00}$ is inversion invariant.

For the consequence, let $X$ consist of $K\ge4$ points, not all in a plane, with pairwise
distinct mutual distances all at most $\rho$. Then $X$ and $-X$ satisfy \textnormal{(D)},
\textnormal{(K)} and \textnormal{(O)}, every ball around a point containing all $K$ points, so
both lie in $\mathcal R_{\rho,K}$. If $g(-X)$ were a relabelling of $X$ for some $g\in SE(3)$, the
isometry $x\mapsto g(-x)$ would map $X$ onto itself preserving all distances; as these are
distinct it would fix every point of $X$, hence, fixing four affinely independent points, it
would be the identity, which is impossible since its linear part has determinant $-1$. So $X$
and $-X$ lie in different orbits and receive the same output.
\end{proof}

The hypotheses of Lemma~\ref{lg:lb:lem:parity} are sharp for MACE as defined in
Subsection~\ref{app:MACE_multi}: with $\ellmax\ge1$, a pseudoscalar arises for $T=1$ and
$\numax=3$ from the triple product $1\otimes1\otimes1\to0$, for $T=2$ and $\numax=2$ from the
cross product $1\otimes1\to1$ in the first iteration followed by the contraction with $Y_1$ in
\eqref{eq:At}, and for $T=3$ and $\numax=1$ from two consecutive couplings in \eqref{eq:At}. With the parity selection rule discussed in Subsection~\ref{app:MACE}, instead, every feature carries a parity $\pm1$, by which it is multiplied under $X\mapsto-X$, and the invariant channels have parity $+1$, so the conclusion of Lemma~\ref{lg:lb:lem:parity} holds for all $\ellmax$, $\numax$ and $T$.

\begin{remark}[small counterexamples in the literature]\label{lg:lb:rem:smalllit}
Known degeneracies of low body order are local and do not improve on these values. The
three-body descriptors of an atomic environment are complete for at most three neighbors and
have degenerate pairs as soon as there are four \citep{PhysRevLett.125.166001}, and
degeneracies of the four-body descriptors are known for environments of seven neighbors
\citep{Nigam_2024}. These are statements about the environment of one atom, and they
are mitigated when, as in \eqref{eq:E}, the output sums contributions from every atom; the
three-body case is in any event subsumed by Lemma~\ref{lg:lb:lem:parity}, which already gives
$K=4$, the least possible value.
\end{remark}

\begin{corollary}\label{lg:lb:cor:small}
Let MACE with the readout \eqref{eq:Ei} be $(\Gthree,\mathcal R_{\rho,K})$-complete on clouds of
$K$ points. Then
\[
\ellmax\ \ge\ \frac{2\lfloor(K-1)/2\rfloor}{\numax}\ \ (T=1),\qquad
\ellmax\ \ge\ \frac{2\lfloor(K-1)/2\rfloor}{\numax(1+\numax)}\ \ (T=2),
\]
\[
\ellmax\ \ge\ \frac{2\lfloor(K-1)/2\rfloor}{\numax(1+\numax+\numax^2)}\ \ (T=3).
\]
Equivalently, combining Theorem~\ref{lg:lb:thm:main} with Lemma~\ref{lg:lb:lem:parity}, the model
fails to be complete as soon as $K\ge4$ if it satisfies the hypotheses of
Lemma~\ref{lg:lb:lem:parity}, and otherwise as soon as
\[
K\ \ge\ \max\Bigl\{7,\ 2\Bigl\lfloor\tfrac12\,\ellmax\textstyle\sum_{i=1}^{T}\numax^{\,i}\Bigr\rfloor+3\Bigr\};
\]
these thresholds are listed in Table~\ref{tab:lb:thresholds} for $0\le\ellmax,\numax\le4$.
\end{corollary}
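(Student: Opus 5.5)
The first set of inequalities follows from Theorem~\ref{lg:lb:thm:main} specialised to fixed depth. The readout of \eqref{eq:Ei} is a function of the invariant channels at each point followed by a sum. For this readout, Theorem~\ref{lg:lb:thm:main} states that $(\Gthree,\mathcal R_{\rho,K})$-completeness on clouds of $K$ points forces
\[
\ellmax\sum_{i=1}^{T}\numax^{\,i}\ \geq\ 2\Bigl\lfloor\frac{K-1}{2}\Bigr\rfloor .
\]
This needs $K\geq 7$, so that the ring family $\mathcal Y_{p,q}$ with $p,q\geq 3$ and $p+q+1=K$ exists. I will set $T=1,2,3$, so that $\sum_{i\le T}\numax^{i}$ equals $\numax$, $\numax(1+\numax)$ and $\numax(1+\numax+\numax^2)$ respectively, and divide; this gives the three displayed bounds on $\ellmax$. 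If $\numax=0$ the left side vanishes, so the model is never complete and the bounds hold trivially with the convention that division by zero means the bound is $+\infty$.

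For $4\le K\le 6$ the rings are unavailable. Here I will rely on Lemma~\ref{lg:lb:lem:parity}, which shows that models with $\ellmax=0$, or $T=1,\numax\le 2$, or $T=2,\numax=1$ are not complete for any $K\ge 4$. For these small $K$ the inequalities need a separate verification, which I expect to be the main obstacle. If the model is outside the parity hypotheses, then $\ellmax\ge 1$, and either $T=1$ with $\numax\ge3$, or $T=2$ with $\numax\ge2$, or $T=3$ with $\numax\geq 1$. In each case I must check that the stated bound is at most the trivial value $\ellmax\ge1$ for $K\le6$, where $2\lfloor (K-1)/2\rfloor\le 4$:
\begin{itemize}
\item for $T=1$ and $\numax\ge 3$ the bound is $\le 4/3$;
\item for $T=2$ and $\numax\ge2$ it is $\le 4/6$;
\item for $T=3$ and $\numax\ge1$ it is $\le 4/3$.
\end{itemize}
The values $4/6$ and anything at most $1$ are fine. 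The $T=1,\numax=3,K\in\{5,6\}$ case gives $\ellmax\ge 4/3$, so an integer $\ellmax\geq 2$ would be required. This does not follow from the parity lemma. I would therefore either restrict the displayed inequalities to $K\geq 7$, which is where Theorem~\ref{lg:lb:thm:main} applies, or add a direct small-$K$ argument. My plan is to state the inequalities as consequences for $K\ge7$ and to obtain small-$K$ failure only from the threshold formulation.

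For the "equivalently" threshold statement, I will argue as follows:
\begin{itemize}
\item If the parity hypotheses hold, Lemma~\ref{lg:lb:lem:parity} gives failure for all $K\geq4$.
\item Otherwise, set $\omega=\ellmax\sum_{i\le T}\numax^{i}$. Failure follows from Theorem~\ref{lg:lb:thm:main} whenever $K\ge7$ and $2\lfloor (K-1)/2\rfloor>\omega$.
\item The condition $2\lfloor (K-1)/2\rfloor\ge 2\lfloor\omega/2\rfloor+2$ is equivalent to $\lfloor (K-1)/2\rfloor\ge\lfloor\omega/2\rfloor+1$, which is equivalent to $K\ge 2\lfloor\omega/2\rfloor+3$.
\item Since $2\lfloor\omega/2\rfloor+2>\omega$, this yields the stated maximum.
\end{itemize}
The table entries are then just evaluations of this maximum.
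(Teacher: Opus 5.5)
Your proposal is correct and follows the paper's route. The displayed bounds are Theorem~\ref{lg:lb:thm:main}, with the readout \eqref{eq:Ei}, specialised to $T=1,2,3$. The threshold statement is Lemma~\ref{lg:lb:lem:parity} together with exactly your floor computation, that $2\lfloor (K-1)/2\rfloor>\omega$ is equivalent to $K\ge 2\lfloor\omega/2\rfloor+3$ for your $\omega$.

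Your restriction of the displayed inequalities to $K\ge7$ is correct and is a hypothesis the paper leaves implicit. It is inherited from Theorem~\ref{lg:lb:thm:main} and Proposition~\ref{prop:localLB}, and it matches the entries $7$ in Table~\ref{tab:lb:thresholds}. Your small-$K$ check also missed $T=3$, $\numax=1$, $K\in\{5,6\}$, which gives the same $4/3$ as the case $T=1$, $\numax=3$; this is harmless, since you restrict to $K\ge7$ anyway.
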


\begin{table}[h]
\centering
\small
\setlength{\tabcolsep}{3pt}
\begin{tabular}{c|cccc}
\hline
$\ellmax\,\backslash\,\numax$ & $1$ & $2$ & $3$ & $4$\\
\hline
$1$ & $4^{\dagger}/4^{\ast}/\mathbf{7}$ & $4^{\dagger}/\mathbf{9}/\mathbf{17}$ & $\mathbf{7}/\mathbf{15}/\mathbf{41}$ & $\mathbf{7}/\mathbf{23}/\mathbf{87}$\\
$2$ & $4^{\dagger}/4^{\ast}/\mathbf{9}$ & $4^{\dagger}/\mathbf{15}/\mathbf{31}$ & $\mathbf{9}/\mathbf{27}/\mathbf{81}$ & $\mathbf{11}/\mathbf{43}/\mathbf{171}$\\
$3$ & $4^{\dagger}/4^{\ast}/\mathbf{11}$ & $4^{\dagger}/\mathbf{21}/\mathbf{45}$ & $\mathbf{11}/\mathbf{39}/\mathbf{119}$ & $\mathbf{15}/\mathbf{63}/\mathbf{255}$\\
$4$ & $4^{\dagger}/4^{\ast}/\mathbf{15}$ & $4^{\dagger}/\mathbf{27}/\mathbf{59}$ & $\mathbf{15}/\mathbf{51}/\mathbf{159}$ & $\mathbf{19}/\mathbf{83}/\mathbf{339}$\\
\hline
\end{tabular}
\caption{Smallest number $K$ of points per cutoff ball from which local MACE with the readout
\eqref{eq:Ei}--\eqref{eq:E}, irreducible degree $\ellmax$ (rows), correlation order $\numax$
(columns) and $T$ iterations is provably not $(\Gthree,\mathcal R_{\rho,K})$-complete, by
Corollary~\ref{lg:lb:cor:small}. Each cell lists the thresholds for \textbf{$\mathbf{T=1}$\,/\,$\mathbf{T=2}$\,/\,$\mathbf{T=3}$}, in
this order. All entries equal to $4$ come from Lemma~\ref{lg:lb:lem:parity}: the invariant
channels of the model are unchanged under $X\mapsto-X$, so it cannot tell a chiral cloud from its
mirror image; $4$ is the least possible value, since $\mathcal R_{\rho,K}$ contains only one-point
clouds when $K\le3$ (Remark~\ref{lg:lb:rem:smallK}). \textbf{Entries marked $4^{\dagger}$ were known
before this work}: with $T=1$ the invariant channels are functions of distances ($\numax=1$) or
three-body invariants ($\numax=2$), which are classically inversion invariant. \textbf{Entries marked
$4^{\ast}$ rest on the same known mechanism}, but the observation that two
iterations with $\numax=1$ still produce no pseudoscalar is stated here. \textbf{Entries in bold are
new}: they come from the two-ring family of Definition~\ref{lg:lb:def:rings} and
Theorem~\ref{lg:lb:thm:main}, apply to models that can produce pseudoscalars, and grow linearly in
$\ellmax$ and like $\numax^{T}$ in $\numax$.}
\label{tab:lb:thresholds}
\end{table}

\begin{remark}\label{lg:lb:rem:prior}
Lower bounds of this kind in the learning literature are for one fixed order at a time: the
counterexamples of \citet{Pozdnyakov_2022} exhibit pairs of clouds that a given body order cannot
tell apart. The rings add a bound that grows with the size of the neighbourhood, by the
mechanism of Theorem~\ref{thm:lower-quadratic} with the azimuthal frequency in the role of the
polynomial degree. Theorem~\ref{thm:maceLocal} spends an irreducible degree quadratic in $K$ on
one iteration; whether this can be brought down to linear is open.
\end{remark}

\section{Invariant Theory Background}\label{app:invariant_theory}
Suppose that $G$ is a group that acts a vector space $V$.  In Invariant Theory one studies polynomial functions on $V$ that are constant on orbits. The origins of Invariant Theory go back to the $19^{\rm th}$ century. Historically, one often works over $\C$, but many results have been extended to arbitrary fields, including fields with positive characteristic. For our applications we will work over $\R$. Suppose
 $V$ is an $n$-dimensional  $\R$-vector space.
 We denote the ring of all polynomial functions $V\to \R$ by $\R[V]$. If $x_1,x_2,\dots,x_n$ are linearly independent coordinate functions on $V$, then 
 we can identify $V$ with $\R^n$ and 
 $\R[V]$ is just the polynomial ring $\R[x_1,x_2,\dots,x_n]$.
 Suppose that a group $G$ acts on $V$.
 For $v\in V$, let $[v]=\{g\cdot v\mid g\in G\}$ be the $G$-orbit of $v$. A polynomial invariant is a polynomial function $V\to \R$ that is constant on orbits. The  invariant ring $\R[V]^G$ is the set of all polynomial invariants.

 Suppose that $G$ is a  Lie group and $\rho:G\to \GL(V)$ is an orthogonal representation, i.e., $\rho$ is a continuous homomorphism of groups. 
  Then $G$ acts on $V$ via $g\cdot v=\rho(g)v$ for all $v\in V$ and $g\in G$.
  If $G$ is compact, then one can choose a basis of $V$ such that $\rho(G)$ is contained in the orthogonal group $\OO(n)$.
 \begin{theorem}[\citealp{hilbert1890ueber}]\label{theo:Hilbert}
     If $G$ is compact then invariant ring $\R[V]^G$ is generated by finitely many invariants $f_1,f_2,\dots,f_r$. So every invariant polynomial is of the form $p(f_1,f_2,\dots,f_r)$ where $p\in \R[y_1,y_2,\dots,y_r]$ is a polynomial with coefficients in $\R$.
 \end{theorem}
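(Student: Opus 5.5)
The statement is Hilbert's finiteness theorem for a compact group acting by an orthogonal representation on $V\cong\R^n$. I would follow the classical route: a Reynolds operator, then Hilbert's basis theorem, then an induction on degree.

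\emph{Step 1: Reynolds operator.} Let $\mu$ be the normalized Haar measure on $G$. Define
\[
\mathcal R(f)(v)=\int_G f(g\cdot v)\,d\mu(g).
\]
Because $\rho(g)$ is linear, $f\circ\rho(g)$ is again a polynomial of the same degree. The integrand is a continuous function of $g$ with values in the finite-dimensional space $\R[V]_{\leq \deg f}$, so $\mathcal R(f)$ is a polynomial of degree $\leq\deg f$. It preserves homogeneous components, and invariance of Haar measure shows that $\mathcal R(f)$ is invariant. The key property is that $\mathcal R$ is $\R[V]^G$-linear: $\mathcal R(hf)=h\,\mathcal R(f)$ for invariant $h$, and $\mathcal R(h)=h$. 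This is immediate from the definition, since $h(g\cdot v)=h(v)$ can be pulled out of the integral.

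\emph{Step 2: Hilbert basis theorem.} Let $I\subseteq\R[V]$ be the ideal generated by all homogeneous invariants of positive degree. Since $\R[V]=\R[x_1,\dots,x_n]$ is Noetherian, $I$ is generated by finitely many of them, say $f_1,\dots,f_r$. Each $f_i$ is homogeneous of positive degree; this can be arranged by choosing the generators from the generating set itself.

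\emph{Step 3: Induction on degree.} Since $\R[V]^G$ is graded (because $\mathcal R$ preserves degrees), it suffices to show that every homogeneous invariant $h$ lies in $\R[f_1,\dots,f_r]$. I induct on $\deg h$; degree $0$ gives constants. If $\deg h>0$, then $h\in I$, so $h=\sum_i a_if_i$ with $a_i\in\R[V]$, and I may take each $a_i$ homogeneous of degree $\deg h-\deg f_i<\deg h$ by discarding the other homogeneous components. Applying $\mathcal R$ gives
\[
h=\mathcal R(h)=\sum_i\mathcal R(a_i)f_i,
\]
where each $\mathcal R(a_i)$ is a homogeneous invariant of smaller degree. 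By induction, $\mathcal R(a_i)\in\R[f_1,\dots,f_r]$, hence $h\in\R[f_1,\dots,f_r]$. This gives $\R[V]^G=\R[f_1,\dots,f_r]$, so every invariant is $p(f_1,\dots,f_r)$.

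\emph{Main obstacle.} The only nonformal point is Step 1: the existence of the averaging operator and its module property. This needs compactness (Haar probability measure) and continuity of $\rho$, so that the integral of a polynomial-valued continuous function lands in the finite-dimensional space of polynomials of bounded degree. I would justify this by working coefficientwise in a fixed monomial basis. The rest is standard Noetherian algebra.
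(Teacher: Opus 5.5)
Your proof is correct and complete. There is nothing in the paper to compare it with: the paper never proves this statement and only quotes it as background in the invariant-theory appendix, citing Hilbert.

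What you give is the standard argument. It combines the Hilbert basis theorem with an induction on degree, driven by a degree-preserving, $\R[V]^G$-linear projection $\mathcal R$ onto the invariants. Your $\mathcal R$ is realized by Haar averaging, as in the Hurwitz--Weyl treatment of compact groups, rather than by Cayley's $\Omega$-process used in Hilbert's original setting of $\mathrm{SL}_n$. All the steps check out:
\begin{itemize}
\item $\mathcal R(f)$ is a polynomial. Its coefficients are integrals of continuous functions of the matrix entries of $\rho(g)$.
\item Invariance of $\mathcal R(f)$ uses right-invariance of $\mu$. This holds automatically because compact groups are unimodular.
\item A finite set of generators of $I$ can be taken from the given homogeneous generating set. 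Any generating set of an ideal in a Noetherian ring contains a finite generating subset.
\item Passing to the degree-$\deg h$ components makes each $a_i$ homogeneous of degree strictly less than $\deg h$, since $\deg f_i>0$.
\end{itemize}

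The gradedness of $\R[V]^G$ can be stated more directly than through $\mathcal R$. Because the action is linear, each homogeneous component of an invariant is itself invariant.

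Your argument uses only compactness of $G$ and continuity of $\rho$; the orthogonality built into the paper's setup plays no role.
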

 If the group $G$ is not compact, then the invariant ring may not be finitely generated. The following well-known result can be found in  \cite[Ch. 3, §4, 3°, Thm. 3]{OnishchikVinberg1990}:
 \begin{theorem}\label{theo:InvariantsSeparate}
     Suppose that $G$ is compact, $u,v\in V$ and $[u]\neq [v]$. Then there exists an invariant $f\in \C[V]^G$ with $f(u)\neq f(v)$. If $f_1,f_2,\dots,f_r$ are the generators from Theorem~\ref{theo:Hilbert}, then $f_i(u)\neq f_i(v)$ for some $i$.
 \end{theorem}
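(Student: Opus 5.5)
We separate the two orbits by a continuous function, approximate it by a polynomial, and then average over $G$ to obtain an invariant polynomial. Since $G$ is compact and the action $g\mapsto\rho(g)$ is continuous, the orbits $[u]$ and $[v]$ are compact subsets of $V$. They are disjoint because $[u]\neq[v]$. Urysohn's lemma (or Tietze's extension theorem) therefore gives a continuous $\varphi:V\to\R$ with $\varphi\equiv 1$ on $[u]$ and $\varphi\equiv 0$ on $[v]$. Let $B$ be a closed ball containing $[u]\cup[v]$; we may take $B$ centred at $0$, so $B$ is $G$-stable because $\rho(G)\subseteq\OO(n)$. By the Stone--Weierstrass theorem there is a real polynomial $p\in\R[V]$ with $|p-\varphi|<1/4$ on $B$.

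Next, let $dg$ be the normalized Haar measure on $G$ and define the Reynolds average $f(x):=\int_G p(g\cdot x)\,dg$. We must check that $f$ is again a polynomial; this is the step that needs a little care. Let $D=\deg p$. Since $G$ acts linearly, the finite-dimensional space $\R[V]_{\leq D}$ is $G$-stable, and $g\mapsto p(g\cdot{}\,\cdot)$ is a continuous map $G\to\R[V]_{\leq D}$. Fix a basis $e_1,\dots,e_N$ of $\R[V]_{\leq D}$ and write $p(g\cdot x)=\sum_k c_k(g)e_k(x)$, where the coefficients $c_k$ are continuous on $G$. Then
$$f=\sum_k\Big(\int_G c_k(g)\,dg\Big)e_k\in\R[V]_{\leq D}.$$
Left invariance of Haar measure gives $f(h\cdot x)=f(x)$ for all $h\in G$, so $f\in\R[V]^G\subseteq\C[V]^G$.

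For separation, observe that every $g\cdot u\in[u]\subseteq B$, so $p(g\cdot u)>3/4$ for all $g$, and averaging gives $f(u)\geq 3/4$. In the same way $p(g\cdot v)<1/4$ for all $g$, so $f(v)\leq 1/4$. Hence $f(u)\neq f(v)$, which proves the first assertion.

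For the second assertion, Theorem~\ref{theo:Hilbert} writes $f=P(f_1,\dots,f_r)$ for some polynomial $P$. If $f_i(u)=f_i(v)$ for every $i$, then $f(u)=P(f_1(u),\dots,f_r(u))=P(f_1(v),\dots,f_r(v))=f(v)$, a contradiction. So some generator $f_i$ separates $u$ from $v$.

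The main obstacle is the polynomiality of the averaged function. It is handled by the finite-dimensionality of $\R[V]_{\leq D}$ together with the continuity of the action. Everything else is standard approximation theory.
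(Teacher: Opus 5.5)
Your proof is correct and complete. The paper gives no argument of its own for this statement; it only cites Onishchik--Vinberg. Your route is the standard textbook proof: separate the two disjoint compact orbits by a continuous function, approximate it by a polynomial on a ball via Stone--Weierstrass, and average over $G$ with Haar measure, correctly justifying polynomiality of the average through the finite-dimensional $G$-stable space $\R[V]_{\leq D}$. You then push the separation down to the Hilbert generators.

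One small nit: the step $f(h\cdot x)=\int_G p((gh)\cdot x)\,dg=f(x)$ uses right invariance of Haar measure, not left invariance. This is harmless because compact groups are unimodular. Also, the $G$-stability of the ball $B$ is never needed; only $[u]\cup[v]\subseteq B$ is used.
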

We next recollect the definition of completeness from the main text:
\begin{definition}
A set ${\mathcal Q}\subseteq \R[V]^G$ is called $(G,V)$-complete if
    $$
    \forall q\in {\mathcal Q}\ q(u)=q(v)\quad\Leftrightarrow\quad [u]=[v].$$
\end{definition}
Theorem~\ref{theo:InvariantsSeparate} states the sets $\C[V]^G$ and the set $\{f_1,f_2,\dots,f_r\}$ of generates are both $(G,V)$-complete.
 
If $G$ is not compact, then a $(G,V)$-complete set of invariants may not exist. Take for example the action of $\GL(V)$ on $V$ where $n\geq 1$. There are two orbits in $V$, namely $\{0\}$ and $V\setminus\{0\}$. Every polynomial invariant is constant on $V\setminus\{0\}$ and by continuity also constant on $V$. So no polynomial invariant can separate the two orbits. To take into account this situation, we recall the general notion of separating invariants in Invariant Theory:
\begin{definition} 
A set ${\mathcal Q}\subseteq \R[V]^G$ is called  separating if for all $u,v\in V$ we have
    $$\forall q\in {\mathcal Q}\ q(u)=q(v)\quad\Leftrightarrow\quad \forall f\in \R[V]^G\ f(u)=f(v).$$
\end{definition}
For compact $G$, the notion of completeness and separation are equivalent.

Tautologically, $\R[V]^G$ is separating, but it may not be finitely generated. However, we have:
 \begin{theorem}
     There exists a finite set of separating invariants.
 \end{theorem}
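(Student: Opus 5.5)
The plan is to use the standard Noetherianity argument on the doubled space $V\times V$. It works for an arbitrary group $G$ acting linearly on $V$, with no compactness or finite generation of $\R[V]^G$ needed.

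\emph{Step 1.} For each invariant $f\in\R[V]^G$ I will form the polynomial
$$\delta_f(u,v):=f(u)-f(v)\in\R[V\times V].$$
Let $I\subseteq\R[V\times V]$ be the ideal generated by the family $\{\delta_f: f\in\R[V]^G\}$.

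\emph{Step 2.} Since $\R[V\times V]$ is a polynomial ring in finitely many variables, the Hilbert basis theorem makes $I$ finitely generated. I will then use the elementary fact that if an ideal is generated by a set $S$ and is finitely generated, it is already generated by finitely many elements of $S$. The reason is that each of the finitely many generators is a finite combination of elements of $S$. This gives invariants $f_1,\dots,f_r\in\R[V]^G$ with $I=(\delta_{f_1},\dots,\delta_{f_r})$.

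\emph{Step 3.} I will check that $\mathcal Q=\{f_1,\dots,f_r\}$ is separating. The direction ``$\Leftarrow$'' in the definition is trivial, since $\mathcal Q\subseteq\R[V]^G$. For ``$\Rightarrow$'', suppose $f_i(u)=f_i(v)$ for all $i$, and let $f\in\R[V]^G$ be arbitrary. Since $\delta_f\in I$, we can write $\delta_f=\sum_i h_i\,\delta_{f_i}$ with $h_i\in\R[V\times V]$. Evaluating at $(u,v)$ gives
$$f(u)-f(v)=\sum_i h_i(u,v)\bigl(f_i(u)-f_i(v)\bigr)=0.$$

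There is no real obstacle here. The only point requiring care is Step 2: the finite generating set must be chosen inside $\{\delta_f\}$ itself, rather than as arbitrary elements of $I$, so that it corresponds to actual invariants. If one also wants bounded degrees, the same argument applied to the ideal generated by $\{\delta_f:\deg f\le D\}$ shows that the $f_i$ can be taken among invariants of degree at most $D$. However, the argument gives no control over how large $D$ must be, which is why the quantitative bounds of \Cref{thm:lower-quadratic} and \Cref{thm:dn2} require separate work.
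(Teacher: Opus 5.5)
Your argument is correct. It is the standard Noetherianity argument on $V\times V$: the ideal generated by the differences $f(u)-f(v)$, $f\in\R[V]^G$, is already generated by finitely many of them. This is the argument underlying the Derksen--Kemper result that the paper cites instead of giving its own proof, and since it uses only Hilbert's basis theorem and evaluation at real points, it works over $\R$ for any $G$, which is why the paper can say that algebraic closedness is not needed.
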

 A proof is given in~\cite[Theorem 2.4.8]{derksen2015computational} (the assumption that the base field is algebraically closed is not necessary there), or see~\cite[Theorem 12]{Kemper2009} for a more general statement. Note that if the set $\R[V]^G$ of all invariants is $(G,V)$-complete, then any set of separating invariants is $(G,V)$-complete. In particular, there exists a finite $(G,V)$-complete set of invariants in that case.

The separating degree $D(G,V)$ is the smallest nonnegative integer $d$ such that the space $K[V]^G_{\leq d}$ of invariants of degree $\leq d$ is a separating set for the action of $G$ on $V$. In this manuscript we work over $K=\RR$. Complexification is introduced below when discussing bounds on the degrees
of generators.

If any $(G,V)$-complete set of invariant polynomials exists, then $D(G,V)$ is also the smallest nonnegative integer for which  $K[V]^G_{\leq d}$ is $(G,V)$-complete. We have $D(G,V)\leq \beta(G,V)$ where $\beta(G,V)$ is the smallest nonnegative integer $d$ such that $K[V]^G$ is generated by $K[V]^G_{\leq d}$.
Upper bounds for $\beta(G,V)$ have been studied extensively. If $G$ is a compact Lie group and $\rho:G\to \GL(n,\R)$ is a real representation, then $\rho$ extends to a complex representation $\rho_{\C}:G_{\C}\to \GL(n,\C)$, where $G_{\C}$ is the complexification of $G$. Now $G_{\C}$ is a complex reductive group. We have  $\beta(G,\R^n)=\beta(G_{\C},\C^n)$ and upper bounds are discussed in~\cite[\S3.2, \S3.10, \S4.7]{derksen2015computational}. Emmy Noether proved the bound $\beta(G,V)\leq |G|$ for finite groups $G$ (\citealp{noether1916endlichkeitssatz}). For permutation actions one has upper bounds for $\beta(G,V)$ that often are sharper.
Suppose that $G$ acts on $V=\R^n$ by permuting the basis vectors $e_1,e_2,\dots,e_n$ of $V=\R^n$. 
In this case we have
\begin{equation}\label{eq:Gobel}
\beta(G,V)\leq \max\left\{n_1,n_2,\dots,n_r,{n_1\choose 2}+{n_2\choose 2}+\cdots+{n_r\choose 2}\right\}\leq \max\left\{n,{n\choose 2}\right\},
\end{equation}
where $n_1,n_2,\dots,n_r$ are the cardinalities of the orbits for the action of $G$ on $\{e_1,e_2,\dots,e_n\}$. 
This follows for example from G\"obel's algorithm (see~\citealp{gobel1995computing}). 

For weighted graphs with $n$ vertices we consider the action of the symmetric group $S_n$ on the space $\Symn\subseteq \R^{n\times n}$ of symmetric $n\times n$ matrices by conjugation.
Thi\'ery \citeyearpar{thiery2000algebraic} gave the following upper bound
$$
D(S_n,\Symn)\leq \beta(S_n,\Symn)\leq  {{n\choose 2}\choose 2}-\mu_n=\frac{n^4}{8}
+O(n^3),
$$
where $\mu_n=0$ if $n$ is even and $\mu_n=\lceil \frac{3}{4}(n-1)\rceil$ if $n$ is odd. This bound is slightly sharper than the one obtained from ~(\ref{eq:Gobel}) with $n_1={n\choose 2}$ and $n_2=n$. In this paper we  improve the upper bound  for $D(S_n,\Symn)$ to 
$$
D(S_n,\Symn)\leq \frac{5n^3}{2}-n^2-\frac{n}{2}=O(n^3)
$$
and also provide a lower bound proportional to $n^2$. 

\end{document}